\pdfoutput=1
\documentclass{article}
\usepackage{iclr2027_conference,times}

\usepackage[utf8]{inputenc}
\usepackage[T1]{fontenc}
\usepackage[hypertexnames=false]{hyperref}
\hypersetup{hidelinks}
\usepackage{url}
\usepackage{nicefrac}
\usepackage{microtype}
\usepackage{xcolor}
\usepackage{amsmath,amssymb,amsfonts,bm,amsthm}
\usepackage{graphicx}
\usepackage{booktabs}
\usepackage{longtable}
\usepackage{multirow}
\usepackage{algorithm}
\usepackage{algorithmic}
\usepackage{float}
\usepackage{tikz}

\usetikzlibrary{positioning,arrows.meta,fit,backgrounds}

\newcommand{\R}{\mathbb{R}}
\newcommand{\norm}[1]{\left\lVert #1 \right\rVert}
\newcommand{\ip}[2]{\left\langle #1,#2\right\rangle}
\newcommand{\sg}{\mathrm{stopgrad}}
\newcommand{\Tr}{\mathrm{Tr}}
\newcommand{\diag}{\mathrm{diag}}
\newcommand{\1}{\mathbf{1}}
\newcommand{\cP}{\mathcal{P}}
\newcommand{\cH}{\mathcal{H}}
\newcommand{\cF}{\mathcal{L}}
\newcommand{\cA}{\mathcal{A}}
\newcommand{\cR}{\mathcal{R}}

\theoremstyle{plain}
\newtheorem{theorem}{Theorem}[section]
\newtheorem{proposition}[theorem]{Proposition}
\newtheorem{lemma}[theorem]{Lemma}

\theoremstyle{remark}
\newtheorem{remark}[theorem]{Remark}

\title{Routing in Gradient Space:\\ Balanced Usage Is Not Expert Specialization}

\author{Yuchen Li$^{1}$, Mingyu Du$^{1,2}$, Zongqi Fan$^{1}$, Nguyen H. Tran$^{1,*}$, Ken-Tye Yong$^{1,*}$ \\ $^{1}$University of Sydney \quad $^{2}$University of New South Wales \\ $^{*}$Corresponding authors}

\iclrfinalcopy

\begin{document}
\maketitle
\lhead{Preprint. Under review.}

\begin{abstract}
Sparse expert models can distribute traffic evenly while still grouping incompatible training signals within the same experts. We study routing as a gradient-partitioning problem and introduce gradient-aligned routing (GAR), whose load-normalized router objective rewards grouping observations with aligned gradients. On five multi-task text-classification mixtures, we compare GAR with task-loss-only routing, gradient-combination and gradient-conflict methods, and load-balancing losses. With a fully trainable RoBERTa backbone and classification-head experts, GAR has the highest aggregate validation accuracy, $1.07$ percentage points above task-loss-only routing. With frozen DeBERTa and Qwen3-1.7B backbones and low-rank adapter experts, it again ranks first, $1.10$ points above task-loss-only routing, with better-balanced expert load and higher gradient-mass purity, the share of each expert's gradient-norm mass from its dominant task; the load-balancing losses flatten load further but leave this purity near its task-loss-only level. Top-1 routing, trainable full-parameter feed-forward network (FFN) experts, and a larger backbone also show positive aggregate gains. The results distinguish expert-load balance from gradient-based routing organization and indicate the predictive value of gradient-informed routing in multi-task text classification.
\end{abstract}

\section{Introduction}

Multi-task learning exposes shared parameters to heterogeneous optimization signals. When gradients conflict, improving one objective can impede another. Gradient-level methods address this problem by reweighting losses or modifying the shared update, as in gradient normalization (GradNorm), multiple-gradient descent algorithm (MGDA), and conflict-averse gradient descent (CAGrad) \citep{gradnorm_chen2018,mgda_sener2018,cagrad_liu2021}. Mixture-of-Experts (MoE) introduces another decision: which observations should contribute to the same expert. This makes the organization of gradient contributions an explicit routing problem.

Solving Token Gradient Conflict (STGC) \citep{stgc_yang2024}, our closest comparison, detects tokens whose gradients conflict with their current expert mean and penalizes those assignments. Its conflict term discourages current assignments without explicitly comparing gradient-association gains across alternative groupings. We study how to evaluate an entire assignment. We call an expert coherent when the gradient observations routed to it point in similar directions, so their sum is large relative to the load they bring (Section~\ref{sec:variational_formulation}). Moving probability mass between two experts changes the coherence of both and can improve a partition even when no observation triggers a conflict test.

\paragraph{A criterion for gradient partitions.}
We formulate routing as a soft partition of gradient observations, scoring positive and negative associations in a single functional. Dividing each expert's score by its assigned mass controls reward scaling with traffic. The resulting marginal assignment scores compare the net change in partition reward at the source and receiving experts. We call routing trained with this auxiliary objective gradient-aligned routing (GAR); the task loss still trains the model.

The criterion uses gradient observations in a common expert-parameter template and differentiable routing probabilities, accommodating both classification-head and FFN experts. We obtain the observations by summing corresponding expert gradients, detach them, and optimize only the router through the auxiliary objective. This reuses task-gradient computations and requires no second-order derivatives.

We derive the marginal assignment score and relate the objective to gradient-space clustering. With fixed observations, nonempty hard assignments at zero stabilizer recover k-means, while an orthogonal relaxation yields the spectral trace objective. An idealized analysis further characterizes how load normalization controls reward scaling and favors coherent gradient groups.

\paragraph{Balanced usage and predictive value.}
\label{sec:intro_balance_not_specialization_ref}
Load uniformity concerns the marginal distribution over experts, whereas association with task or gradient structure concerns a joint distribution. Independent uniform routing is balanced in the population but contains no task information. We call this joint structure expert specialization and measure it by gradient-mass purity (task dominance in expert-gradient norms) and task-label association; balance, specialization, and accuracy answer different questions (Appendix~\ref{app:experimental_protocol}), and we compare them by empirical Pareto dominance. Task-label association is a diagnostic: aligned observations from different tasks may usefully share experts.

\paragraph{Evidence across MoE settings.}
A task mixture jointly trains on several classification datasets; the five mixtures containing five to eight tasks are defined in Appendix~\ref{app:hyperparameters_reproducibility}, Table~\ref{tab:task_mixtures}. With fully trainable backbones, a seven-method RoBERTa comparison places low-rank adaptation (LoRA) experts in the classification head with sequence-level routing, and a DeBERTa comparison uses full-parameter feed-forward network (FFN) experts with token-level routing. GAR has the highest aggregate validation accuracy in both, and also under top-1 routing of the classification-head setting.

A second group freezes DeBERTa and Qwen3-1.7B backbones and trains LoRA experts in FFN blocks. Across the five mixtures, GAR achieves the highest aggregate validation accuracy and gradient-mass purity among seven methods. It simultaneously improves accuracy, purity, and utilization and reduces load variance relative to task-loss-only routing (Baseline) and CAGrad, Pareto-dominating both in the four-metric aggregate; gains also extend to Qwen3-8B, and DeBERTa single-task controls provide complementary adaptation checks. A fixed-configuration classification-head coefficient sweep and denominator ablation probe the auxiliary objective at a shorter training budget.

\paragraph{Contributions.}
\textbf{(1) A gradient-space routing objective.} We introduce a load-normalized partition criterion that scores signed gradient associations and captures assignment improvements beyond binary conflict detection.
\textbf{(2) Theory and first-order optimization.} We derive the marginal assignment score and the criterion's connections to k-means and spectral partitioning, and develop a detached auxiliary update that trains the router using ordinary task gradients.
\textbf{(3) Evidence across MoE architectures.} We report accuracy gains across trainable and frozen backbones, head and FFN experts, and top-1 and top-4 routing. Routing trajectories and coefficient ablations further characterize its effects on expert organization.

\section{Related Work}

Prior methods address heterogeneous optimization signals through loss weighting, gradient updates, or data allocation. Our focus is an auxiliary objective that scores the allocation itself using gradient observations.

\paragraph{Gradient-level conflict mitigation.}
GradNorm \citep{gradnorm_chen2018} balances task-gradient magnitudes, PCGrad \citep{pcgrad_yu2020} projects conflicting gradients, MGDA \citep{mgda_sener2018} seeks a common descent direction, and CAGrad \citep{cagrad_liu2021} balances average descent with per-task improvement. These methods modify the shared update; a gradient-informed router changes which observations contribute to each expert. We use CAGrad as a representative gradient-combination comparison.

\paragraph{MoE routing and assignment.}
Switch Transformers \citep{switch_transformer} and DeepSpeed-MoE \citep{deepspeed_moe_rajbhandari2022} use auxiliary objectives to encourage balanced expert utilization. Routing Transformers \citep{routing_transformers_roy2021}, Sinkhorn-style sparse attention \citep{sinkhorn_transformer_tay2020}, balanced assignment layers \citep{base_layers_lewis2021}, and expert-choice routing \citep{expert_choice_zhou2022} use representation structure or assignment constraints to organize sparse computation. We study an auxiliary objective based on current-model gradient associations to score the composition of expert assignments, complementing controls on their marginal loads.

\paragraph{STGC and gradient-informed routing.}
STGC \citep{stgc_yang2024} motivates conflict detection with a first-order loss expansion, compares token gradients with the current expert mean, and penalizes conflicting assignments. Its conflict term encourages departure from the current expert without explicitly scoring alternative groupings by their gradient-association gains.

Our continuous partition criterion compares marginal coherence and normalization changes at both source and receiving experts, including when no conflict is flagged. Its fixed-observation analysis characterizes these preferences; the experiments evaluate the complete criterion and detached router pathway.

STGC adds the adapted conflict-elimination loss to the task loss; an update-level load-balancing loss (LoadPen), a micro-batch Switch-style loss (SwitchAux), and STGC with the LoadPen term (STGC+Load) are further controls (Appendix~\ref{app:hyperparameters_reproducibility}).

\paragraph{Partition geometry and positioning.}
The normalized Gram objective connects to classical clustering and spectral partitioning \citep{dhillon2004kernelkm}. These standard identities characterize our proposal to train a parameterized router from current-model gradient observations. Gradient-mass purity, task--expert normalized mutual information (NMI), and adjusted Rand index (ARI) provide distinct diagnostics \citep{strehl2002nmi,hubert1985ari}; frozen LoRA-FFN and trainable classification-head settings test predictive value across model configurations.

\section{Variational Formulation of Gradient-Space Partitioning}
\label{sec:variational_formulation}
We formalize routing as a partitioning problem over gradient space. A single variational objective connects clustering and spectral partitioning under the stated idealized restrictions; our practical training loss uses its stabilized, zero-entropy-coefficient form. Proofs are in Appendices~\ref{app:variational_derivations} and~\ref{app:additional_derivations}.

\subsection{Assignment simplex, expert loads, and affinity}
Consider $M$ routed items, each associated with a gradient observation. An item is a unit of training data with one routing distribution and one gradient observation: a token, an example, or, in all our experiments, a same-task group of up to eight training examples with group-averaged routing probabilities (Section~\ref{sec:training_detached}). The router assigns each item to a distribution over $K$ experts, giving
\[
P := [p_{mk}] \in \R_+^{M\times K},
\qquad
\cP_{M,K}
:=
\left\{
P\in \R_+^{M\times K}
\;\middle|\;
P\1_K=\1_M
\right\},
\]
so each row lies on the probability simplex. The load of expert $k$ and its stabilized diagonal matrix are
\[
d_k(P):=\sum\nolimits_{m=1}^M p_{mk},
\qquad
D_\epsilon(P):=\diag \big(d_1(P)+\epsilon,\dots,d_K(P)+\epsilon\big),
\qquad \epsilon\geq0 .
\]
Let $W\in\R^{M\times M}$ be a symmetric affinity matrix, where $W_{ij}$ is large when routed items $i$ and $j$ induce similar gradient directions and small or negative when their gradients conflict; Section~\ref{Sec:Euc} instantiates it from gradients. Together, $P$ and $W$ determine how much similarity mass each expert collects,
$Q(P;W):=P^\top W P \in \R^{K\times K}$, whose diagonal entry $Q_{kk}$ is the affinity mass assigned to surrogate group $k$. Under the gradient Gram instantiation below, it measures the squared norm of an aggregate in a common parameter template, rather than the actual update to physical expert $k$.

\subsection{Load-normalized partition objective}
The unnormalized affinity $Q_{kk}$ grows with both gradient agreement and the number of assigned observations. Dividing by assigned mass separates the per-observation reward from this quadratic load scaling. Summing over experts gives the load-normalized association
\begin{equation}
\cA_\epsilon(P;W)
:=
\Tr\!\left(D_\epsilon(P)^{-1}P^\top W P\right)
=
\sum_{k=1}^K \frac{Q_{kk}(P;W)}{d_k(P)+\epsilon},
\label{eq:association_reward}
\end{equation}
which favors assignments where routed items sharing an expert have similar gradient observations, measured per unit load. Adding an entropic relaxation yields the variational objective
\begin{equation}
\cF_{\tau,\epsilon}(P;W)
:=
-\cA_\epsilon(P;W)
+
\tau \sum_{m=1}^M \sum_{k=1}^K p_{mk}\log p_{mk},
\qquad
P\in \cP_{M,K},
\label{eq:free_energy}
\end{equation}
where $\tau \ge 0$ controls entropic smoothing. We set $\tau=0$, omitting the
explicit entropy term while still using differentiable router probabilities as
the continuous surrogate through which the router receives gradients. Hard
assignments are analyzed separately by restricting $P$ to the nonempty
hard-assignment set; $\tau=0$ alone does not impose one-hot routing.

\subsection{Euclidean gradient instantiation} \label{Sec:Euc}
We instantiate $W$ using a common expert-parameter template. Each expert has selected parameters $\theta_e\in\R^d$ with the same names, shapes, and local ordering. We sum corresponding gradient entries across experts to obtain the observation of item $m$
\[
g_m:=\sum_{e=1}^{K}\nabla_{\theta_e}\ell_m\in\R^d,
\qquad \tilde g_m=\sg(g_m).
\]
Here $\ell_m$ is the mean task loss over the examples in item $m$, and $d$ is the selected parameter dimension of one expert. For LoRA, the $A$ and $B$ matrices occupy separate template slots. Equivalently, $g_m$ is the loss derivative with respect to a common additive perturbation $\Delta\in\R^d$ applied to every $\theta_e$, evaluated at $\Delta=0$; the experts remain independently parameterized.

The affinity is $W_{ij}:=\ip{\tilde g_i}{\tilde g_j}$, including cross-expert inner products in matching template coordinates. Different routing supports therefore need not yield zero affinity. The criterion forms $G_k:=\sum_m p_{mk}\tilde g_m$, a \emph{surrogate aggregate in the template space}. If routed unit $n$ (a token or an example) has output $y_n$ containing $\sum_e\pi_{ne}u_{ne}$, with gate $\pi_{ne}$ and expert output $u_{ne}$, the chain rule gives $\nabla_{\theta_e}\ell_m=\sum_{n\in m}(\partial u_{ne}/\partial\theta_e)^\top\pi_{ne}\nabla_{y_n}\ell_m$. Thus $\tilde g_m$ is evaluated at the current gates and expert Jacobians $\partial u_{ne}/\partial\theta_e$; detachment holds it fixed during the auxiliary update, and it is not the gradient another assignment would produce.

\begin{proposition}[Euclidean instantiation]
\label{prop:euclidean_instantiation}
Under the gradient Gram affinity $W$, the association reward in \eqref{eq:association_reward} reduces to
\begin{equation}
\cA_\epsilon(P;W)
=
\sum_{k=1}^K \frac{\norm{G_k}^2}{d_k(P)+\epsilon}.
\label{eq:euclidean_instantiation}
\end{equation}
\end{proposition}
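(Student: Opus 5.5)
The plan is to reduce everything to the diagonal of $Q(P;W)$. By the definition in \eqref{eq:association_reward}, $\cA_\epsilon(P;W)=\sum_k Q_{kk}(P;W)/(d_k(P)+\epsilon)$. Here $D_\epsilon(P)$ is diagonal, so the trace $\Tr(D_\epsilon^{-1}Q)$ only involves the diagonal entries of $Q$. The whole claim therefore comes down to the identity $Q_{kk}(P;W)=\norm{G_k}^2$ for each $k$.

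To prove it, write the detached observations as the columns of a matrix $\tilde\Gamma:=[\tilde g_1,\dots,\tilde g_M]\in\R^{d\times M}$. The gradient Gram affinity is then $W=\tilde\Gamma^\top\tilde\Gamma$. Substituting gives
\[
Q(P;W)=P^\top\tilde\Gamma^\top\tilde\Gamma P=(\tilde\Gamma P)^\top(\tilde\Gamma P).
\]
Column $k$ of $\tilde\Gamma P$ is $\sum_m p_{mk}\tilde g_m=G_k$. Hence $Q_{kl}=\ip{G_k}{G_l}$, and in particular $Q_{kk}=\norm{G_k}^2$.

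Equivalently, one can expand the sum directly:
\[
Q_{kk}=\sum_{i,j}p_{ik}W_{ij}p_{jk}=\sum_{i,j}p_{ik}p_{jk}\ip{\tilde g_i}{\tilde g_j}.
\]
Bilinearity of the inner product collapses this to $\ip{\sum_i p_{ik}\tilde g_i}{\sum_j p_{jk}\tilde g_j}$, which is again $\norm{G_k}^2$.

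Dividing by $d_k(P)+\epsilon$ and summing over $k$ then gives \eqref{eq:euclidean_instantiation}. There is no real obstacle; the only care needed is well-definedness. For $\epsilon>0$ every denominator is positive. For $\epsilon=0$ one assumes $d_k(P)>0$, and if $d_k(P)=0$ then $p_{mk}=0$ for all $m$, so $G_k=0$ and the term can be taken as $0$, consistently on both sides. Note also that the inner products are taken in the common template space, including cross-expert coordinates, exactly as in the definition of $W$.
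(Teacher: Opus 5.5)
Your proposal is correct and matches the paper's argument: the paper also reduces the trace to the diagonal of $Q(P;W)$ and uses $Q_{kk}=\sum_{i,j}p_{ik}p_{jk}\ip{\tilde g_i}{\tilde g_j}=\norm{G_k}^2$ by bilinearity, and your factorization $W=\tilde\Gamma^\top\tilde\Gamma$ is that same identity in matrix form. Your convention for an empty expert at $\epsilon=0$ is a harmless addition; the paper adopts the same zero-contribution convention only later, in its continuous mode-mass analysis.
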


\noindent This follows from $Q_{kk}=\sum_{i,j} p_{ik}p_{jk}\ip{\tilde g_i}{\tilde g_j}=\norm{G_k}^2$. The full Gram matrix retains its diagonal, so the criterion reflects gradient norms and self-association as well as signed agreement between distinct observations (Appendix~\ref{app:self_cross_association}). Aligned contributions increase $\|G_k\|^2$, while conflicting contributions can cancel within the surrogate aggregate.

\paragraph{Partition preference beyond conflict detection.}
Consider three fixed unit-norm observations $\tilde g_1,\tilde g_2,\tilde g_3$ and two nonempty hard-assignment clusters with $\epsilon=0$. Placing $\tilde g_i$ and $\tilde g_j$ together and the third observation alone gives $\cA_0=2+\ip{\tilde g_i}{\tilde g_j}$. Every partition has the same cluster-size profile and diagonal contribution $2$, so the three partitions differ only through the cross inner products $\ip{\tilde g_1}{\tilde g_2}$, $\ip{\tilde g_1}{\tilde g_3}$, and $\ip{\tilde g_2}{\tilde g_3}$. An STGC-style conflict test flags observation $m$ in cluster $C$ when
\[
\cos(\tilde g_m,\mu_C)<0,
\qquad
\mu_C:=\frac{1}{|C|}\sum_{j\in C}\tilde g_j .
\]
If all three cross inner products are positive, then $\ip{\tilde g_m}{\mu_C}=|C|^{-1}\big(1+\sum_{j\in C\setminus\{m\}}\ip{\tilde g_m}{\tilde g_j}\big)>0$ for every $m$ and every partition, so no observation is flagged. The objective still ranks the partitions: $\ip{\tilde g_1}{\tilde g_2}=0.9$ and $\ip{\tilde g_1}{\tilde g_3}=\ip{\tilde g_2}{\tilde g_3}=0.2$ give $\cA_0=2.9$ for $\{\{1,2\},\{3\}\}$ and $2.2$ for the other two partitions. This establishes a distinct preference at equal gradient norms and self-association.

\paragraph{Marginal assignment preference and soft routing.}
With observations fixed, the marginal reward is
$s_{mk}:=\partial\cA_\epsilon/\partial p_{mk}=2\ip{G_k}{\tilde g_m}/(d_k+\epsilon)-\norm{G_k}^2/(d_k+\epsilon)^2$.
A feasible transfer of mass $\delta$ from expert $a$ to $b$ changes the reward by $\delta(s_{mb}-s_{ma})+O(\delta^2)$. It compares compatibility and normalization changes at both experts, including when no conflict is flagged. This score evaluates surrogate partition reward at $\tau=0$. For $\tau>0$, every interior stationary point has Gibbs probabilities proportional to $\exp(s_{mk}/\tau)$ (Appendix~\ref{app:gibbs_form_proof}, Proposition~\ref{prop:gibbs_form}). If the scores converge with a unique maximizer as $\tau\to0$, its probability tends to $1$.

\section{Theoretical Analysis}
We analyze the variational objective from three angles---optimization geometry,
clustering structure, and routing dynamics. For the discrete geometric results,
we set $\tau=0$ and restrict the domain to nonempty hard assignments. All proofs are in
Appendices~\ref{app:additional_derivations} and~\ref{app:dynamics}.

\subsection{Alignment favors coherent observation clusters}

\begin{proposition}[Alignment maximizes within-expert coherence] \label{prop:pairwise}
Under hard assignments, minimizing $\cF_{0, \epsilon}(P;W)=-\cA_{\epsilon}(P; W)$ is equivalent to maximizing load-normalized within-cluster coherence of the gradient observations.
\end{proposition}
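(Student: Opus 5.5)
We restrict $P$ to nonempty hard assignments, i.e.\ $p_{mk}\in\{0,1\}$ with each row containing exactly one $1$ and every column nonempty. Let $C_k:=\{m: p_{mk}=1\}$ be the resulting clusters, so $d_k(P)=|C_k|=:n_k\ge 1$. With $\tau=0$ the entropy term disappears: every $p_{mk}\log p_{mk}$ equals $0$ under the convention $0\log 0=0$. Hence $\cF_{0,\epsilon}=-\cA_\epsilon$, and minimizing $\cF_{0,\epsilon}$ is the same as maximizing $\cA_\epsilon$ over the hard-assignment set. The rest of the argument rewrites $\cA_\epsilon$ as an explicit within-cluster coherence functional.

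First, by Proposition~\ref{prop:euclidean_instantiation}, or directly from $Q_{kk}=\sum_{i,j}p_{ik}p_{jk}W_{ij}$, we have $Q_{kk}=\sum_{i,j\in C_k}\ip{\tilde g_i}{\tilde g_j}=\norm{\sum_{m\in C_k}\tilde g_m}^2$. This gives
\[
\cA_\epsilon(P;W)=\sum_{k=1}^K\frac{1}{n_k+\epsilon}\sum_{i,j\in C_k}\ip{\tilde g_i}{\tilde g_j}.
\]
I will define the load-normalized within-cluster coherence as exactly this sum of pairwise inner products within each cluster, divided by that cluster's stabilized load. With this definition the equivalence is an identity rather than a bound, so the two problems have the same maximizers.

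To make the word ``coherence'' concrete, I will also give the centroid form. Write $\mu_k=n_k^{-1}\sum_{m\in C_k}\tilde g_m$. Then the $k$-th term equals $\frac{n_k^2}{n_k+\epsilon}\norm{\mu_k}^2$. Next, split the double sum into diagonal and off-diagonal parts:
\[
\sum_{m\in C_k}\norm{\tilde g_m}^2+2\sum_{i<j\in C_k}\ip{\tilde g_i}{\tilde g_j}.
\]
This shows that aligned pairs increase the objective and conflicting pairs decrease it. For $\epsilon=0$, use the identity $\sum_{m\in C_k}\norm{\tilde g_m-\mu_k}^2=\sum_{m\in C_k}\norm{\tilde g_m}^2-n_k\norm{\mu_k}^2$. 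Since the total $\sum_m\norm{\tilde g_m}^2$ does not depend on $P$, maximizing $\cA_0$ is the same as minimizing within-cluster dispersion. This is the k-means link mentioned in the introduction, and I will note it as a corollary.

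The main obstacle is definitional rather than computational: the statement leaves ``load-normalized within-cluster coherence'' informal. The argument therefore has to fix a definition that is natural and matches the objective exactly. For $\epsilon>0$, the diagonal self-association terms are weighted by $1/(n_k+\epsilon)$, which depends on $P$. As a result, the clean dispersion reformulation holds exactly only at $\epsilon=0$. I will state the general-$\epsilon$ equivalence in the pairwise-sum form, and restrict the dispersion/k-means reading to $\epsilon=0$.
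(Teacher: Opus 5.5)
Your proposal is correct and follows the paper's proof. Both arguments expand $\|G_k\|^2$ into the within-cluster sum of pairwise inner products $\sum_{i,j\in\mathcal{C}_k}\langle\tilde g_i,\tilde g_j\rangle$ and take the resulting load-normalized association as the definition of within-cluster coherence, so the equivalence is an identity. Your extra $\epsilon=0$ dispersion reformulation is what the paper proves separately as the k-means equivalence (Theorem~\ref{thm:kmeans_equiv}); your nonemptiness restriction is only needed there, since for $\epsilon>0$ an empty cluster contributes zero to both sides.
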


\noindent Pairwise inner products reward compatible gradient observations and penalize conflicting ones within each surrogate aggregate $G_k$ defined in Section~\ref{Sec:Euc}.

\subsection{Hard-routing and spectral cases}

\begin{proposition}[k-means equivalence; Thm.~\ref{thm:kmeans_equiv}]\label{prop:kmeans_equiv_informal}
Under Euclidean gradient-Gram affinity, for nonempty hard assignments, minimizing $\cF_{0, 0}(P;W)$ is equivalent to minimizing the k-means objective over gradient vectors.
\end{proposition}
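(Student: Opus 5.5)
The plan is to reduce $\cF_{0,0}$ on nonempty hard assignments to a closed form via Proposition~\ref{prop:euclidean_instantiation}, and then invoke the standard decomposition of total scatter into within- and between-cluster parts. Let $P$ be a hard assignment, so each row is one-hot, with clusters $C_k:=\{m: p_{mk}=1\}$. Every $C_k$ is nonempty, so $d_k(P)=|C_k|\ge 1$ and the denominators are well defined even though $\epsilon=0$. The surrogate aggregate is then $G_k=\sum_{m\in C_k}\tilde g_m=|C_k|\,\mu_k$, where $\mu_k$ is the cluster centroid. Substituting into \eqref{eq:euclidean_instantiation} gives
\[
\cA_0(P;W)=\sum_{k=1}^K \frac{\norm{G_k}^2}{|C_k|}=\sum_{k=1}^K |C_k|\,\norm{\mu_k}^2 .
\]

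The key step is the per-cluster identity $\sum_{m\in C_k}\norm{\tilde g_m-\mu_k}^2=\sum_{m\in C_k}\norm{\tilde g_m}^2-|C_k|\norm{\mu_k}^2$. It follows by expanding the square and using $\sum_{m\in C_k}\tilde g_m=|C_k|\mu_k$. Summing over $k$, and using that the clusters partition $\{1,\dots,M\}$, yields
\[
J_{\mathrm{km}}(P):=\sum_{k=1}^K\sum_{m\in C_k}\norm{\tilde g_m-\mu_k}^2=\sum_{m=1}^M\norm{\tilde g_m}^2-\cA_0(P;W)=\sum_{m=1}^M\norm{\tilde g_m}^2+\cF_{0,0}(P;W).
\]
The first term does not depend on $P$. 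Hence $\cF_{0,0}$ and the k-means objective differ by a constant on the set of nonempty hard assignments, and they share the same minimizers and the same ordering of partitions.

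Two points need care. The first is the domain: k-means is usually stated as a minimization over both partitions and centers. I would note that for a fixed partition the optimal centers are exactly the centroids $\mu_k$, so the objective minimized over centers equals $J_{\mathrm{km}}(P)$. The equivalence then holds over partitions with nonempty clusters, matching the hypothesis. The second is the constant: the self-association diagonal of $W$ is retained, which is exactly what produces the constant $\sum_m\norm{\tilde g_m}^2$. I would remark that this is why $\epsilon=0$ and nonemptiness are needed. With $\epsilon>0$ the identity becomes $\norm{G_k}^2/(|C_k|+\epsilon)$ and is no longer an exact centroid term. With an empty cluster, the term $0/0$ is undefined.

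I expect the main obstacle to be only this bookkeeping about the restricted domain and center optimization, since the algebra itself is routine.
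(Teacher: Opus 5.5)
Your proposal is correct and follows the paper's own proof. Both expand $\norm{\tilde g_m-\mu_k}^2$ within each cluster, note that $\sum_m\norm{\tilde g_m}^2$ does not depend on the partition, and identify the remaining term $\sum_k\norm{G_k}^2/|\mathcal{C}_k|$ with $-\cF_{0,0}(P;W)$ under nonempty hard routing; your extra remarks on optimizing over centers and on why $\epsilon=0$ and nonemptiness are needed fit the paper's stated conventions.
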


\begin{proposition}[Spectral relaxation; Thm.~\ref{thm:spectral_relaxation}]\label{prop:spectral_relaxation_informal}
For nonempty hard assignments, minimizing $\cF_{0, 0}(P;W)$ corresponds to a ratio-association objective whose standard orthogonal relaxation is
$\max_{Y^\top Y = I_K} \mathrm{Tr}(Y^\top W Y)$,
with solution given by the top-$K$ eigenvectors of $W$.
\end{proposition}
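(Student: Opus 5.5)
The argument proceeds in three steps: rewrite the objective in terms of a normalized indicator matrix, observe that this matrix has orthonormal columns, and then relax and apply Ky Fan's theorem. Fix a nonempty hard assignment $P\in\{0,1\}^{M\times K}\cap\cP_{M,K}$ with clusters $C_1,\dots,C_K$ and loads $d_k=|C_k|\ge1$. With $\epsilon=0$ and $\tau=0$, \eqref{eq:association_reward} gives
\[
\cF_{0,0}(P;W)=-\sum_{k=1}^K \frac{\1_{C_k}^\top W\1_{C_k}}{|C_k|},
\]
where $\1_{C_k}$ denotes the $k$th column of $P$. This is exactly the negative of the ratio-association objective. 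Minimizing $\cF_{0,0}$ over nonempty hard assignments is therefore the same as maximizing ratio association. No Euclidean structure is needed for this step; only the symmetry of $W$ is used.

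Next, introduce the normalized indicator matrix $Y:=PD_0(P)^{-1/2}$, whose $k$th column is $\1_{C_k}/\sqrt{|C_k|}$. Because the clusters are disjoint and each row of $P$ has a single one, $P^\top P=D_0(P)$. Hence
\[
Y^\top Y=D_0^{-1/2}P^\top P\,D_0^{-1/2}=I_K.
\]
By cyclicity of the trace,
\[
\Tr(Y^\top WY)=\Tr\bigl(D_0^{-1}P^\top WP\bigr)=\cA_0(P;W).
\]
The hard-assignment problem is thus $\max \Tr(Y^\top WY)$ over the discrete set of matrices $Y$ of this form. Nonemptiness is what makes $D_0^{-1/2}$ well defined. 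Each such $Y$ lies in the Stiefel set $\{Y\in\R^{M\times K}:Y^\top Y=I_K\}$. Dropping the requirement that the columns be scaled indicators gives the standard orthogonal relaxation $\max_{Y^\top Y=I_K}\Tr(Y^\top WY)$, which upper-bounds the discrete optimum.

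Finally, the relaxed problem is solved by the Ky Fan maximum principle. Since $W$ is symmetric, write $W=U\Lambda U^\top$ with eigenvalues $\lambda_1\ge\dots\ge\lambda_M$. For any $Y$ with orthonormal columns, set $Z=U^\top Y$; then $\Tr(Y^\top WY)=\sum_i\lambda_i h_i$ with $h_i=\|Z_{i,:}\|^2\in[0,1]$ and $\sum_i h_i=K$. A linear function maximized over this capped simplex is at most $\sum_{i\le K}\lambda_i$. Equality holds when $Y$ spans the top-$K$ eigenspace, for instance when $Y=U_{:,1:K}$, and the optimum is unique up to right multiplication by an orthogonal $K\times K$ matrix. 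This step needs $K\le M$, which already follows from the nonemptiness requirement.

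The main obstacle is stating precisely what ``corresponds to'' means. The relaxation only bounds and approximates the discrete problem: its solution is a subspace, not a partition, and it is not equivalent to the hard problem. I would state the upper-bound inequality explicitly and remark that recovering a partition requires a rounding step, such as k-means on the rows of $Y$. Ties among eigenvalues at position $K$ mean the solution is any basis of a top-$K$ invariant subspace, so uniqueness must be stated only modulo that freedom.
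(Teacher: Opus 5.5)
Your proposal is correct and follows the paper's proof essentially step for step. Like the paper, you rewrite $-\cF_{0,0}$ as the ratio-association objective, set $Y=HS^{-1/2}$ with $S=H^\top H=\mathrm{diag}(|\mathcal{C}_1|,\dots,|\mathcal{C}_K|)$ so that $Y^\top Y=I_K$ and $-\cF_{0,0}=\Tr(Y^\top WY)$, drop the discreteness constraint, and apply Ky Fan's maximum principle. The differences are minor: you start from the trace form \eqref{eq:association_reward} for a general symmetric $W$ rather than expanding $\|\sum_{m\in\mathcal{C}_k}\tilde g_m\|^2$, you prove Ky Fan via the capped-simplex argument rather than citing it, and you add two accurate caveats. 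These are that the relaxation only upper-bounds the discrete optimum, and that it determines the solution only up to the top-$K$ eigenspace (uniquely only when $\lambda_K>\lambda_{K+1}$).
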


\noindent For fixed observations, nonempty hard assignments, and $\epsilon=0$, these identities characterize the geometry of the partition objective. Our method trains a parameterized router with its differentiable, stabilized form alongside the task loss.

\subsection{Routing dynamics and alignment geometry}
\label{sec:routing_dynamics_geometry}
Appendix~\ref{app:dynamics} analyzes load scaling and assignment geometry under a symmetric prototype model. First, \emph{load-normalized anti-amplification} (Proposition~\ref{prop:case1_antiamplification}) converts the quadratic expected aggregate norm into asymptotically linear utility under independent finite-window sampling, with a finite limiting expected marginal score. Second, \emph{static directional tilt and coherence preference} (Proposition~\ref{prop:formal_self_reinforcing_specialization}) shows that enriching a fixed mixture toward one prototype increases its directional alignment and expected coherence relative to uniform mixing. Third, at $\epsilon=0$, a mode-separating perturbation of continuous assignments strictly improves the leading-order reward at a collapsed assignment (Theorem~\ref{thm:vertex_unstable}). These results characterize the load and coherence preferences of the auxiliary criterion; Section~\ref{sec:empirical_summary} evaluates the jointly trained model.
\section{Method: Gradient-Aligned Routing (GAR)}
\label{sec:methodology_implementation}

The implementation separates the standard task computation from a detached
alignment pathway whose auxiliary loss has no direct gradient to non-router
parameters.

\subsection{Implementation Overview}

Figure~\ref{fig:method_overview_compact} shows the ordinary task-loss channel
and the detached alignment channel, whose auxiliary gradient is directed to
the router. The latter implements a
stochastic first-order surrogate of \eqref{eq:free_energy} under the Euclidean
gradient instantiation. Appendix~\ref{app:training_flow} gives the full training flow.

\begin{figure}[!ht]
\centering
\resizebox{\linewidth}{!}{%
\begin{tikzpicture}[
box/.style={draw, rounded corners, align=center, minimum width=25mm, minimum height=6.5mm, font=\small},
branch/.style={draw, rounded corners, align=center, minimum width=31mm, minimum height=8mm, font=\small},
arrow/.style={-Stealth, thick},
aux/.style={-Stealth, thick, dashed}]
\node[box] (batch) at (0,0) {Batch};
\node[box] (router) at (3.5,0) {Router probabilities};
\node[box] (experts) at (7.0,0) {Expert dispatch};
\node[box] (task) at (10.5,0) {Task loss};
\node[branch] (align) at (3.5,-1.15) {$\mathcal L_{\mathrm{norm}}(p,\tilde g)$\\Direct gradient: router};
\node[branch] (grad) at (7.5,-1.15) {Detached template gradients\\$\tilde g_m=\sg(\sum_e\nabla_{\theta_e}\ell_m)$};
\draw[arrow] (batch) -- (router);
\draw[arrow] (router) -- (experts);
\draw[arrow] (experts) -- (task);
\draw[aux] (task.south) |- (grad.east);
\draw[aux] (grad.west) -- (align.east);
\draw[aux] (router.south) -- node[right,font=\scriptsize] {group mean $p_m$} (align.north);
\end{tikzpicture}%
}
\caption{Shared task pathway (solid) and added alignment pathway (dashed). Expert dispatch uses top-4 routing over eight experts. Detached group gradients and recomputed group probabilities form an auxiliary loss with no direct gradient to non-router parameters; task loss trains the predictive model. All resulting gradients subsequently undergo shared global clipping.}
\label{fig:method_overview_compact}
\end{figure}

We use $\tau=0$ with differentiable router probabilities and no explicit
entropy term. The assignment score in Section~\ref{sec:variational_formulation}
accounts for both gradient compatibility and changes in assigned mass.

\subsection{Training with First-Order Detached Alignment}
\label{sec:training_detached}

Within each setting, the compared methods share the routed forward computation
and supervised task losses, which weight tasks equally and examples equally
within each task (Appendix~\ref{appsubsec:training_semantics}).
Each control applies its stated auxiliary objective or gradient-update rule.
The criterion takes paired observations $(\tilde g_m,p_m)$: a detached gradient
vector and a routing-probability vector for the same observation unit. Its
definition accommodates a specified token, example, or group granularity
(Appendix~\ref{app:route_aligned_observations}).
Our experiments use same-task micro-batches, reusing the gradients computed
during task-gradient accumulation. Thus $m$ indexes one example group within
the update. Let
$\ell_m$ be the ordinary loss averaged over that chunk. For LoRA experts, the
gradient observation $g_m\in\R^d$ sums group-loss gradients across experts
in the common adapter-parameter template defined in
Section~\ref{Sec:Euc}. The
observations are obtained within the same ordinary training batch and from the
same task loss used by Baseline. Let $q_n$ be the router summary for
example $n$. The token-routed FFN settings use top-$k$ masked softmax;
the top-1 extension uses a hard one-hot forward gate with a full-softmax
straight-through derivative (Appendix~\ref{appsubsec:top1_roberta_protocol}).
In both cases, $q_n$ averages the configured gates over non-padding tokens.
For sequence-routed classification-head experts,
$q_n$ is the configured per-example router probability vector. The auxiliary
group probability is
\[
p_m=\frac{1}{|m|}\sum_{n\in m}q_n.
\]
The group mean is used directly, so $\sum_k p_{mk}=1$ and its support is the
union of the contributing routes. Token- or example-level top-$k$ gating
and group-level gradient observations therefore have distinct roles. Details are in
Appendix~\ref{app:hyperparameters_reproducibility}.

Differentiating through $g_m$ would introduce second-order terms. We detach
$\tilde{g}_m=\text{stopgrad}(g_m)$ and recompute the auxiliary router summaries
from detached routing inputs, as detailed in
Appendix~\ref{app:hyperparameters_reproducibility}. The first operation avoids
higher-order derivatives; the second restricts this auxiliary pathway to router
parameters. Each auxiliary update treats the current-model observations as
fixed. The alignment regularizer is
\[
\mathcal{L}_{\text{norm}}
=
-\sum_{k=1}^{K}
\frac{\left\|\sum_{m=1}^{M}p_{mk}\tilde g_m\right\|^2}{d_k(P)+\epsilon},
\qquad
d_k(P)=\sum_{m=1}^{M}p_{mk}.
\]
This objective favors assignments whose current-model gradient observations are coherent within each surrogate aggregate.

The final update uses one optimizer step. Before clipping, all trainable
non-router parameters receive gradients only from $\mathcal{L}_{\text{task}}$,
while router parameters receive gradients from the combined objective
$\mathcal{L}_{\text{task}}+\lambda\mathcal{L}_{\text{norm}}$, where
$\lambda\equiv\lambda_{\mathrm{align}}$ is the alignment coefficient reported
in the configuration tables (see
Algorithm~\ref{alg:training}). The resulting gradients are then globally
clipped over all trainable parameters before the optimizer step. This produces
a first-order, Hessian-free alignment signal for the router.

Algorithm~\ref{alg:training} and Figure~\ref{fig:training_flow} in
Appendix~\ref{app:training_flow} state and illustrate the resulting
training step. At the objective level, setting $\lambda=0$ recovers Baseline
under the shared architecture, task-batch construction, routed forward
computation, and update semantics. The comparator configurations and
the RoBERTa classification-head and LoRA-FFN coefficient-ablation protocols are specified together in
Appendix~\ref{app:hyperparameters_reproducibility}.

\paragraph{Computational cost.}
Detachment avoids second-order differentiation, while the alignment pathway
adds group-gradient observations and router-summary recomputation. In
DeBERTa measurements, GAR takes $1.04\times$ Baseline's mean step time on the
frozen LoRA-FFN five-task mixture (five seeds, 20 gradient observations per
update) and $1.12\times$ and $1.03\times$ in single-task MRPC frozen-LoRA and
full-FFN runs (Appendix~\ref{appsubsec:runtime_overhead_deberta_single},
Table~\ref{tab:runtime_overhead_deberta_single}).

\section{Empirical Results}
\label{sec:empirical_summary}

\subsection{Protocol and reporting scope}

We compare Baseline, CAGrad, GAR, STGC, LoadPen (load-balancing loss), SwitchAux (Switch auxiliary loss), and STGC+Load on five task mixtures with five seeds, eight experts, and top-4 routing (E8K4). The mixtures contain five tasks, six tasks, two distinct sets of seven tasks, and eight tasks. Table~\ref{tab:task_mixtures} in Appendix~\ref{app:hyperparameters_reproducibility} lists the datasets; the two seven-task mixtures are distinguished by their PAWS or MRPC task. Each mixture is trained jointly with one optimizer configuration across its tasks. Compared methods share data, expert topology, update budget, and evaluation protocol.

\paragraph{Trainable-backbone settings.}
A seven-method RoBERTa evaluation uses a fully trainable backbone, classification-head LoRA experts over a shared base head, and sequence-level routing on pooled representations. This setting records ten-checkpoint routing trajectories, including structure purity from the selection counts used for NMI/ARI (Appendices~\ref{appsubsec:implb_protocol} and~\ref{appsubsec:implb_mechanism_trajectories}); a top-1 variant keeps the same router and experts but dispatches each example to one expert through a straight-through gate (Appendix~\ref{appsubsec:top1_head_protocol}). A second setting trains DeBERTa fully with full-parameter FFN experts and token-level routing (Appendix~\ref{appsubsec:implc_protocol}). DeBERTa single-task controls (four experts, top-2 routing; E4K2) compare frozen-backbone LoRA-FFN experts with full-parameter FFN experts at the same sites in an unfrozen backbone.

\paragraph{Frozen LoRA-FFN settings.}
The seven methods are also compared on frozen DeBERTa and Qwen3-1.7B with final-layer LoRA-FFN experts and token-level routing, with a three-method Qwen3-8B extension. Frozen RoBERTa-base has limited downstream accuracy under the shared configuration; it is reported separately and hosts a top-1 extension and a coefficient sweep, with results in Appendices~\ref{appsubsec:roberta_multitask_tables}, \ref{appsubsec:top1_roberta_results}, and~\ref{appsubsec:lambda_ablation_results}.

\paragraph{Evaluation and metrics.}
Evaluation splits and selected configurations are in Appendix~\ref{app:hyperparameters_reproducibility}. Equal-task macro validation accuracy weights tasks equally regardless of validation-set size. Load variance (LVar) measures marginal imbalance; utilization is the fraction of experts whose load reaches at least half the uniform-load level; gradient-mass purity measures per-expert concentration of task-gradient norms. NMI/ARI count expert selections per non-padding token for FFN routing and per example for classification-head routing. FFN loads summarize probability mass, while classification-head loads summarize selection frequency; their absolute values are interpreted within each setting. Appendix~\ref{app:experimental_protocol} gives the formula and aggregations.

\subsection{Joint performance across backbones and adaptation settings}
\label{subsec:conflict_heavy}

Table~\ref{tab:main_endpoint_controls}(a) reports five-mixture mean accuracy and the paired GAR--Baseline gain in each setting; panel (b) gives seven-method accuracy and routing statistics for the frozen two-backbone mean. The top-4 frozen LoRA-FFN and classification-head comparisons include all seven methods. The top-1 extensions, trainable DeBERTa-FFN, Qwen3-8B, and single-task controls compare Baseline, CAGrad, and GAR.

\begin{table}[!t]
\centering
\begingroup
\centering
\caption{Endpoint comparisons over the five mixtures in Table~\ref{tab:task_mixtures} (five seeds). (a) Mean accuracy and paired GAR$-$Baseline gains ($\Delta_{\text{Base}}$, pp; 95\% Student-$t$ intervals across seeds). $\dagger$: trainable backbone; RoBERTa-head uses sequence-routed classification-head LoRA experts (three of seven methods shown); Head, top-1 uses straight-through one-hot gating (Appendix~\ref{appsubsec:top1_head_results}); DeBERTa-FFN uses token-routed full-parameter FFN experts. Frozen-LoRA averages frozen DeBERTa and Qwen3-1.7B; Qwen3-8B is a frozen model-scale extension. $\ddagger$: DeBERTa single-task E4K2 controls average [CoLA, MRPC, RTE, SST-2, QQP], weighting frozen-LoRA and trainable-FFN adaptations equally. (b) Seven-method frozen LoRA-FFN two-backbone means. Bold marks the best accuracy in (a) and best metric values in (b). Full details are in Appendix~\ref{app:full_empirical_results}.}
\label{tab:main_endpoint_controls}
\medskip
\setlength{\tabcolsep}{2.4pt}
\begin{tabular}{@{}lrrrrrr@{}}
\toprule
(a) Method & \shortstack{RoBERTa-\\head$^{\dagger}$} & \shortstack{Head,\\top-1$^{\dagger}$} & \shortstack{DeBERTa-\\FFN$^{\dagger}$} & \shortstack{Frozen-\\LoRA} & Qwen3-8B & \shortstack{Single-\\task$^{\ddagger}$} \\
\midrule
Baseline & 0.7902 & 0.8039 & 0.8478 & 0.7593 & 0.7328 & 0.8820 \\
CAGrad & 0.7824 & 0.8066 & 0.8507 & 0.7625 & 0.7401 & 0.8799 \\
GAR & \textbf{0.8008} & \textbf{0.8145} & \textbf{0.8594} & \textbf{0.7702} & \textbf{0.7452} & \textbf{0.8913} \\
\midrule
$\Delta_{\text{Base}}$ (pp) & $+1.065$ & $+1.056$ & $+1.155$ & $+1.095$ & $+1.239$ & $+0.929$ \\
95\% CI & $[0.697{,}1.433]$ & $[0.831{,}1.282]$ & $[0.730{,}1.579]$ & $[0.661{,}1.529]$ & $[0.514{,}1.965]$ & $[0.207{,}1.651]$ \\
\bottomrule
\end{tabular}
\par\medskip
\setlength{\tabcolsep}{3pt}
\begin{tabular}{@{}lrrrrrrr@{}}
\toprule
(b) Metric & Baseline & CAGrad & \textbf{GAR} & STGC & LoadPen & SwitchAux & STGC+Load \\
\midrule
Acc & 0.7593 & 0.7625 & \textbf{0.7702} & 0.7438 & 0.7556 & 0.7573 & 0.7412 \\
LVar & 0.03118 & 0.02994 & 0.01542 & \textbf{0.00101} & 0.01032 & 0.00846 & 0.00139 \\
Purity & 0.4368 & 0.4446 & \textbf{0.5645} & 0.4200 & 0.4427 & 0.4424 & 0.4071 \\
Util. & 0.5200 & 0.5250 & 0.6225 & \textbf{0.9725} & 0.7300 & 0.7525 & 0.9525 \\
\bottomrule
\end{tabular}
\endgroup

\end{table}

\paragraph{Trainable backbones.}
In classification-head RoBERTa, GAR has the highest five-mixture mean accuracy among seven methods and gains $+1.07$ points over Baseline and $+1.84$ over CAGrad (Table~\ref{tab:appendix_implb_paired}). GAR leads accuracy, structure purity, and NMI, while STGC reaches near-uniform load. On accuracy, LVar, gradient-mass purity, structure purity, and utilization, GAR and CAGrad are both non-dominated: CAGrad has higher mean gradient-mass purity, while GAR improves the other four means. Trainable DeBERTa full-parameter FFN gives $+1.155$ points over Baseline and $+0.873$ points over CAGrad, with lower LVar and higher utilization than both (Table~\ref{tab:appendix_implc_paired}). With top-1 routing in the classification-head setting, GAR gains $+1.06$ $[+0.83,+1.28]$ points over Baseline and $+0.79$ $[+0.30,+1.27]$ over CAGrad and leads all five mixtures; task-loss-only routing concentrates on a single expert in 17 of 25 runs, against 1 of 25 for GAR, which has lower LVar and higher gradient-mass purity and utilization than both comparators (Appendix~\ref{appsubsec:top1_head_results}).

GAR and Baseline select the same learning rate in the five-, six-, and both
seven-task classification-head mixtures; clipping and weight decay come from
Baseline's search in every setting. In trainable
DeBERTa-FFN, the five- and six-task mixtures also share the learning rate and
yield gains of $+0.65$ and $+1.45$ points. These matched-configuration comparisons complement the
five-mixture summaries (Appendix~\ref{app:hyperparameters_reproducibility}).

DeBERTa single-task controls yield GAR$-$Baseline gains of $+1.13$ points with LoRA and $+0.73$ with FFN over five tasks. Averaged over both adaptations, GAR gains $+0.93$ $[+0.21,+1.65]$ points over Baseline and $+1.13$ $[+0.42,+1.85]$ over CAGrad (Appendix~\ref{appsubsec:single_task_paired_uncertainty}).

\paragraph{Frozen LoRA-FFN backbones.}
In the frozen two-backbone aggregate, GAR combines the highest accuracy and gradient-mass purity among seven methods with lower LVar and higher utilization than Baseline and CAGrad (Table~\ref{tab:main_endpoint_controls}b). It also has the highest five-mixture mean accuracy on both backbones. Its paired accuracy gains are $+1.10$ $[+0.66,+1.53]$ percentage points over Baseline and $+0.78$ $[+0.27,+1.29]$ over CAGrad (Table~\ref{tab:paired_supervised_uncertainty}). Per-backbone GAR--Baseline intervals are positive on both backbones (Appendix~\ref{appsubsec:paired_supervised_uncertainty}).

Across the 10 frozen backbone--mixture combinations, GAR has higher mean accuracy than Baseline in all 10, lower LVar in 8, and higher utilization in 7. Appendix~\ref{appsubsec:task_grouped_multitask_results} gives the per-combination values.

STGC+Load is tuned for each setting and mixture over model-training learning rate, $\beta_{\mathrm{STGC}}$, and $\lambda_{\mathrm{load}}$ under the common search procedure; its results appear in Table~\ref{tab:main_endpoint_controls}(b), Figure~\ref{fig:appendix_implb_mechanism_main}, and the appendix tables.

\paragraph{Empirical Pareto comparison.}
Using accuracy, gradient-mass purity, and utilization as increasing axes and LVar as a decreasing axis, GAR Pareto-dominates Baseline and CAGrad in the panel-(b) frozen aggregate: all four reported means improve simultaneously. The non-dominated methods are GAR, STGC, LoadPen, and SwitchAux. GAR leads accuracy and gradient-mass purity, while the other three favor load uniformity and utilization.

\paragraph{Training-process evidence.}
Across ten checkpoints on the five classification-head mixtures, GAR has lower LVar and higher utilization, NMI, and structure purity than Baseline and CAGrad (Figures~\ref{fig:appendix_implb_mechanism_main}--\ref{fig:appendix_implb_mechanism_extra}). Its paired endpoint intra-expert coherence gain over Baseline is $+0.0315$ $[+0.0019,+0.0611]$ under the all-task-pair definition in Appendix~\ref{app:experimental_protocol} (Table~\ref{tab:appendix_implb_paired}).

\subsection{Observed gains across task mixtures}

\begin{figure}[!ht]
\centering
\begin{minipage}[t]{0.49\linewidth}
\vspace{0pt}
\begingroup
\scriptsize
\centering
\resizebox{\linewidth}{!}{%
\definecolor{barbase}{HTML}{666666}
\definecolor{barcagrad}{HTML}{0072B2}
\definecolor{barours}{HTML}{D55E00}
\definecolor{barstgc}{HTML}{009E73}
\definecolor{barload}{HTML}{CC79A7}
\definecolor{barswitch}{HTML}{E69F00}
\definecolor{barstgcload}{HTML}{56B4E9}
\begin{tikzpicture}[x=1cm,y=1cm,font=\scriptsize]
\def\panelW{5.20}\def\panelH{2.17}
\def\ymin{-7.50}\def\ymax{3.50}
\newcommand{\gy}[1]{\pgfmathsetmacro{\yy}{(#1-\ymin)/(\ymax-\ymin)*\panelH}}
\pgfmathsetmacro{\yzero}{(0-\ymin)/(\ymax-\ymin)*\panelH}
\newcommand{\statbar}[6]{%
  \gy{#4}%
  \filldraw[draw=white,line width=.25pt,fill=#3] (#1,\yzero) rectangle (#2,\yy);%
  \pgfmathsetmacro{\xc}{(#1+#2)/2}%
  \pgfmathsetmacro{\ylo}{(#5-\ymin)/(\ymax-\ymin)*\panelH}%
  \pgfmathsetmacro{\yhi}{(#6-\ymin)/(\ymax-\ymin)*\panelH}%
  \draw[black!70,line width=.35pt] (\xc,\ylo)--(\xc,\yhi);%
  \draw[black!70,line width=.35pt] (\xc-.028,\ylo)--(\xc+.028,\ylo);%
  \draw[black!70,line width=.35pt] (\xc-.028,\yhi)--(\xc+.028,\yhi);%
}
\newcommand{\cellpoints}[2]{%
  \foreach \dx/\val in {#2}{%
    \pgfmathsetmacro{\xp}{#1+\dx}%
    \gy{\val}%
    \filldraw[fill=white,fill opacity=.88,draw=black!75,line width=.18pt]
      (\xp,\yy) circle (.013);%
  }%
}
\foreach \v/\lab in {-7/$-7$,-5/$-5$,-3/$-3$,-1/$-1$,1/$1$,3/$3$}{
  \gy{\v}
  \draw[gray!20,line width=.35pt] (0,\yy) -- (\panelW,\yy);
  \draw[gray!55,line width=.35pt] (0,\yy) -- (-.05,\yy) node[left,font=\scriptsize] {\lab};
}
\draw[gray!65,line width=.55pt] (0,\yzero) -- (\panelW,\yzero);
\draw[gray!60,line width=.35pt] (0,0) rectangle (\panelW,\panelH);
\node[below,align=center,font=\scriptsize] at (1.595,-.06) {5--6\\$n=4$};
\node[below,align=center,font=\scriptsize] at (3.795,-.06) {7--8\\$n=6$};

\draw[barbase,line width=1.0pt] (1.0800,\yzero) -- (1.2100,\yzero);
\statbar{1.2300}{1.3600}{barcagrad!70}{0.2100}{-0.1069}{0.5269}
\statbar{1.3800}{1.5100}{barstgc!70}{-2.8950}{-7.0683}{1.2783}
\statbar{1.5300}{1.6600}{barload!70}{-0.7600}{-2.6001}{1.0801}
\statbar{1.6800}{1.8100}{barswitch!70}{-0.3650}{-1.1855}{0.4555}
\statbar{1.8300}{1.9600}{barours!88}{0.7275}{-0.1933}{1.6483}
\statbar{1.9800}{2.1100}{barstgcload!80}{-2.5950}{-4.3717}{-0.8183}
\cellpoints{1.2950}{-.024/.18,-.008/.35,.008/-.06,.024/.37}
\cellpoints{1.4450}{-.024/-4.70,-.008/-2.03,.008/-5.27,.024/.42}
\cellpoints{1.5950}{-.024/-.25,-.008/-.33,.008/-2.48,.024/.02}
\cellpoints{1.7450}{-.024/-.44,-.008/-.19,.008/-1.03,.024/.20}
\cellpoints{1.8950}{-.024/1.47,-.008/.06,.008/.74,.024/.64}
\cellpoints{2.0450}{-.024/-1.79,-.008/-1.55,.008/-3.90,.024/-3.14}

\draw[barbase,line width=1.0pt] (3.2800,\yzero) -- (3.4100,\yzero);
\statbar{3.4300}{3.5600}{barcagrad!70}{0.3883}{0.0393}{0.7374}
\statbar{3.5800}{3.7100}{barstgc!70}{-0.6417}{-1.2817}{-0.0016}
\statbar{3.7300}{3.8600}{barload!70}{-0.1017}{-0.6204}{0.4170}
\statbar{3.8800}{4.0100}{barswitch!70}{-0.0783}{-0.5144}{0.3578}
\statbar{4.0300}{4.1600}{barours!88}{1.3417}{1.0345}{1.6489}
\statbar{4.1800}{4.3100}{barstgcload!80}{-1.2817}{-2.9007}{0.3374}
\cellpoints{3.4950}{-.040/.06,-.024/.48,-.008/.22,.008/.55,.024/.09,.040/.93}
\cellpoints{3.6450}{-.040/-.66,-.024/-.38,-.008/.10,.008/-.75,.024/-.43,.040/-1.73}
\cellpoints{3.7950}{-.040/.25,-.024/-.02,-.008/.23,.008/.24,.024/-.30,.040/-1.01}
\cellpoints{3.9450}{-.040/-.03,-.024/.57,-.008/-.13,.008/-.08,.024/-.06,.040/-.74}
\cellpoints{4.0950}{-.040/1.72,-.024/1.33,-.008/1.30,.008/1.28,.024/1.56,.040/.86}
\cellpoints{4.2450}{-.040/-.26,-.024/1.32,-.008/-1.66,.008/-1.92,.024/-2.32,.040/-2.85}
\node[rotate=90,font=\scriptsize] at (-.62,\panelH/2) {$\Delta$ acc. vs. Baseline (pp)};
\begin{scope}[shift={(.05,-.82)},font=\tiny]
  \fill[barbase!70] (0,0) rectangle (.12,.12); \node[right=1.5pt] at (.12,.06) {Baseline};
  \fill[barcagrad!70] (1.26,0) rectangle (1.38,.12); \node[right=1.5pt] at (1.38,.06) {CAGrad};
  \fill[barstgc!70] (2.38,0) rectangle (2.50,.12); \node[right=1.5pt] at (2.50,.06) {STGC};
  \fill[barstgcload!80] (3.52,0) rectangle (3.64,.12); \node[right=1.5pt] at (3.64,.06) {STGC+Load};
  \fill[barload!70] (.64,-.24) rectangle (.76,-.12); \node[right=1.5pt] at (.76,-.18) {LoadPen};
  \fill[barswitch!70] (2.18,-.24) rectangle (2.30,-.12); \node[right=1.5pt] at (2.30,-.18) {SwitchAux};
  \fill[barours!88] (3.52,-.24) rectangle (3.64,-.12); \node[right=1.5pt,font=\tiny\bfseries] at (3.64,-.18) {GAR};
\end{scope}
\end{tikzpicture}%
}%
\caption{Accuracy gains over Baseline by task-count bin. Bars are equal-cell means; dots are backbone--mixture cells ($n=4,6$; frozen DeBERTa and Qwen3-1.7B), and whiskers are 95\% Student-$t$ intervals across those cells. All runs use E8K4; Table~\ref{tab:task_mixtures} defines the mixtures in each bin.}
\label{fig:task_scale_accuracy_gain}
\endgroup
\end{minipage}\hfill
\begin{minipage}[t]{0.485\linewidth}
\vspace{0pt}
\makeatletter\def\@captype{table}\makeatother
\begingroup
\centering
\caption{RoBERTa classification-head fixed-configuration ablations on the five E8K4 mixtures in Table~\ref{tab:task_mixtures} at 1,000 updates (accuracy, \%). (a) Coefficient sweep. (b) Denominator ablation at $\lambda=10^{-3}$; $\lambda=0$ disables the auxiliary term. Each mixture reuses its selected GAR configuration without further search. Values average tasks, mixtures, and seeds equally. Appendices~\ref{appsubsec:lambda_ablation_protocol} and~\ref{appsubsec:lambda_ablation_results} give full details.}
\label{tab:lambda_ablation_means}
\textbf{Fixed-configuration ablation}\par\vspace{4pt}
\begin{minipage}[t]{0.40\linewidth}
\centering
\setlength{\tabcolsep}{2.0pt}
\renewcommand{\arraystretch}{1.0}
\begin{tabular}[t]{@{}cr@{}}
\toprule
\multicolumn{2}{c}{\shortstack{\textbf{(a) Coefficient}\\\textbf{sweep}}} \\
\cmidrule(lr){1-2}
$\lambda$ & Acc. (\%) \\
\midrule
$0$ & 78.97 \\
$10^{-5}$ & 79.17 \\
$10^{-4}$ & 79.46 \\
$10^{-3}$ & \textbf{79.87} \\
$10^{-2}$ & 79.46 \\
\bottomrule
\end{tabular}
\end{minipage}\hfill
\begin{minipage}[t]{0.575\linewidth}
\centering
\setlength{\tabcolsep}{2.0pt}
\renewcommand{\arraystretch}{1.0}
\begin{tabular}[t]{@{}lr@{}}
\toprule
\multicolumn{2}{c}{\textbf{(b) Denominator}} \\
\cmidrule(lr){1-2}
Objective & Acc. (\%) \\
\midrule
$\lambda=0$ & 78.97 \\
Numerator only & 79.39 \\
Load-normalized & \textbf{79.87} \\
\bottomrule
\end{tabular}
\end{minipage}
\endgroup

\end{minipage}
\end{figure}

Across the five frozen LoRA-FFN mixtures, GAR's equal-backbone gains over Baseline are
$+0.73$ points for five--six tasks and $+1.34$ for seven--eight
(Figure~\ref{fig:task_scale_accuracy_gain}). The bins
describe different task counts and compositions. Classification-head RoBERTa
also has positive gains in both groups (Appendix~\ref{appsubsec:implb_results}).

\paragraph{Fixed-configuration coefficient ablations.}
Table~\ref{tab:lambda_ablation_means} reports a 1,000-update RoBERTa classification-head sweep on the same five E8K4 mixtures. It reuses each mixture's selected GAR configuration and varies only $\lambda$. At $10^{-3}$, the equal-mixture mean rises from $78.97\%$ to $79.87\%$, a paired gain of $+0.90$ $[+0.43,+1.37]$ points ($n=5$). Removing the load denominator at the same coefficient gives $79.39\%$; the load-normalized objective exceeds this numerator-only arm by $+0.47$ $[+0.08,+0.87]$ points. Appendix~\ref{appsubsec:lambda_ablation_results} gives per-task results and the corresponding frozen RoBERTa LoRA-FFN sweep.

\section{Conclusion}
\label{sec:conclusion}

We presented a load-normalized objective for routing in gradient space, together with a first-order implementation that directs auxiliary gradients to the router. The formulation connects expert assignment to gradient clustering and provides a criterion for grouping compatible optimization signals. Experiments show accuracy gains across trainable and frozen backbones, head and FFN experts, and top-1 and top-4 routing. GAR achieves the highest aggregate accuracy in the seven-method comparisons; the frozen LoRA-FFN aggregate also shows higher utilization and gradient-mass purity and lower load variance than Baseline and CAGrad. These findings support gradient information as a useful basis for expert routing beyond load balance.

\section{Limitations}
\label{sec:limitations}

Our experiments focus on supervised fine-tuning of pretrained language models for English text classification. Evaluation in large-scale pretraining and other modalities remains for future work.

\clearpage
\subsubsection*{Ethics Statement}

This work studies routing objectives for sparse Mixture-of-Experts models and
uses publicly released English text-classification datasets
(GLUE, SuperGLUE, PAWS, and ANLI) under their
original licenses. No new data were collected, no human subjects were involved,
and we did not intentionally collect or inspect personal information. Public
text datasets can nevertheless retain incidental personal or sensitive
content despite upstream filtering; users should follow the source datasets'
documentation and apply deployment-appropriate filtering. The method changes how
training signal is grouped across experts and does not introduce a new
generative capability; we are not aware of a direct dual-use concern beyond
those already inherent to fine-tuning pretrained language models.

\subsubsection*{Reproducibility Statement}

The paper and appendices specify the objective, implementation, experimental
settings, hyperparameter selection procedure, and evaluation metrics.
Reported endpoint results summarize five independent runs.
Selected configurations are listed in the hyperparameter tables of
Appendix~\ref{app:hyperparameters_reproducibility}; code implementing
the objective and the reported settings will be released at
\url{https://github.com/lyclyq/MoE_arxiv}.

\subsubsection*{Use of Large Language Models}

Large language models were used to retrieve relevant literature; draft and edit
manuscript text; refactor code and support research execution; and check and
revise mathematical derivations. The research questions, core method, and final
scientific decisions were developed and verified by the authors. Experimental
measurements were produced by the reported code. The authors reviewed all
AI-assisted content and take full responsibility for the contents of this paper.

\bibliographystyle{iclr2027_conference}
\bibliography{bib/references}

\begin{thebibliography}{23}
\providecommand{\natexlab}[1]{#1}
\providecommand{\url}[1]{\texttt{#1}}
\expandafter\ifx\csname urlstyle\endcsname\relax
  \providecommand{\doi}[1]{doi: #1}\else
  \providecommand{\doi}{doi: \begingroup \urlstyle{rm}\Url}\fi

\bibitem[Chen et~al.(2018)Chen, Badrinarayanan, Lee, and
  Rabinovich]{gradnorm_chen2018}
Zhao Chen, Vijay Badrinarayanan, Chen-Yu Lee, and Andrew Rabinovich.
\newblock {GradNorm}: Gradient normalization for adaptive loss balancing in
  deep multitask networks.
\newblock In \emph{International Conference on Machine Learning (ICML)}, 2018.

\bibitem[Dhillon et~al.(2004)Dhillon, Guan, and Kulis]{dhillon2004kernelkm}
Inderjit~S. Dhillon, Yuqiang Guan, and Brian Kulis.
\newblock Kernel k-means, spectral clustering and normalized cuts.
\newblock In \emph{Proceedings of the Tenth ACM SIGKDD International Conference
  on Knowledge Discovery and Data Mining}, pp.\  551--556, 2004.

\bibitem[Fedus et~al.(2022)Fedus, Zoph, and Shazeer]{switch_transformer}
William Fedus, Barret Zoph, and Noam Shazeer.
\newblock Switch transformers: Scaling to trillion parameter models with simple
  and efficient sparsity.
\newblock \emph{Journal of Machine Learning Research}, 23\penalty0
  (120):\penalty0 1--39, 2022.

\bibitem[He et~al.(2021)He, Liu, Gao, and Chen]{deberta_he2021}
Pengcheng He, Xiaodong Liu, Jianfeng Gao, and Weizhu Chen.
\newblock {DeBERTa}: Decoding-enhanced {BERT} with disentangled attention.
\newblock In \emph{International Conference on Learning Representations
  (ICLR)}, 2021.

\bibitem[He et~al.(2023)He, Gao, and Chen]{deberta_v3_he2023}
Pengcheng He, Jianfeng Gao, and Weizhu Chen.
\newblock {DeBERTaV3}: Improving {DeBERTa} using {ELECTRA}-style pre-training
  with gradient-disentangled embedding sharing.
\newblock In \emph{International Conference on Learning Representations
  (ICLR)}, 2023.

\bibitem[Hu et~al.(2022)Hu, Shen, Wallis, Allen-Zhu, Li, Wang, Wang, and
  Chen]{lora_hu2022}
Edward~J. Hu, Yelong Shen, Phillip Wallis, Zeyuan Allen-Zhu, Yuanzhi Li, Shean
  Wang, Lu~Wang, and Weizhu Chen.
\newblock {LoRA}: Low-rank adaptation of large language models.
\newblock In \emph{International Conference on Learning Representations
  (ICLR)}, 2022.

\bibitem[Hubert \& Arabie(1985)Hubert and Arabie]{hubert1985ari}
Lawrence Hubert and Phipps Arabie.
\newblock Comparing partitions.
\newblock \emph{Journal of Classification}, 2\penalty0 (1):\penalty0 193--218,
  1985.

\bibitem[Lewis et~al.(2021)Lewis, Bhosale, Dettmers, Goyal, and
  Zettlemoyer]{base_layers_lewis2021}
Mike Lewis, Shruti Bhosale, Tim Dettmers, Naman Goyal, and Luke Zettlemoyer.
\newblock {BASE} layers: Simplifying training of large, sparse models.
\newblock In \emph{International Conference on Machine Learning (ICML)}, 2021.

\bibitem[Liu et~al.(2021)Liu, Liu, Jin, Stone, and Liu]{cagrad_liu2021}
Bo~Liu, Xingchao Liu, Xiaojie Jin, Peter Stone, and Qiang Liu.
\newblock Conflict-averse gradient descent for multi-task learning.
\newblock In \emph{Advances in Neural Information Processing Systems
  (NeurIPS)}, pp.\  18878--18890, 2021.

\bibitem[Liu et~al.(2019)Liu, Ott, Goyal, Du, Joshi, Chen, Levy, Lewis,
  Zettlemoyer, and Stoyanov]{roberta_liu2019}
Yinhan Liu, Myle Ott, Naman Goyal, Jingfei Du, Mandar Joshi, Danqi Chen, Omer
  Levy, Mike Lewis, Luke Zettlemoyer, and Veselin Stoyanov.
\newblock {RoBERTa}: A robustly optimized {BERT} pretraining approach.
\newblock \emph{arXiv preprint arXiv:1907.11692}, 2019.

\bibitem[Nie et~al.(2020)Nie, Williams, Dinan, Bansal, Weston, and
  Kiela]{anli_nie2020}
Yixin Nie, Adina Williams, Emily Dinan, Mohit Bansal, Jason Weston, and Douwe
  Kiela.
\newblock Adversarial {NLI}: A new benchmark for natural language
  understanding.
\newblock In \emph{Proceedings of the 58th Annual Meeting of the Association
  for Computational Linguistics (ACL)}, 2020.

\bibitem[Rajbhandari et~al.(2022)Rajbhandari, Li, Yao, Zhang, Aminabadi, Awan,
  Rasley, and He]{deepspeed_moe_rajbhandari2022}
Samyam Rajbhandari, Conglong Li, Zhewei Yao, Minjia Zhang, Reza~Yazdani
  Aminabadi, Ammar~Ahmad Awan, Jeff Rasley, and Yuxiong He.
\newblock {DeepSpeed-MoE}: Advancing mixture-of-experts inference and training
  to power next-generation {AI} scale.
\newblock In \emph{International Conference on Machine Learning (ICML)}, 2022.

\bibitem[Roy et~al.(2021)Roy, Saffar, Vaswani, and
  Grangier]{routing_transformers_roy2021}
Aurko Roy, Mohammad Saffar, Ashish Vaswani, and David Grangier.
\newblock Efficient content-based sparse attention with routing transformers.
\newblock \emph{Transactions of the Association for Computational Linguistics},
  9:\penalty0 53--68, 2021.

\bibitem[Sener \& Koltun(2018)Sener and Koltun]{mgda_sener2018}
Ozan Sener and Vladlen Koltun.
\newblock Multi-task learning as multi-objective optimization.
\newblock In \emph{Advances in Neural Information Processing Systems
  (NeurIPS)}, 2018.

\bibitem[Strehl \& Ghosh(2002)Strehl and Ghosh]{strehl2002nmi}
Alexander Strehl and Joydeep Ghosh.
\newblock Cluster ensembles -- a knowledge reuse framework for combining
  multiple partitions.
\newblock \emph{Journal of Machine Learning Research}, 3:\penalty0 583--617,
  2002.

\bibitem[Tay et~al.(2020)Tay, Bahri, Yang, Metzler, and
  Juan]{sinkhorn_transformer_tay2020}
Yi~Tay, Dara Bahri, Liu Yang, Donald Metzler, and Da-Cheng Juan.
\newblock Sparse sinkhorn attention.
\newblock In \emph{International Conference on Machine Learning (ICML)}, 2020.

\bibitem[Wang et~al.(2019{\natexlab{a}})Wang, Pruksachatkun, Nangia, Singh,
  Michael, Hill, Levy, and Bowman]{superglue_wang2019}
Alex Wang, Yada Pruksachatkun, Nikita Nangia, Amanpreet Singh, Julian Michael,
  Felix Hill, Omer Levy, and Samuel~R. Bowman.
\newblock {SuperGLUE}: A stickier benchmark for general-purpose language
  understanding systems.
\newblock In \emph{Advances in Neural Information Processing Systems
  (NeurIPS)}, 2019{\natexlab{a}}.

\bibitem[Wang et~al.(2019{\natexlab{b}})Wang, Singh, Michael, Hill, Levy, and
  Bowman]{glue_wang2019}
Alex Wang, Amanpreet Singh, Julian Michael, Felix Hill, Omer Levy, and
  Samuel~R. Bowman.
\newblock {GLUE}: A multi-task benchmark and analysis platform for natural
  language understanding.
\newblock In \emph{International Conference on Learning Representations
  (ICLR)}, 2019{\natexlab{b}}.

\bibitem[Yang et~al.(2025{\natexlab{a}})]{qwen3_yang2025}
An~Yang et~al.
\newblock {Qwen3} technical report.
\newblock \emph{arXiv preprint arXiv:2505.09388}, 2025{\natexlab{a}}.

\bibitem[Yang et~al.(2025{\natexlab{b}})Yang, Shen, Cai, Yang, Gao, Zhang, and
  Li]{stgc_yang2024}
Longrong Yang, Dong Shen, Chaoxiang Cai, Fan Yang, Tingting Gao, Di~Zhang, and
  Xi~Li.
\newblock Solving token gradient conflict in mixture-of-experts for large
  vision-language model.
\newblock In \emph{International Conference on Learning Representations
  (ICLR)}, 2025{\natexlab{b}}.

\bibitem[Yu et~al.(2020)Yu, Kumar, Gupta, Levine, Hausman, and
  Finn]{pcgrad_yu2020}
Tianhe Yu, Saurabh Kumar, Abhishek Gupta, Sergey Levine, Karol Hausman, and
  Chelsea Finn.
\newblock Gradient surgery for multi-task learning.
\newblock In \emph{Advances in Neural Information Processing Systems
  (NeurIPS)}, 2020.

\bibitem[Zhang et~al.(2019)Zhang, Baldridge, and He]{paws_zhang2019}
Yuan Zhang, Jason Baldridge, and Luheng He.
\newblock {PAWS}: Paraphrase adversaries from word scrambling.
\newblock In \emph{Proceedings of the 2019 Conference of the North American
  Chapter of the Association for Computational Linguistics (NAACL)}, 2019.

\bibitem[Zhou et~al.(2022)Zhou, Lei, Liu, Du, Huang, Zhao, Dai, Chen, Le, and
  Laudon]{expert_choice_zhou2022}
Yanqi Zhou, Tao Lei, Hanxiao Liu, Nan Du, Yanping Huang, Vincent Zhao,
  Andrew~M. Dai, Zhifeng Chen, Quoc~V. Le, and James Laudon.
\newblock Mixture-of-experts with expert choice routing.
\newblock In \emph{Advances in Neural Information Processing Systems
  (NeurIPS)}, 2022.

\end{thebibliography}

\appendix

\makeatletter
\patchcmd{\LT@output}{\vss}{\vfil}{}{}
\patchcmd{\LT@output}{\vss}{\vfil}{}{}
\newcommand{\AppendixTableFontCap}{%
  \ifdim\f@size pt>9pt\relax\small\fi}
\AddToHook{env/tabular/begin}{\AppendixTableFontCap}
\AddToHook{env/tabular*/begin}{\AppendixTableFontCap}
\AddToHook{env/longtable/begin}{\AppendixTableFontCap}
\AddToHook{env/longtable/begin}{\renewcommand{\theHtable}{longtable.\arabic{table}}}
\newsavebox{\AppendixTableSavebox}
\newcommand{\AppendixTable}[1]{%
  \begingroup
  \sbox{\AppendixTableSavebox}{#1}%
  \Gscale@div\AppendixTableScale{9pt}{\f@size pt}%
  \@tempdima=\AppendixTableScale\wd\AppendixTableSavebox\relax
  \ifdim\@tempdima>\textwidth
    \resizebox{\textwidth}{!}{\usebox{\AppendixTableSavebox}}%
  \else
    \scalebox{\AppendixTableScale}{\usebox{\AppendixTableSavebox}}%
  \fi
  \endgroup}
\makeatother
\raggedbottom
\section{Additional Variational Derivations}
\label{app:variational_derivations}

Figure~\ref{fig:gradient_space_intuition} illustrates the gradient-grouping
preference underlying the following fixed-observation derivations.

\begin{figure}[!ht]
\centering
\includegraphics[width=0.72\linewidth,trim=10 8 10 8,clip]{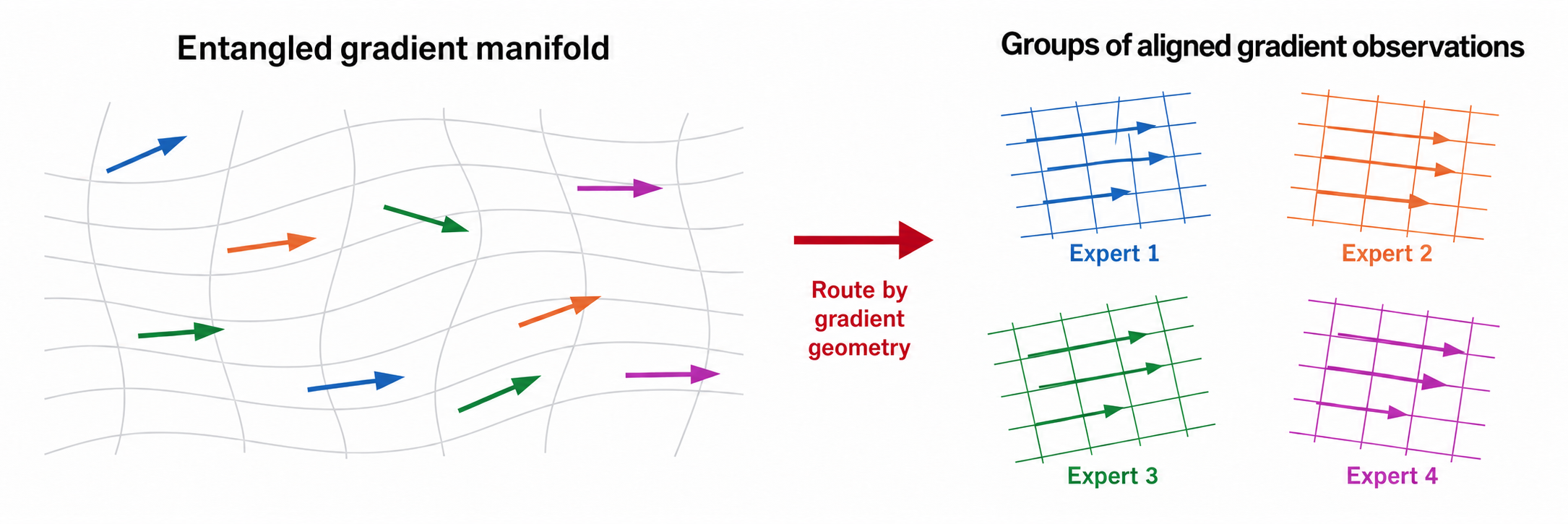}
\caption{Gradient-space routing intuition. The criterion groups gradient observations by signed associations. The expert clusters illustrate the grouping preference, rather than learned linear subspaces.}
\label{fig:gradient_space_intuition}
\end{figure}

\subsection{Stationary Gibbs form}
\label{app:gibbs_form_proof}

The following is the formal statement summarized in Section~\ref{sec:variational_formulation}.

\begin{proposition}[Stationary Gibbs form]
\label{prop:gibbs_form}
Fix $\tau>0$. If $P^\star$ is an interior stationary point of $\cF_{\tau,\epsilon}(\cdot;W)$ over $\cP_{M,K}$, then for every routed item $m$,
\begin{equation}
p^\star_{mk}
=
\frac{\exp\!\big(s_{mk}(P^\star)/\tau\big)}
{\sum_{\ell=1}^K \exp\!\big(s_{m\ell}(P^\star)/\tau\big)},
\qquad s_{mk}(P)
:=
\frac{2\ip{G_k}{\tilde g_m}}{d_k(P)+\epsilon}
-
\frac{\norm{G_k}^2}{(d_k(P)+\epsilon)^2}.
\label{eq:gibbs_routing}
\end{equation}
\end{proposition}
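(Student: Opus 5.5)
The argument is a Lagrange-multiplier (KKT) computation on the product of simplices, combined with the derivative of $\cA_\epsilon$ already recorded as $s_{mk}$ in Section~\ref{sec:variational_formulation}. Since $P^\star$ is interior, every $p^\star_{mk}>0$. The only active constraints are therefore the $M$ row equalities $\sum_k p_{mk}=1$, and nonnegativity multipliers vanish. Stationarity then means there exist scalars $\mu_m$ with
\[
\frac{\partial \cF_{\tau,\epsilon}}{\partial p_{mk}}(P^\star)=\mu_m\qquad\text{for all } m,k .
\]
Equivalently, the gradient restricted to each row is orthogonal to the tangent directions $e_k-e_\ell$ of that row's simplex.

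\textbf{Computing the partial derivative.} We write $\cA_\epsilon=\sum_k \norm{G_k}^2/(d_k+\epsilon)$, with $G_k=\sum_m p_{mk}\tilde g_m$ and $d_k=\sum_m p_{mk}$. Only the $k$-th summand depends on $p_{mk}$. Moreover, $\partial G_k/\partial p_{mk}=\tilde g_m$ and $\partial d_k/\partial p_{mk}=1$. The quotient rule gives
\[
\frac{\partial\cA_\epsilon}{\partial p_{mk}}=\frac{2\ip{G_k}{\tilde g_m}}{d_k+\epsilon}-\frac{\norm{G_k}^2}{(d_k+\epsilon)^2}=s_{mk}(P).
\]
Here $d_k+\epsilon>0$ at an interior point even when $\epsilon=0$, because all entries are positive. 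The entropy term contributes $\tau(\log p_{mk}+1)$. In the general-$W$ form this is $Q_{kk}=(P^\top WP)_{kk}$, and symmetry of $W$ produces the factor $2$. Under the Gram instantiation of Proposition~\ref{prop:euclidean_instantiation} it reduces to the display above.

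\textbf{Solving for the probabilities.} The stationarity condition reads $-s_{mk}(P^\star)+\tau(\log p^\star_{mk}+1)=\mu_m$. Since $\tau>0$, this rearranges to
\[
p^\star_{mk}=\exp\!\big((\mu_m-\tau)/\tau\big)\exp\!\big(s_{mk}(P^\star)/\tau\big).
\]
The prefactor depends only on $m$, so imposing $\sum_k p^\star_{mk}=1$ fixes it as the reciprocal of $\sum_\ell\exp(s_{m\ell}(P^\star)/\tau)$. This yields \eqref{eq:gibbs_routing}.

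\textbf{Main subtlety.} The only delicate point is justifying the Lagrange form of stationarity. I would state that "interior stationary point" means the directional derivative vanishes along every feasible direction, namely the row-wise zero-sum perturbations. For each $m$, taking the directions $e_{mk}-e_{m\ell}$ shows that $\partial_{mk}\cF-\partial_{m\ell}\cF=0$. This gives the common value $\mu_m$ directly, with no constraint qualification needed. One should also note that the result is a fixed-point characterization: $s_{mk}$ is evaluated at $P^\star$ itself, so the statement makes no claim of existence or uniqueness.
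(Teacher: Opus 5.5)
Your proposal is correct and follows the paper's proof essentially step for step. You differentiate the Euclidean form of $\cF_{\tau,\epsilon}$ to get $-s_{mk}+\tau(1+\log p_{mk})$, impose the row constraints with a multiplier (your $\mu_m$ is the paper's $-\alpha_m$), and normalize; your justification of the multiplier form via the directions $e_{mk}-e_{m\ell}$ adds rigor the paper leaves implicit but is not a different route.
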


\begin{proof}
By Proposition~\ref{prop:euclidean_instantiation},
\[
\cF_{\tau,\epsilon}(P;W)
=
-
\sum_{k=1}^K \frac{\norm{G_k}^2}{d_k(P)+\epsilon}
+
\tau \sum_{m=1}^M \sum_{k=1}^K p_{mk}\log p_{mk}.
\]
Differentiating with respect to $p_{mk}$ gives
\begin{equation}
\frac{\partial \cF_{\tau,\epsilon}}{\partial p_{mk}}
=
-
\left(
\frac{2\ip{G_k}{\tilde g_m}}{d_k(P)+\epsilon}
-
\frac{\norm{G_k}^2}{(d_k(P)+\epsilon)^2}
\right)
+
\tau(1+\log p_{mk}).
\label{eq:free_energy_grad}
\end{equation}
Introducing a Lagrange multiplier $\alpha_m$ for the row constraint
$\sum_{k=1}^K p_{mk}=1$, the stationarity condition is
\[
\frac{\partial \cF_{\tau,\epsilon}}{\partial p_{mk}}+\alpha_m=0.
\]
Substituting \eqref{eq:free_energy_grad}, rearranging, and normalizing over $k$
yields \eqref{eq:gibbs_routing}.
\end{proof}

\section{Optimization geometry, Clustering structure, and Spectral view}
\label{app:additional_derivations}

\subsection{Proof of Proposition~\ref{prop:pairwise}} \label{app:pairwise}
For fixed gradient observations under hard assignments, each sample $m$ is assigned to exactly one expert,
forming clusters $\mathcal{C}_k = \{ m : p_{mk} = 1 \}$.

The surrogate aggregate of observations assigned to expert $k$ is:
\[
G_k = \sum_{m \in \mathcal{C}_k} \tilde g_m.
\]

Its squared norm expands as:
\[
\|G_k\|^2
=
\sum_{i,j \in \mathcal{C}_k}
\langle \tilde g_i, \tilde g_j \rangle.
\]

Therefore, the objective
\[
\sum_k \frac{\|G_k\|^2}{|\mathcal{C}_k|+\epsilon}
=
\sum_k \frac{1}{|\mathcal{C}_k|+\epsilon}
\sum_{i,j\in\mathcal{C}_k}\langle \tilde g_i,\tilde g_j\rangle
\]
is a load-normalized association within each observation cluster. Maximizing it
rewards positive co-routing affinity and discourages cancellation within the
surrogate aggregates.
\qed

\paragraph{Self-association and cross-observation agreement.}
\label{app:self_cross_association}
For general soft assignments, the full Gram reward decomposes as
\[
\cA_\epsilon(P;W)
=
\sum_k
\frac{
\sum_m p_{mk}^2\|\tilde g_m\|^2
+
\sum_{i\ne j}p_{ik}p_{jk}\langle\tilde g_i,\tilde g_j\rangle
}{d_k(P)+\epsilon}.
\]
The first term retains observation self-association; the second scores
agreement between distinct observations. The diagonal can favor concentrated
assignments even when the off-diagonal associations vanish. For example, with
$M=K=2$ and $W=I_2$, the uniform assignment has reward $1/(1+\epsilon)$,
whereas $P=I_2$ has reward $2/(1+\epsilon)$.
The full Gram choice underlies the Euclidean aggregate identity and the
k-means characterization below; removing its diagonal defines a different
objective. The experiments evaluate the full-Gram criterion.

\subsection{Hard-Routing Restriction and k-Means Equivalence}
\label{app:kmeans_equiv}

We now restrict the variational free energy to hard assignment matrices
\[
\cH_{M,K}
:=
\left\{
P\in \{0,1\}^{M\times K}
\;\middle|\;
P\1_K=\1_M
\right\}
\subset \cP_{M,K}.
\]
Under $P\in \cH_{M,K}$, each routed item is assigned to exactly one expert.
For the discrete equivalence below we set $\epsilon=0$ and restrict to nonempty
clusters, matching the usual k-means convention; the stabilizer only handles
numerical empty-cluster cases.

\begin{theorem}[Hard-routing restriction = k-means]
\label{thm:kmeans_equiv}
Under the Euclidean gradient-Gram affinity
\[
W_{ij}=\ip{\tilde g_i}{\tilde g_j},
\]
minimizing $\cF_{0,0}(P;W)$ over nonempty hard clusters is equivalent to minimizing
the classical k-means objective on $\{\tilde g_m\}_{m=1}^M$.
\end{theorem}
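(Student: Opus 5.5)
The argument is the standard k-means/trace identity, and I would prove it by showing that the two objectives differ only by a constant that does not depend on the partition. Fix $P\in\cH_{M,K}$ with nonempty clusters $\mathcal{C}_k=\{m: p_{mk}=1\}$, so that $d_k(P)=|\mathcal{C}_k|\ge 1$ and $\epsilon=0$ causes no division issue. By Proposition~\ref{prop:euclidean_instantiation}, with $\tau=0$,
\[
\cF_{0,0}(P;W)=-\sum_{k=1}^K \frac{\norm{G_k}^2}{|\mathcal{C}_k|},
\qquad G_k=\sum_{m\in\mathcal{C}_k}\tilde g_m .
\]

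Next I would bring in the cluster means $\mu_k:=G_k/|\mathcal{C}_k|$, which are well defined because every cluster is nonempty. The k-means objective for this partition, with each center set to its cluster mean, expands as
\[
\sum_k\sum_{m\in\mathcal{C}_k}\norm{\tilde g_m-\mu_k}^2
=\sum_m\norm{\tilde g_m}^2-2\sum_k\ip{G_k}{\mu_k}+\sum_k|\mathcal{C}_k|\norm{\mu_k}^2 .
\]
Substituting $\mu_k=G_k/|\mathcal{C}_k|$, the last two terms combine to $-\sum_k\norm{G_k}^2/|\mathcal{C}_k|$. The k-means cost therefore equals $\sum_m\norm{\tilde g_m}^2+\cF_{0,0}(P;W)$. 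Since the first term depends only on the fixed observations, the two objectives share their minimizers over nonempty hard partitions.

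The one point needing care is that classical k-means also minimizes over the centers. I would handle this with the usual fact that, for a fixed partition, $\sum_{m\in\mathcal{C}_k}\norm{\tilde g_m-c}^2$ is minimized at $c=\mu_k$; this follows from the bias--variance identity $\sum_{m\in\mathcal{C}_k}\norm{\tilde g_m-c}^2=\sum_{m\in\mathcal{C}_k}\norm{\tilde g_m-\mu_k}^2+|\mathcal{C}_k|\norm{\mu_k-c}^2$. The joint minimization over partitions and centers thus reduces to minimizing over partitions the cost with centers at the cluster means. By the identity above, that reduced problem is the minimization of $\cF_{0,0}$.

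Beyond this, the only subtlety is bookkeeping: we must insist on nonempty clusters so that $\epsilon=0$ is admissible. This matches the k-means convention of $K$ nonempty clusters, and it requires $M\ge K$. I expect no real obstacle.
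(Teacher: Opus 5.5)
Your proposal is correct and follows essentially the same route as the paper: you expand the within-cluster squared distances around the cluster means to show that the k-means cost equals $\sum_m\norm{\tilde g_m}^2+\cF_{0,0}(P;W)$, whose first term does not depend on the partition. Your extra bias--variance step, which reduces the joint minimization over partitions and centers to centers placed at the cluster means, adds a little rigor that the paper omits, because the paper defines the k-means objective with $\mu_k$ already fixed to the cluster mean.
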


\begin{proof}
Let $\{\tilde g_m\}_{m=1}^M \subset \R^d$ be the detached gradient observations and assume hard routing
\[
p_{mk} \in \{0,1\},
\qquad
\mathcal{C}_k = \{ m \mid p_{mk}=1 \}.
\]
Let $|\mathcal{C}_k|$ denote cluster size.
Classical k-means minimizes
\[
\mathcal{J}_{k\text{-means}}
=
\sum_{k=1}^{K}
\sum_{m \in \mathcal{C}_k}
\norm{\tilde g_m - \mu_k}^2,
\qquad
\mu_k
=
\frac{1}{|\mathcal{C}_k|}
\sum_{m \in \mathcal{C}_k} \tilde g_m.
\]
Expanding,
\[
\norm{\tilde g_m - \mu_k}^2
=
\norm{\tilde g_m}^2
-2\ip{\tilde g_m}{\mu_k}
+\norm{\mu_k}^2.
\]
Summing over $m \in \mathcal{C}_k$ and substituting $\mu_k$ yields
\[
\sum_{m \in \mathcal{C}_k}
\norm{\tilde g_m - \mu_k}^2
=
\sum_{m \in \mathcal{C}_k}
\norm{\tilde g_m}^2
-
\frac{1}{|\mathcal{C}_k|}
\norm{
\sum_{m \in \mathcal{C}_k} \tilde g_m
}^2.
\]
The first term $\sum_{m=1}^{M}\norm{\tilde g_m}^2$ is independent of the clustering;
therefore minimizing $\mathcal{J}_{k\text{-means}}$ is equivalent to maximizing
\[
\sum_{k=1}^{K}
\frac{1}{|\mathcal{C}_k|}
\norm{
\sum_{m \in \mathcal{C}_k} \tilde g_m
}^2.
\]
Under hard routing,
\[
G_k=\sum_{m\in\mathcal{C}_k}\tilde g_m,
\qquad
d_k(P)=|\mathcal{C}_k|,
\]
hence
\[
\cF_{0,0}(P;W)
=
-
\sum_{k=1}^{K}
\frac{\norm{G_k}^2}{d_k(P)}
=
-
\sum_{k=1}^{K}
\frac{1}{|\mathcal{C}_k|}
\norm{
\sum_{m \in \mathcal{C}_k} \tilde g_m
}^2.
\]
Thus minimizing $\cF_{0,0}(P;W)$ is exactly equivalent to minimizing the k-means objective.
\end{proof}

\paragraph{Remark.}
Experts correspond to clusters in gradient space, and routing corresponds to assignments.
Soft routing probabilities $p_{mk}\in[0,1]$ therefore induce a relaxed k-means formulation.

\subsection{Spectral Relaxation}
\label{app:spectral_relaxation}

\begin{theorem}[Spectral relaxation of load-normalized alignment]
\label{thm:spectral_relaxation}
Under nonempty hard routing,
\[
\cF_{0,0}(P;W)
=
-
\sum_{k=1}^{K}
\frac{1}{|\mathcal{C}_k|}
\left\|
\sum_{m\in \mathcal{C}_k} \tilde g_m
\right\|^2.
\]
Define the gradient affinity matrix
$
W \in \R^{M\times M},
W_{ij} := \ip{\tilde g_i}{\tilde g_j}.
$
Then minimizing $\cF_{0,0}(P;W)$ is equivalent to maximizing the ratio association objective
\[
\max_{\{\mathcal{C}_k\}_{k=1}^K}
\sum_{k=1}^K
\frac{1}{|\mathcal{C}_k|}
\sum_{i,j\in \mathcal{C}_k} W_{ij}.
\]
Moreover, its standard continuous relaxation yields a top-$K$ eigenvector solution.
\end{theorem}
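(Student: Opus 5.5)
The argument has three parts: rewrite the hard-routing objective in matrix form, identify the ratio association, and then relax the discrete constraint. All three follow from identities already established.

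\emph{Step 1: the identity.} Under nonempty hard routing, $G_k=\sum_{m\in\mathcal C_k}\tilde g_m$ and $d_k(P)=|\mathcal C_k|>0$, so the displayed form of $\cF_{0,0}$ is immediate from Proposition~\ref{prop:euclidean_instantiation} with $\epsilon=0$. This is exactly the computation already done in the proof of Theorem~\ref{thm:kmeans_equiv}.

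\emph{Step 2: the ratio association.} Expanding $\|G_k\|^2=\sum_{i,j\in\mathcal C_k}\ip{\tilde g_i}{\tilde g_j}=\sum_{i,j\in\mathcal C_k}W_{ij}$, as in Proposition~\ref{prop:pairwise}, gives
\[
-\cF_{0,0}(P;W)=\sum_k |\mathcal C_k|^{-1}\sum_{i,j\in\mathcal C_k}W_{ij}.
\]
Minimizing $\cF_{0,0}$ is therefore the same as maximizing the ratio association over partitions into $K$ nonempty clusters.

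\emph{Step 3: the matrix form and its relaxation.} Define the normalized indicator matrix
\[
Y:=PD_0(P)^{-1/2},
\]
so that column $k$ equals $\1_{\mathcal C_k}/\sqrt{|\mathcal C_k|}$.
\begin{itemize}
\item Because the clusters are disjoint and nonempty, the columns of $Y$ are orthonormal, i.e.\ $Y^\top Y=D_0^{-1/2}P^\top P D_0^{-1/2}=I_K$. This uses $P^\top P=D_0(P)$ for hard assignments.
\item The trace identity gives
\[
\Tr(Y^\top WY)=\Tr(D_0^{-1}P^\top WP)=\cA_0(P;W),
\]
which recovers \eqref{eq:association_reward}.
\end{itemize}
The relaxation drops the requirement that $Y$ be a scaled indicator and keeps only $Y^\top Y=I_K$. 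Since $W$ is symmetric (indeed PSD), the Ky Fan maximum principle gives
\[
\max_{Y^\top Y=I_K}\Tr(Y^\top WY)=\sum_{i\le K}\lambda_i(W),
\]
attained at $Y=U_K R$ for any orthogonal $R$, where $U_K$ holds the top-$K$ eigenvectors. The relaxed value upper-bounds the discrete optimum because the feasible set has been enlarged.

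The only point needing care is the Ky Fan step. I would either cite it, or prove it briefly: write $\Tr(Y^\top WY)=\sum_i\lambda_i\|U^\top Y e\|$-type weights $h_i=\|Y^\top u_i\|^2\in[0,1]$ with $\sum_i h_i=K$, then maximize the linear function $\sum_i\lambda_i h_i$ over that polytope. One should also note the solution is only unique up to rotation, and is non-unique if $\lambda_K=\lambda_{K+1}$. A secondary subtlety is that $M\ge K$ is needed for a partition into $K$ nonempty clusters to exist. I expect no other obstacles.
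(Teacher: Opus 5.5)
Your proposal is correct and follows the paper's proof essentially step for step: you expand $\|G_k\|^2$ into within-cluster sums of $W_{ij}$ to obtain the ratio association, rewrite it as $\Tr(Y^\top W Y)$ with the normalized indicator $Y=PD_0(P)^{-1/2}$ (the paper's $HS^{-1/2}$), and then relax to $Y^\top Y=I_K$ and invoke Ky Fan. Your sketch of Ky Fan via the weights $\|Y^\top u_i\|^2\in[0,1]$ summing to $K$, and your remarks on rotational non-uniqueness and on needing $M\ge K$, are valid details that the paper leaves implicit.
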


\begin{proof}
First expand each cluster term:
\[
\left\|
\sum_{m\in \mathcal{C}_k} \tilde g_m
\right\|^2
=
\sum_{i\in \mathcal{C}_k}
\sum_{j\in \mathcal{C}_k}
\ip{\tilde g_i}{\tilde g_j}
=
\sum_{i,j\in \mathcal{C}_k} W_{ij}.
\]
Hence
\[
-\cF_{0,0}(P;W)
=
\sum_{k=1}^K
\frac{1}{|\mathcal{C}_k|}
\sum_{i,j\in \mathcal{C}_k} W_{ij},
\]
which is exactly the ratio association objective.

To express it in trace form, let
$H\in\{0,1\}^{M\times K}$ be the hard assignment matrix,
where $H_{ik}=1$ iff $i\in\mathcal{C}_k$.
Define
\[
S := H^\top H
=
\diag(|\mathcal{C}_1|,\dots,|\mathcal{C}_K|).
\]
Let $h_k$ denote the $k$-th column of $H$.
Then
\[
\sum_{i,j\in\mathcal{C}_k} W_{ij}
=
h_k^\top W h_k,
\qquad
|\mathcal{C}_k|
=
h_k^\top h_k.
\]
Therefore
\[
-\cF_{0,0}(P;W)
=
\sum_{k=1}^K
\frac{h_k^\top W h_k}{h_k^\top h_k}
=
\Tr\!\left(S^{-1}H^\top W H\right).
\]
Define the normalized embedding
\[
Y := H S^{-1/2}
\in \R^{M\times K},
\qquad
Y^\top Y = I_K.
\]
Then
\[
-\cF_{0,0}(P;W)
=
\Tr\!\left(Y^\top W Y\right),
\]
where $Y$ is constrained to arise from a discrete partition.

Relaxing the discreteness constraint and optimizing over all
$Y \in \R^{M\times K}$ satisfying $Y^\top Y = I_K$ gives
\[
\max_{Y^\top Y=I_K}
\Tr(Y^\top W Y).
\]
By Ky Fan's maximum principle, a maximizer is given by choosing the columns
of $Y$ as the top-$K$ eigenvectors of $W$.
\end{proof}

\paragraph{Remark.}
The load-normalized alignment term admits a spectral relaxation:
it corresponds to ratio association graph partitioning on the gradient affinity graph $W$.
Hard routing recovers a discrete partition, while the standard orthogonal relaxation
corresponds to a spectral embedding spanned by the top-$K$ eigenvectors of $W$.
The practical soft router is a separate differentiable relaxation of the hard partition.

\section{Routing Dynamics and Alignment Geometry}
\label{app:dynamics}

\paragraph{Guide to Appendix~\ref{app:dynamics}.}
This appendix analyzes load scaling, directional preference, and mode separation under a symmetric prototype model. An illustrative task-loss feedback model supplies a reference for the load comparison. Table~\ref{tab:appendix_c_map} maps each property to its assumptions and formal result.

\begin{table}[H]
\centering
\small
\caption{Map of load-scaling and routing-geometry analyses in Appendix~\ref{app:dynamics}.}
\label{tab:appendix_c_map}
\begin{tabular}{p{0.30\linewidth}p{0.27\linewidth}p{0.33\linewidth}}
\toprule
Analyzed property & Appendix subsection & Formal support \\
\midrule
Idealized setting & Appendix~\ref{sec:idealized_setting} & Assumptions on manifold priors, gradient concentration, finite-window sampling, and the baseline dynamics. \\
Idealized baseline feedback & Appendix~\ref{sec:baseline_collapse_derivation} & Proposition~\ref{prop:baseline_collapse}. \\
Load-normalized anti-amplification & Appendix~\ref{sec:ours_case1} & Proposition~\ref{prop:case1_antiamplification}. \\
Static directional tilt and coherence preference & Appendix~\ref{sec:ours_case2} and Appendix~\ref{app:directional_tilt_lemmas} & Proposition~\ref{prop:formal_self_reinforcing_specialization} plus Lemmas~\ref{lem:tilt}, \ref{lem:tilt_alignment}, and~\ref{lem:purity_increases_U}. \\
Collapsed assignments are not local maxima & Appendix~\ref{sec:ours_case3} & Theorem~\ref{thm:vertex_unstable} for the leading-order alignment surrogate. \\
\bottomrule
\end{tabular}
\end{table}

\subsection{Idealized Theoretical Assumptions}
\label{sec:idealized_setting}

We use a symmetric prototype model to isolate load feedback and the geometric
preference for separating mixed gradient observations. The baseline result
concerns its specified continuous-time feedback law; the alignment results
concern fixed observation distributions and the stated assignment objectives.

\paragraph{Expert--manifold matching ($K=C$).}
We assume the number of experts equals the number of latent gradient manifolds:
\[
K = C.
\]
Each manifold $c \in \{1,\dots,C\}$ is associated with a prototype
unit direction $u_c \in \mathbb{R}^d$, $\|u_c\|=1$.
The ideal optimal configuration assigns one expert to each manifold.

\paragraph{Uniform manifold prior.}
Training observations are drawn from the $C$ manifolds
with equal probability:
\[
\Pr(c) = \frac{1}{C}.
\]
Thus, no manifold has intrinsic frequency advantage.
Any imbalance arises purely from routing dynamics.

\paragraph{Gradient concentration model.}
Detached gradient observations follow $\tilde g\mid c=u_c+\xi$, where
\[
\mathbb{E}[\xi\mid c]=0,\qquad \mathrm{Cov}(\xi\mid c)\preceq\sigma^2 I,
\qquad \mathbb{E}[\tilde g\mid c]=u_c .
\]
We assume the diagonal second moment
$\nu:=\mathbb{E}[\|\tilde g\|^2\mid c]$ is mode-independent, as in the symmetric noise model.

\paragraph{Pairwise-homogeneous prototype geometry.}
Let $\Gamma_{ab}:=\langle u_a,u_b\rangle$ be the prototype Gram matrix.  We use the
equiangular specialization
\[
\Gamma_{ab}=\begin{cases}
1,&a=b,\\
\rho,&a\neq b,
\end{cases}
\qquad -\frac{1}{C-1}\leq \rho<1,
\]
and assume that this Gram matrix is realizable in $\mathbb{R}^d$
($\operatorname{rank}(\Gamma)\leq d$).  The lower bound is the positive-semidefinite
constraint for an equicorrelation Gram matrix, and $\rho<1$ supplies the
strict same-mode advantage. Prototype geometry and mode priors are separate
assumptions of the model.

The coherence and directional-overlap quantities are then consequences of
the same Gram matrix, rather than separate free parameters.  Conditional
independence and $\mathbb{E}[\tilde g\mid c]=u_c$ give, for $i\neq j$,
\[
\mathbb{E}[\langle\tilde g_i,\tilde g_j\rangle\mid c_i=a,c_j=b]
=\mathbb{E}[\langle u_a,\tilde g\rangle\mid c=b]
=\Gamma_{ab}.
\]
Thus both quantities equal $1$ for $a=b$ and $\rho$ for $a\neq b$.
All scalar-coherence and purity reductions below inherit this specialization.
For a general prototype Gram matrix, the pairwise term is instead
$w_k^\top\Gamma w_k$ and need not depend only on $\sum_cw_{kc}^2$.
The case $\rho<0$, where gradients from different manifolds interfere
destructively in expectation, is the strongest instance of the gap
$1-\rho>0$ and is not assumed separately.

\paragraph{Load and mixture within an expert.}
For the finite-window calculations, fix an integer count $L_k\geq1$ and draw
$L_k$ observations independently from a mixture with mode probabilities
$w_{kc}$, where $\sum_cw_{kc}=1$. Here $G_k$ is the unweighted sum of
these observations; $w_{kc}$ specifies their sampling distribution, rather
than the realized empirical proportions in a particular window. These exact
finite-window formulas use this fixed-count, independent sampling model:
\begin{equation}
\mathbb{E}\|G_k\|^2
=
L_k\nu+
L_k(L_k-1)
\left(
\sum_{c}w_{kc}^2
+
\sum_{c\neq c'}w_{kc}w_{kc'}\rho
\right).
\label{eq:coherence_expansion}
\end{equation}
Thus, to leading pairwise order in $L_k$,
\begin{equation}
\mathbb{E}\|G_k\|^2
=
L_k^2
\left(
\rho
+
(1-\rho)
\sum_cw_{kc}^2
\right)
+O(L_k).
\label{eq:coherence_simplified}
\end{equation}

\paragraph{Geometric implication of mixing.}
Since $\sum_c w_{kc}^2$ is maximized (equal to $1$)
when expert $k$ receives a single manifold,
and minimized (equal to $1/C$) under uniform mixing,
Equation~\eqref{eq:coherence_expansion} shows, at fixed $L_k>1$:
\begin{itemize}
\item A single-manifold sampling distribution maximizes $\mathbb{E}\|G_k\|^2$.
\item Mixing reduces expected coherence in proportion to the pairwise gap $1-\rho$.
\item A smaller cross-manifold overlap $\rho$ increases this mixing penalty.
\end{itemize}

\paragraph{Initial symmetry.}
We assume symmetric initialization:
\[
p_k(0) = \frac{1}{K},
\qquad
\kappa_k(0) = \kappa_0.
\]
Thus, no expert has intrinsic advantage.
Any asymmetry emerges dynamically.

\paragraph{Continuous-time routing abstraction.}
We analyze routing via a continuous-time approximation
of soft probability updates,
treating
\[
p(t) \in \Delta^{K-1}.
\]
This allows direct stability and limit analysis.

\subsection{Baseline Reference: An Illustrative Load-Feedback Model}
\label{sec:baseline_collapse_derivation}

Under the idealized assumptions in \S\ref{sec:idealized_setting}, we first analyze the baseline case
where routing is driven solely by task loss (no alignment regularizer).
We show that even under a symmetric initialization, an arbitrarily small early advantage can be amplified
into a winner-take-all collapse (a simplex vertex attractor) via a positive feedback loop between load allocation
and learning speed under the following competence--loss and load-driven
learning dynamics.

\paragraph{(1) Uniform start with a small perturbation.}
We consider a soft routing state $p(t)\in\Delta^{K-1}$ with symmetric initialization
$p_k(0)=1/K$.
A ``small perturbation'' means that for some pair of experts $i\neq j$,
\[
p_i(0) = \frac{1}{K}+\eta,\qquad
p_j(0) = \frac{1}{K}-\eta,
\qquad
0<\eta\ll 1,
\]
and optionally a matching competence perturbation $\kappa_i(0)>\kappa_j(0)$.
Such asymmetry can arise from stochastic optimization noise or finite-sample effects.

\paragraph{Competence--loss coupling and load-driven learning.}
We assume
\begin{equation}
\ell_k(t)=\ell_0-a\,\kappa_k(t),\qquad a>0,
\label{eq:baseline_loss_skill}
\end{equation}
and load-driven competence dynamics
\begin{equation}
\dot\kappa_k(t)=\alpha\,p_k(t),\qquad \alpha>0.
\label{eq:baseline_skill}
\end{equation}

\paragraph{(2) Ratio dynamics $\Rightarrow$ positive feedback.}
We model the baseline router update by replicator dynamics with payoff $\Pi_k(t)=-\ell_k(t)$:
\begin{equation}
\dot p_k(t)=\beta\,p_k(t)\big(\bar\ell(t)-\ell_k(t)\big),\qquad \beta>0,
\label{eq:baseline_replicator}
\end{equation}
where $\bar\ell(t)=\sum_{j=1}^K p_j(t)\ell_j(t)$.
For any two experts $i,j$, define $r_{ij}(t)=p_i(t)/p_j(t)$.
A standard identity of replicator dynamics gives
\begin{equation}
\frac{d}{dt}\log r_{ij}(t)
=
\beta\big(\ell_j(t)-\ell_i(t)\big).
\label{eq:baseline_ratio_general}
\end{equation}
Substituting \eqref{eq:baseline_loss_skill} yields
\begin{equation}
\frac{d}{dt}\log r_{ij}(t)
=
\beta a\big(\kappa_i(t)-\kappa_j(t)\big).
\label{eq:baseline_ratio_skill}
\end{equation}
Meanwhile, from \eqref{eq:baseline_skill},
\begin{equation}
\frac{d}{dt}\big(\kappa_i(t)-\kappa_j(t)\big)
=
\alpha\big(p_i(t)-p_j(t)\big).
\label{eq:baseline_skill_gap}
\end{equation}
Equations \eqref{eq:baseline_ratio_skill}--\eqref{eq:baseline_skill_gap}
form a closed positive feedback loop:
a small competence advantage increases routing probability, which further increases competence.

\paragraph{(3) Strict collapse in the two-expert case ($K=2$).}
The analytically tractable case is $K=2$.
Let $p_2(t)=1-p_1(t)$ and define the competence gap $\Delta\kappa(t)=\kappa_1(t)-\kappa_2(t)$.
From \eqref{eq:baseline_skill_gap},
\begin{equation}
\dot{\Delta\kappa}(t)=\alpha(2p_1(t)-1),
\label{eq:baseline_two_skill_gap}
\end{equation}
and from \eqref{eq:baseline_ratio_skill} with $r(t)=p_1(t)/(1-p_1(t))$,
\begin{equation}
\frac{d}{dt}\log\frac{p_1(t)}{1-p_1(t)}
=
\beta a\,\Delta\kappa(t).
\label{eq:baseline_two_ratio}
\end{equation}

\begin{proposition}[Winner-take-all under a small initial advantage]
\label{prop:baseline_collapse}
Assume $K=2$, an initial routing advantage, and no initial competence deficit:
\[
p_1(0)>\tfrac12
\quad\text{and}\quad
\Delta\kappa(0)\geq0.
\]
Then $\Delta\kappa(t)$ is strictly increasing and
\[
\frac{p_1(t)}{1-p_1(t)}\to\infty
\quad\Rightarrow\quad
p_1(t)\to 1.
\]
\end{proposition}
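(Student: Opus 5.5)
We reduce to the planar system for $x(t):=\log\frac{p_1(t)}{1-p_1(t)}$ and $y(t):=\Delta\kappa(t)$. By \eqref{eq:baseline_two_ratio} and \eqref{eq:baseline_two_skill_gap},
\[
\dot x=\beta a\,y,\qquad \dot y=\alpha\,(2p_1-1)=\alpha\tanh(x/2),
\]
since $2p_1-1=\tanh(x/2)$ when $p_1=e^x/(1+e^x)$. The hypotheses say $x(0)>0$ and $y(0)\ge 0$. The right-hand side is smooth and globally Lipschitz in $(x,y)$ ($\tanh$ is bounded with bounded derivative). Hence a unique global solution exists on $[0,\infty)$, and $p_1=\sigma(x)\in(0,1)$ throughout.

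\textbf{Invariance.} We show that $x(t)>0$ and $y(t)>y(0)\ge 0$ for all $t>0$, i.e., the open quadrant is forward invariant.
\begin{itemize}
\item Let $T=\sup\{t:\ x>0 \text{ on } [0,t]\}$. On $[0,T)$ we have $\dot y=\alpha\tanh(x/2)>0$, so $y$ is strictly increasing and $y(t)>y(0)\ge0$ for $t\in(0,T)$.
\item Then $\dot x=\beta a y>0$ on $(0,T)$, so $x(t)\ge x(0)>0$ there.
\item If $T<\infty$, continuity would give $x(T)\ge x(0)>0$, contradicting the definition of $T$. Hence $T=\infty$.
\end{itemize}
This already gives the first claim: $\Delta\kappa$ is strictly increasing, because $\dot y=\alpha\tanh(x/2)\ge\alpha\tanh(x(0)/2)>0$ for all $t$.

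\textbf{Divergence.} The key quantitative step uses the uniform lower bound $c:=\alpha\tanh(x(0)/2)>0$, which holds because $x$ is nondecreasing.
\begin{itemize}
\item From $\dot y\ge c$ we get $y(t)\ge y(0)+ct\ge ct$.
\item Then $\dot x\ge\beta a c t$, so $x(t)\ge x(0)+\tfrac12\beta a c t^2\to\infty$.
\end{itemize}
Therefore $p_1/(1-p_1)=e^{x}\to\infty$, and so $p_1=1-1/(1+e^{x})\to1$. In fact the convergence is at rate $1-p_1(t)\le e^{-x(0)-\beta a c t^2/2}$.

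\textbf{Main obstacle.} The delicate point is avoiding circularity in the invariance step: the monotonicity of $y$ needs $x>0$, and the positivity of $x$ needs $y\ge0$. The continuation argument with $T$ resolves this. The edge case $y(0)=0$ requires care, since $\dot x(0)=0$ there. It is handled because $\dot y(0)>0$ makes $y>0$ immediately after $t=0$, so $x$ never decreases below $x(0)$.

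We also note that the limit is not the boundary attained in finite time. The vertex $p_1=1$ is approached only asymptotically, which is consistent with its description as a vertex attractor.
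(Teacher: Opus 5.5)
Your proof is correct and follows the paper's argument essentially step for step. A continuation argument keeps $p_1(t)\ge p_1(0)$ (your $x(t)\ge x(0)$), which gives the uniform bound $\dot{\Delta\kappa}\ge c=\alpha(2p_1(0)-1)=\alpha\tanh(x(0)/2)$, then linear growth of $\Delta\kappa$ and quadratic growth of the log-odds; your logit/$\tanh$ rewriting, the explicit global-existence remark via the Lipschitz bound (which the paper takes for granted), and the convergence rate are minor additions on top of the same route.
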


\begin{proof}
Let $T=\sup\{t\ge0: p_1(u)>\tfrac12\text{ for all }u\in[0,t)\}$, which is
positive by continuity. On $[0,T)$, \eqref{eq:baseline_two_skill_gap} gives
$\dot{\Delta\kappa}>0$, so $\Delta\kappa(t)\ge\Delta\kappa(0)\ge0$, and
\eqref{eq:baseline_two_ratio} then gives $\frac{d}{dt}\log\frac{p_1}{1-p_1}\ge0$,
so $p_1(t)\ge p_1(0)>\tfrac12$. If $T$ were finite, continuity would give
$p_1(T)=\tfrac12$, a contradiction; hence $p_1(t)\ge p_1(0)$ for all $t\ge0$.
Consequently $\dot{\Delta\kappa}(t)\ge c:=\alpha(2p_1(0)-1)>0$, so $\Delta\kappa$ is
strictly increasing with $\Delta\kappa(t)\ge\Delta\kappa(0)+ct$, and
\[
\log\frac{p_1(t)}{1-p_1(t)}
\ge
\log\frac{p_1(0)}{1-p_1(0)}+\beta a\Big(\Delta\kappa(0)\,t+\tfrac{c}{2}t^2\Big)
\longrightarrow\infty,
\]
so $p_1(t)\to 1$.
\end{proof}

\paragraph{Extension to $K>2$ (pairwise domination).}
For general $K$, assume $p_k(0)>0$ for every expert. The ratio identity
\eqref{eq:baseline_ratio_general} then holds for every pair $(i,j)$.
A competence advantage $\kappa_{k^\star}(t)>\kappa_j(t)$ over a finite interval increases
$\log(p_{k^\star}/p_j)$ during that interval. The asymptotic behavior follows
from the integrated competence advantage: integrating \eqref{eq:baseline_ratio_skill} gives
\begin{equation}
\log\frac{p_{k^\star}(t)}{p_j(t)}
=\log\frac{p_{k^\star}(0)}{p_j(0)}
+\beta a\int_0^t\big(\kappa_{k^\star}(u)-\kappa_j(u)\big)\,du.
\label{eq:baseline_ratio_integrated}
\end{equation}
Consequently, a sufficient condition for $p_{k^\star}(t)\to1$ is
\[
\int_0^t\big(\kappa_{k^\star}(u)-\kappa_j(u)\big)\,du\longrightarrow+\infty
\qquad\text{for every }j\neq k^\star.
\]
Under this condition, all ratios $p_j(t)/p_{k^\star}(t)$ tend to zero, and
$p_{k^\star}(t)=[1+\sum_{j\neq k^\star}p_j(t)/p_{k^\star}(t)]^{-1}\to1$.
This integrated-advantage condition extends the two-expert result to arbitrary $K$.

\subsection{GAR -- Case 1: Load-Normalized Anti-Amplification under Homogeneous Mixing}
\label{sec:ours_case1}

We first analyze the load dependence of the alignment objective under the independent finite-window model.
Assume that each expert receives a statistically homogeneous mixture
of manifolds, i.e.,
\[
w_{kc} = \frac{1}{C}
\quad
\text{for all } k,c,
\]
so that all experts have identical internal conflict statistics,
but their total loads $L_k$ may differ.

\paragraph{Expected coherence under homogeneous mixing.}
Substituting $w_{kc}=1/C$ into \eqref{eq:coherence_expansion} gives the exact identity
\begin{equation}
\mathbb{E}\|G_k\|^2
=
\gamma L_k^2+(\nu-\gamma)L_k,
\qquad
\gamma:=\rho+
(1-\rho)/C .
\label{eq:case1_coherence}
\end{equation}
For independent draws, $\gamma=\|\mathbb{E}\tilde g\|^2\geq0$ and
$\nu-\gamma=\mathbb{E}\|\tilde g-\mathbb{E}\tilde g\|^2\geq0$.

\paragraph{Alignment utility scaling.}
Define the per-expert utility $U_k(P):=\|G_k(P)\|^2/(d_k(P)+\epsilon)$, the
$k$-th term of $\cA_\epsilon(P;W)$ under the gradient-Gram affinity. At a hard
assignment $d_k(P)=L_k$, so
\[
U_k
=
\frac{\|G_k\|^2}{L_k+\epsilon}.
\]
Taking expectation while keeping the stabilizer explicit gives
\begin{equation}
\mathbb{E}[U_k]
=
\frac{\gamma L_k^2+(\nu-\gamma)L_k}{L_k+\epsilon}
=
\gamma L_k+\frac{(\nu-\gamma-\gamma\epsilon)L_k}{L_k+\epsilon}.
\label{eq:case1_U}
\end{equation}

Thus, for fixed $\epsilon\geq0$, load normalization converts the quadratic
leading term into $\mathbb{E}[U_k]=\gamma L_k+O(1)$. The exact per-unit utility is
\[
\frac{\mathbb{E}[U_k]}{L_k}
=\frac{\gamma L_k+\nu-\gamma}{L_k+\epsilon}
\longrightarrow\gamma.
\]
It exceeds $\gamma$ by $(\nu-\gamma-\gamma\epsilon)/(L_k+\epsilon)$, a gap
that vanishes as the load grows.

\paragraph{Marginal alignment score.}
Routing decisions depend on the row-wise derivative of $U_k(P)$ with respect to
$p_{mk}$, evaluated where $d_k(P)=L_k$:
\begin{equation}
s_{mk}(P)
=
\frac{\partial U_k}{\partial p_{mk}}
=
\frac{2\langle G_k,\tilde g_m\rangle}{L_k+\epsilon}
-
\frac{\|G_k\|^2}{(L_k+\epsilon)^2}.
\label{eq:marginal_utility}
\end{equation}
Evaluate this derivative at a hard assignment containing observation $m$,
so that $m$ is one of the $L_k$ independent samples already included in $G_k$.
The self term must then be retained:
\[
\mathbb{E}\langle G_k,\tilde g_m\rangle
=\nu+(L_k-1)\gamma.
\]
Combining this with \eqref{eq:case1_coherence} gives the exact expectation
\begin{equation}
\mathbb{E}[s_{mk}(P)]
=
\gamma\left(1-\frac{\epsilon^2}{(L_k+\epsilon)^2}\right)
+(\nu-\gamma)\frac{L_k+2\epsilon}{(L_k+\epsilon)^2}.
\label{eq:marginal_score_saturation}
\end{equation}
In particular, at $\epsilon=0$, this equals
$\gamma+(\nu-\gamma)/L_k$, which exceeds $\gamma$ whenever the observations
have nonzero variance. For fixed $\epsilon\geq0$, the expected score still
converges to the finite limit $\gamma$ as $L_k\to\infty$. The expectation is
taken over an observation in the same finite sampling window.

\begin{proposition}[Load-normalized anti-amplification under homogeneous conflict]
\label{prop:case1_antiamplification}
Under the independent finite-window model with homogeneous mixing
($w_{kc}=1/C$), integer $L_k\geq1$, finite $\nu$, and fixed $\epsilon\geq0$,
the load-normalized alignment channel removes the quadratic load amplification in
$\mathbb{E}\|G_k\|^2$: $\mathbb{E}[U_k]=\gamma L_k+O(1)$, and the expected
in-window marginal score in \eqref{eq:marginal_score_saturation} converges to
$\gamma$. At $\epsilon=0$ it equals $\gamma+(\nu-\gamma)/L_k$, which is
non-increasing in $L_k$, so additional load does not raise the expected
marginal score.
\end{proposition}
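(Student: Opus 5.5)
The plan is to assemble the proposition from three exact finite-window computations under the independent sampling model: the aggregate, the utility, and the in-window marginal score.

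\textbf{Aggregate.} Start from \eqref{eq:coherence_expansion} with $w_{kc}=1/C$. Then $\sum_c w_{kc}^2=1/C$ and $\sum_{c\ne c'}w_{kc}w_{kc'}=1-1/C$. The pairwise coefficient is therefore $1/C+\rho(1-1/C)=\gamma$, which gives \eqref{eq:case1_coherence}. To justify $\gamma\ge0$ and $\nu-\gamma\ge0$, I will note that under uniform mixing $\mathbb{E}\tilde g=\bar u:=C^{-1}\sum_c u_c$. This mean satisfies $\|\bar u\|^2=C^{-2}\sum_{a,b}\Gamma_{ab}=\gamma$, and $\nu-\gamma$ is the total variance. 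Both are nonnegative, and this is also consistent with the PSD lower bound on $\rho$.

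\textbf{Utility.} At a hard assignment with $d_k=L_k$, the denominator $L_k+\epsilon$ is deterministic, so
\[
\mathbb{E}[U_k]=\bigl(\gamma L_k^2+(\nu-\gamma)L_k\bigr)/(L_k+\epsilon).
\]
Writing $\gamma L_k^2=\gamma L_k(L_k+\epsilon)-\gamma\epsilon L_k$ gives the decomposition in \eqref{eq:case1_U}. The remainder $(\nu-\gamma-\gamma\epsilon)L_k/(L_k+\epsilon)$ is bounded in absolute value by $|\nu-\gamma-\gamma\epsilon|$ uniformly in $L_k\ge1$, which is the $O(1)$ claim. This also shows the quadratic term has been removed.

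\textbf{Marginal score.} Take the expectation of \eqref{eq:marginal_utility}, again with the denominators fixed. Because $m$ is one of the $L_k$ samples in $G_k$, split $\langle G_k,\tilde g_m\rangle=\|\tilde g_m\|^2+\sum_{j\ne m}\langle\tilde g_j,\tilde g_m\rangle$. By independence and the common mean $\bar u$, each cross term has expectation $\|\bar u\|^2=\gamma$, so $\mathbb{E}\langle G_k,\tilde g_m\rangle=\nu+(L_k-1)\gamma$. Substituting this and \eqref{eq:case1_coherence}, and writing $L=L_k$, $s=L+\epsilon$, gives
\[
\mathbb{E}[s_{mk}]=\frac{2\gamma L+2(\nu-\gamma)}{s}-\frac{\gamma L^2+(\nu-\gamma)L}{s^2}.
\]
I then collect the $\gamma$ terms and the $(\nu-\gamma)$ terms separately:
\[
\gamma\,\frac{2Ls-L^2}{s^2}+(\nu-\gamma)\,\frac{2s-L}{s^2}.
\]
Using $2Ls-L^2=s^2-\epsilon^2$ and $2s-L=L+2\epsilon$, this is exactly \eqref{eq:marginal_score_saturation}.

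\textbf{Limits and monotonicity.} For the limit, both correction terms are $O(1/L)$ as $L\to\infty$ with $\epsilon$ fixed, so the expected score tends to $\gamma$. At $\epsilon=0$ the expression reduces to $\gamma+(\nu-\gamma)/L_k$. Since $\nu-\gamma\ge0$, this is non-increasing in $L_k$, and strictly decreasing when the variance is positive.

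\textbf{Main obstacle.} The step needing the most care is the in-window marginal score. The self term $\nu$ must be retained and not replaced by $\gamma$, and the algebra must close exactly. I will verify the closed form by checking the $L=1$ case directly, where both $G_k$ and $\langle G_k,\tilde g_m\rangle$ reduce to quantities involving $\tilde g_m$ alone. The nonnegativity of $\nu-\gamma$ is the one structural fact on which the monotonicity claim depends, and I will cite the variance identity above for it.
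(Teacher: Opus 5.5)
Your proposal is correct and follows essentially the same route as the paper: specialize \eqref{eq:coherence_expansion} to $w_{kc}=1/C$ to get $\gamma$, divide by the deterministic denominator $L_k+\epsilon$, and substitute the in-window self term $\mathbb{E}\langle G_k,\tilde g_m\rangle=\nu+(L_k-1)\gamma$ into \eqref{eq:marginal_utility}. Monotonicity at $\epsilon=0$ then follows from $\nu-\gamma=\mathbb{E}\|\tilde g-\mathbb{E}\tilde g\|^2\ge 0$; your regrouping via $2Ls-L^2=s^2-\epsilon^2$, the uniform bound on the remainder, and the $L_k=1$ check only make explicit the algebra the paper leaves implicit.
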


\begin{proof}
The utility identity follows by dividing \eqref{eq:case1_coherence} by
$L_k+\epsilon$. Substitution of the in-window self term into
\eqref{eq:marginal_utility} yields \eqref{eq:marginal_score_saturation}.
Taking $L_k\to\infty$ with the mixture and $\epsilon$ fixed gives the limits. At
$\epsilon=0$, \eqref{eq:marginal_score_saturation} reduces to
$\gamma+(\nu-\gamma)/L_k$, which is non-increasing in $L_k$ because $\nu\geq\gamma$.
\end{proof}

\paragraph{Remark.}
In contrast to the baseline dynamics,
where higher load can increase learning speed and further increase routing probability,
the expected marginal score of the normalized alignment channel does not grow
with load in this sampling model. The static coherence preference and
the leading-order separation argument are treated separately in
Appendix~\ref{sec:ours_case2}--\ref{sec:ours_case3}.

\subsection{GAR -- Case 2: Static Directional Tilt and Coherence Preference}
\label{sec:ours_case2}

We compare fixed sampling mixtures at the same load. Enriching a mixture in
one manifold tilts the mean surrogate aggregate toward that manifold and
increases the expected coherence utility relative to uniform mixing.

\paragraph{Setup (mode prototype model).}
Under \S\ref{sec:idealized_setting}, $\mathbb{E}[\tilde g\mid c]=u_c$ and
$\|u_c\|=1$. Directional alignment and pairwise coherence are both governed
by the single prototype-overlap gap $1-\rho>0$.
Let expert $k$ receive $L_k$ independent observations from mixture
probabilities $w_{kc}$, with integer $L_k\geq1$ fixed. Each mode pair obeys
the common coherence and prototype-overlap conditions in
Section~\ref{sec:idealized_setting}.

\begin{lemma}[Directional tilt of the surrogate aggregate]
\label{lem:tilt}
Under the fixed-mixture independent sampling model, the detached surrogate aggregate
\[
G_k = \sum_{m=1}^{L_k}\tilde g_m
\]
satisfies
\begin{equation}
\mathbb{E}\!\left[G_k \mid w_{k\cdot}\right]
=
L_k \sum_{c=1}^C w_{kc}\,u_c .
\label{eq:tilt_mean}
\end{equation}
\end{lemma}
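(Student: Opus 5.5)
The plan is to apply linearity of expectation together with the tower property over the latent mode labels. Under the fixed-mixture independent sampling model of Section~\ref{sec:idealized_setting}, the aggregate $G_k$ is the unweighted sum of $L_k$ observations. Each observation $\tilde g_m$ is drawn by first sampling a mode $c_m$ with $\Pr(c_m=c\mid w_{k\cdot})=w_{kc}$ and then setting $\tilde g_m=u_{c_m}+\xi_m$. Because $L_k$ is a fixed integer, not a random count, the expectation of the sum is simply the sum of the $L_k$ individual expectations.

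For a single observation, I would condition on its mode and use the gradient concentration assumption $\mathbb{E}[\tilde g\mid c]=u_c$, which is equivalent to $\mathbb{E}[\xi\mid c]=0$. The law of total expectation then gives
\[
\mathbb{E}[\tilde g_m\mid w_{k\cdot}]=\sum_{c=1}^C w_{kc}\,\mathbb{E}[\tilde g_m\mid c_m=c]=\sum_{c=1}^C w_{kc}\,u_c .
\]
This quantity does not depend on $m$, since the draws are identically distributed. Summing over the $L_k$ draws then yields \eqref{eq:tilt_mean} directly. Independence across draws is not actually needed here; identical distribution of each draw under the mixture suffices. I would note this explicitly so that the proof does not appear to rely on more than it uses.

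The main obstacle is only a minor bookkeeping point: the result requires that expectations exist and that the sampling weights $w_{kc}$ describe the law of each draw, not the realized empirical proportions in a given window. The first holds because the noise has finite covariance, $\mathrm{Cov}(\xi\mid c)\preceq\sigma^2 I$, so $\mathbb{E}\|\tilde g\|<\infty$. The second holds by the model definition. I would close by remarking on the consequence. Since $\|u_c\|=1$ and $\Gamma$ is equiangular, we have $\langle \mathbb{E}G_k,u_a\rangle=L_k(\rho+(1-\rho)w_{ka})$, so increasing $w_{ka}$ tilts the mean aggregate toward $u_a$. This sets up the alignment lemma that follows.
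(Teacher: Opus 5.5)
Your proposal is correct and takes the same approach as the paper. The paper likewise gives each draw the mean $\sum_c w_{kc}u_c$, which is your tower-property step over the mode label, and then sums the $L_k$ fixed draws by linearity. Your additional remarks are valid refinements the paper leaves implicit: independence is not needed, and the bounded conditional covariance guarantees that the expectations exist.
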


\begin{proof}
Each observation has mean $\sum_cw_{kc}u_c$. By linearity,
\[
\mathbb{E}[G_k\mid w_{k\cdot}]
=
\sum_{m=1}^{L_k}\sum_{c=1}^Cw_{kc}\,u_c
=
L_k\sum_c w_{kc}\,u_c.
\]
\end{proof}

\paragraph{Remark.}
Equation~\eqref{eq:tilt_mean} shows that any small bias in $w_{kc}$ immediately tilts the mean direction
of $G_k$ toward the corresponding prototype $u_c$.

Two technical lemmas used below are stated in Section~\ref{app:directional_tilt_lemmas}:
Lemma~\ref{lem:tilt_alignment} formalizes the same-manifold alignment advantage,
and Lemma~\ref{lem:purity_increases_U} shows that the per-load coherence utility
strictly increases with mixture purity.

\paragraph{Remark.}
For the symmetric perturbation below, independent samples from the enriched
manifold become more aligned with the mean $G_k$, while those from each
depleted manifold become less aligned. This comparison concerns the
alignment component of the surrogate, with the mixtures held fixed.

\paragraph{Static comparison with uniform mixing.}
Consider a manifold $c^\star$ and an expert $k$ with a tiny initial enrichment
\[
w_{k c^\star} = \frac{1}{C} + \delta,\qquad \delta>0\ \text{arbitrarily small},
\]
with the remaining mass spread across other manifolds.
Then:
\begin{itemize}
\item By Lemma~\ref{lem:tilt}, $G_k$ tilts toward $u_{c^\star}$.
\item By Lemma~\ref{lem:tilt_alignment}, mode-$c^\star$ samples have larger expected alignment
$\mathbb{E}\langle G_k,\tilde g\rangle$ than under uniform mixing.
\item By Lemma~\ref{lem:purity_increases_U}, the expert's coherence utility $U_k$ increases as its mixture becomes purer.
\end{itemize}
These comparisons quantify the objective preference for coherent observation
clusters at fixed load.

\begin{proposition}[Static directional tilt and coherence preference]
\label{prop:formal_self_reinforcing_specialization}
Under the independent sampling model, assume $C>1$, integer $L_k>1$,
$\epsilon\geq0$, and $\rho<1$.
Suppose expert $k$ is perturbed from a uniform mixture toward mode $c^\star$ by
\[
w_{k c^\star}=\frac{1}{C}+\delta,
\qquad
w_{kc}=\frac{1}{C}-\frac{\delta}{C-1}\quad(c\neq c^\star),
\]
where $0<\delta<(C-1)/C$.
Then the expected alignment of an independent mode-$c^\star$ sample with $G_k$ increases by
\[
L_k\delta(1-\rho)>0
\]
relative to uniform mixing. Moreover,
\[
\sum_c w_{kc}^2
=
\frac{1}{C}+\frac{C}{C-1}\delta^2
>
\frac{1}{C},
\]
so the expected coherence utility is larger than under uniform mixing by
\[
\frac{L_k(L_k-1)}{L_k+\epsilon}
(1-\rho)
\frac{C}{C-1}\delta^2>0.
\]
The comparison holds for the stated fixed mixtures.
\end{proposition}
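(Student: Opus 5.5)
The plan is to reduce everything to explicit expectations under the independent finite-window model, using Lemma~\ref{lem:tilt} for the mean of $G_k$ and the coherence expansion \eqref{eq:coherence_expansion} for $\mathbb{E}\|G_k\|^2$. The proposition makes three claims: the alignment increment, the purity identity, and the utility gap. I would verify them in that order.

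\textbf{Alignment.} Take a mode-$c^\star$ sample $\tilde g$ drawn independently of the $L_k$ observations forming $G_k$. Independence together with Lemma~\ref{lem:tilt} gives
\[
\mathbb{E}\langle G_k,\tilde g\rangle=\Big\langle L_k\sum_c w_{kc}u_c,\,u_{c^\star}\Big\rangle=L_k\sum_c w_{kc}\Gamma_{cc^\star}.
\]
The equiangular Gram matrix reduces this to $L_k\big(\rho+(1-\rho)w_{kc^\star}\big)$. Substituting $w_{kc^\star}=1/C+\delta$ and subtracting the uniform value $w_{kc^\star}=1/C$ leaves exactly $L_k\delta(1-\rho)$. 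This is positive since $\rho<1$ and $\delta>0$, so this step should be immediate.

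\textbf{Purity.} Expand $\sum_c w_{kc}^2=(1/C+\delta)^2+(C-1)\big(1/C-\delta/(C-1)\big)^2$. The cross terms are $2\delta/C$ and $-2\delta/C$, so they cancel. What remains is $1/C+\delta^2+\delta^2/(C-1)=1/C+C\delta^2/(C-1)$. The constraint $\delta<(C-1)/C$ only ensures the other weights stay positive, so this is a valid mixture.

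\textbf{Utility gap.} Using $\sum_{c\neq c'}w_{kc}w_{kc'}=1-\sum_c w_{kc}^2$, rewrite \eqref{eq:coherence_expansion} as
\[
\mathbb{E}\|G_k\|^2=L_k\nu+L_k(L_k-1)\Big(\rho+(1-\rho)\sum_c w_{kc}^2\Big).
\]
This is exact, not only leading order, because the model takes $\nu$ to be mode-independent. Divide by the fixed load $L_k+\epsilon$, which is the hard-assignment form of $U_k$. Subtracting the uniform case then cancels the $\nu$ and $\rho$ terms and leaves
\[
\frac{L_k(L_k-1)}{L_k+\epsilon}(1-\rho)\Big(\sum_c w_{kc}^2-\frac1C\Big),
\]
and inserting the purity identity gives the stated gap. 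Positivity needs $L_k>1$, $\rho<1$ and $\delta\neq0$, all of which are assumed. This argument is essentially Lemma~\ref{lem:purity_increases_U}, specialized to this perturbation.

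The main point needing care is not algebraic. It is making sure the independent $c^\star$ sample in the alignment claim is taken outside the window, so no self term $\nu$ enters, in contrast to the in-window score of Proposition~\ref{prop:case1_antiamplification}. It is also making sure the load $L_k$ is held fixed across the two mixtures, so that the denominators agree and the comparison is purely static.
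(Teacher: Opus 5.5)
Your proposal is correct and follows essentially the paper's own route. You get the alignment increment from the tilted mean $L_k\sum_c w_{kc}u_c$ of Lemmas~\ref{lem:tilt} and~\ref{lem:tilt_alignment}, and the purity identity by direct expansion. For the utility gap you use the affine-in-purity slope of Lemma~\ref{lem:purity_increases_U}, which you rederive inline from \eqref{eq:coherence_expansion} at fixed load $L_k$.
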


\begin{proof}
For a mode-$c^\star$ sample, Lemma~\ref{lem:tilt_alignment} gives
\[
\mathbb{E}\!\big[\langle \mathbb{E}[G_k\mid w_{k\cdot}],\tilde g\rangle\mid c^\star\big]
=
L_k\left(w_{k c^\star}+
(1-w_{k c^\star})\rho\right).
\]
Relative to the uniform value obtained by setting $w_{k c^\star}=1/C$, the difference is
\[
L_k\delta(1-\rho)>0.
\]
For the purity term,
\[
\sum_c w_{kc}^2
=
\left(\frac{1}{C}+\delta\right)^2
+(C-1)\left(\frac{1}{C}-\frac{\delta}{C-1}\right)^2
=
\frac{1}{C}+\frac{C}{C-1}\delta^2.
\]
This is strictly larger than the uniform purity $1/C$ for any $\delta>0$.
Multiplying this purity increase by the slope in
Lemma~\ref{lem:purity_increases_U} gives the stated utility difference.
\end{proof}

\begin{remark}[Relation between the alignment cases]
Case~2 characterizes directional alignment and coherence at fixed load.
Case~3 varies the mode-to-expert assignment to analyze the corresponding
leading-order separation preference.
\end{remark}

\subsection{Technical Lemmas for Static Directional Tilt}
\label{app:directional_tilt_lemmas}

\begin{lemma}[Tilt implies same-manifold alignment advantage]
\label{lem:tilt_alignment}
For an independent sample from manifold $c$, the expected alignment with the mean surrogate aggregate satisfies
\begin{equation}
\mathbb{E}\!\left[\langle \mathbb{E}[G_k\mid w_{k\cdot}],\, \tilde g \rangle \,\middle|\, c \right]
=
L_k\left(
w_{kc}
+
\sum_{c'\neq c}w_{kc'}\,\rho
\right).
\label{eq:tilt_align_expected}
\end{equation}
Since $\sum_{c'}w_{kc'}=1$, the right-hand side can be written as
$L_k[w_{kc}+(1-w_{kc})\rho]$,
which is strictly increasing in $w_{kc}$ whenever
$1>\rho$.
\end{lemma}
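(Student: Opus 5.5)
The plan is to substitute the mean from Lemma~\ref{lem:tilt} into the inner product and use linearity of expectation over the independent sample $\tilde g$. By Lemma~\ref{lem:tilt}, $\mathbb{E}[G_k\mid w_{k\cdot}]=L_k\sum_{c'}w_{kc'}u_{c'}$ is a deterministic vector once the mixture is fixed. For an independent sample $\tilde g$ drawn from manifold $c$, the inner product is therefore linear in $\tilde g$:
\[
\mathbb{E}\!\left[\langle \mathbb{E}[G_k\mid w_{k\cdot}],\tilde g\rangle\,\middle|\,c\right]
=L_k\sum_{c'}w_{kc'}\,\langle u_{c'},\mathbb{E}[\tilde g\mid c]\rangle .
\]

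Next, invoke the gradient concentration model $\mathbb{E}[\tilde g\mid c]=u_c$. Each term then becomes $\Gamma_{c'c}=\langle u_{c'},u_c\rangle$. The equiangular specialization gives $\Gamma_{cc}=1$ and $\Gamma_{c'c}=\rho$ for $c'\neq c$. Splitting the sum into the diagonal term $c'=c$ and the off-diagonal terms yields $L_k\big(w_{kc}+\sum_{c'\neq c}w_{kc'}\rho\big)$, which is \eqref{eq:tilt_align_expected}.

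For the rewritten form, use $\sum_{c'}w_{kc'}=1$ to replace $\sum_{c'\neq c}w_{kc'}$ by $1-w_{kc}$. This gives $L_k\big[\rho+(1-\rho)w_{kc}\big]$, an affine function of $w_{kc}$ with slope $L_k(1-\rho)$. Since $L_k\ge1$ and $\rho<1$, the slope is positive, so the expression is strictly increasing in $w_{kc}$.

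No step is a real obstacle. The one point that needs care is that the sample $\tilde g$ is independent of the window defining $G_k$. This is why the noise $\xi$ contributes nothing and no self term $\nu$ appears, in contrast with the in-window calculation behind \eqref{eq:marginal_score_saturation}. The pairing is against the conditional mean $\mathbb{E}[G_k\mid w_{k\cdot}]$, so $\mathbb{E}[\xi\mid c]=0$ is all that is needed.
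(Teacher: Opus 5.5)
Your proposal is correct and matches the paper's proof: substitute the conditional mean from Lemma~\ref{lem:tilt}, use linearity to reduce to $L_k\sum_{c'}w_{kc'}\,\mathbb{E}[\langle u_{c'},\tilde g\rangle\mid c]$, evaluate each term as $1$ or $\rho$ from the prototype geometry (you get this via $\mathbb{E}[\tilde g\mid c]=u_c$), and read off the positive slope $L_k(1-\rho)$. Your remark is also accurate: since the pairing is with a deterministic conditional mean, only $\mathbb{E}[\xi\mid c]=0$ is needed, and no self term $\nu$ can appear.
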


\begin{proof}[Proof of Lemma~\ref{lem:tilt_alignment}]
Using Lemma~\ref{lem:tilt}, conditioning on $c$ and linearity,
\[
\mathbb{E}\!\left[\langle \mathbb{E}[G_k\mid w_k],\, \tilde g \rangle \mid c\right]
=
L_k\sum_{c'}w_{kc'}\,
\mathbb{E}\!\left[\langle u_{c'}, \tilde g\rangle \mid c\right].
\]
The prototype-overlap definitions give the two cases
$1$ for $c'=c$ and $\rho$ for $c'\neq c$,
which yields \eqref{eq:tilt_align_expected}. The coefficient of $w_{kc}$
is $1-\rho>0$.
\end{proof}

\begin{lemma}[Tilt increases per-load coherence utility for purer mixtures]
\label{lem:purity_increases_U}
Under the independent sampling model, for fixed integer $L_k>1$ and $\epsilon\geq0$, the finite-window expected utility
\[
\mathbb{E}[U_k]
=
\frac{L_k\nu+L_k(L_k-1)\left(
\rho+(1-\rho)\sum_cw_{kc}^2
\right)}{L_k+\epsilon}
\]
is strictly increasing in the purity measure $\sum_cw_{kc}^2$ whenever
$1>\rho$.
\end{lemma}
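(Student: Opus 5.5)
The plan is to treat $\mathbb{E}[U_k]$ as an affine function of the purity variable $s:=\sum_c w_{kc}^2$ with all other quantities held fixed, and then check the sign of its slope.

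First, fix $L_k$ and $\epsilon$. Rewrite the numerator using the finite-window expansion \eqref{eq:coherence_expansion}:
\[
L_k\nu+L_k(L_k-1)\Big(\rho+(1-\rho)s\Big).
\]
For this step I use the identity
\[
\sum_{c\neq c'}w_{kc}w_{kc'}=\Big(\sum_c w_{kc}\Big)^2-\sum_c w_{kc}^2=1-s.
\]
It turns the bracket in \eqref{eq:coherence_expansion} into $s+(1-s)\rho=\rho+(1-\rho)s$. This is exactly the displayed numerator in the statement, so the formula for $\mathbb{E}[U_k]$ is justified.

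Next I confirm that the load does not change with the mixture. In the finite-window model $G_k$ is an unweighted sum of exactly $L_k$ draws, and at a hard assignment $d_k(P)=L_k$. The denominator $L_k+\epsilon$ therefore does not depend on the $w_{kc}$. Likewise, $\nu$ is mode-independent by assumption, so $L_k\nu$ does not depend on the mixture either.

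It follows that $\mathbb{E}[U_k]=a+b\,s$, where $a$ collects the mixture-independent terms and the slope is
\[
b=\frac{L_k(L_k-1)(1-\rho)}{L_k+\epsilon}.
\]
Each factor of $b$ is positive: $L_k>1$ gives $L_k(L_k-1)>0$, $L_k+\epsilon>0$ since $\epsilon\ge0$, and $1-\rho>0$ by hypothesis. An affine function with positive slope is strictly increasing, which proves the claim.

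This is also the slope used in Proposition~\ref{prop:formal_self_reinforcing_specialization}, so the stated utility gap there follows.

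I do not expect a real obstacle. The one point needing care is making sure that only $s$ varies. This holds because the equiangular specialization makes the pairwise term depend on the mixture through $\sum_c w_{kc}^2$ alone, and fails for a general prototype Gram matrix $\Gamma$, where the pairwise term is $w_k^\top\Gamma w_k$, as the paper itself notes.
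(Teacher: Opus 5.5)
Your proposal is correct and matches the paper's proof. Both fix $L_k$ and $\epsilon$ and observe that $\mathbb{E}[U_k]$ is affine in $\sum_c w_{kc}^2$ with slope $L_k(L_k-1)(1-\rho)/(L_k+\epsilon)>0$. Your explicit check that $\sum_{c\neq c'}w_{kc}w_{kc'}=1-\sum_c w_{kc}^2$ reduces the bracket in \eqref{eq:coherence_expansion} to the displayed form is a step the paper leaves implicit.
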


\begin{proof}[Proof of Lemma~\ref{lem:purity_increases_U}]
The expression follows from \eqref{eq:coherence_expansion}. For fixed $L_k$,
it is affine in $\sum_cw_{kc}^2$ with slope
\[
\frac{L_k(L_k-1)(1-\rho)}{L_k+\epsilon}>0.
\]
Thus purer mixtures have larger expected per-load coherence utility.
\end{proof}

\subsection{GAR -- Case 3: Collapsed Assignments Are Not Local Maxima}
\label{sec:ours_case3}

We now show that, under the geometric mode assumptions in \S\ref{sec:idealized_setting},
a collapsed assignment is \emph{not a local maximum} of the leading-order alignment
surrogate over continuous mode-to-expert assignments. An arbitrarily small
mode-separating perturbation strictly increases this surrogate.

Here the leading-order objective is defined at $\epsilon=0$ after dropping
finite-window corrections; its boundary is analyzed directly in that limit.
The theorem characterizes a static separation preference of this limiting
objective rather than the trajectory of a trained router.

\paragraph{Alignment surrogate as a purity functional.}
Recall $U_k=\|G_k\|^2/(L_k+\epsilon)$. Setting $\epsilon=0$ and retaining only
the leading $O(L_k)$ term after dividing \eqref{eq:coherence_simplified} by
$L_k$ gives the purity-dependent utility
\begin{equation}
\mathbb{E}[U_k]
\;\approx\;
L_k\Big(
\rho
+
(1-\rho)\sum_{c=1}^C w_{kc}^2
\Big).
\label{eq:Uk_purity_form}
\end{equation}
Thus the expected alignment reward has purity-dependent part
\begin{equation}
\cR(p)
=
\rho\sum_k L_k+
(1-\rho)\sum_k L_k\sum_cw_{kc}^2 .
\label{eq:R_def}
\end{equation}
Since $\sum_k L_k$ is fixed, maximizing $\cR$ is equivalent to maximizing
\begin{equation}
\Phi:=\sum_{k=1}^K L_k\sum_{c=1}^C w_{kc}^2.
\label{eq:purity_functional}
\end{equation}
Because $1>\rho$, increasing $\Phi$ strictly increases $\cR$.
Here loads are continuous mode masses. If $x_{kc}\geq0$ is the mass of mode $c$
assigned to expert $k$, then $L_k=\sum_c x_{kc}$ and
$L_k\sum_cw_{kc}^2=\sum_c x_{kc}^2/L_k$ for $L_k>0$.
We define the contribution of an empty expert to be zero, its continuous
extension at $L_k=0$. Locality is measured in these mode masses.

\paragraph{Collapsed vertex implies maximal mixing.}
Consider the (collapsed) vertex state in which a single expert $k^\star$ receives all load:
\[
L_{k^\star}=L,\qquad L_j=0\ \ (j\neq k^\star).
\]
Under the uniform manifold prior $\Pr(c)=1/C$, this implies the dominant expert receives the
entire mixture, i.e.,
\[
w_{k^\star c}=\frac{1}{C}\quad \forall c.
\]
Then the weighted purity equals
\begin{equation}
\Phi_{\mathrm{coll}}
=
L\sum_{c=1}^C \left(\frac{1}{C}\right)^2
=
\frac{L}{C}.
\label{eq:Phi_coll}
\end{equation}

\paragraph{A separating perturbation strictly increases purity.}
We now show that the collapsed state is not locally optimal for $\Phi$ by constructing
an arbitrarily small perturbation that increases it.

Fix any manifold $c_0$. Move an infinitesimal amount of load $\delta>0$ consisting solely of
mode-$c_0$ observations from expert $k^\star$ to an unused expert $j$.
Then
\[
L_j=\delta,\qquad L_{k^\star}=L-\delta.
\]
Expert $j$ becomes pure for $c_0$, hence
\[
w_{j c_0}=1,\qquad w_{j c}=0\ (c\neq c_0),
\quad\Rightarrow\quad
\sum_c w_{jc}^2 = 1.
\]
For the remaining dominant expert $k^\star$, the mode counts are
$L/C-\delta$ for $c_0$ and $L/C$ for every other mode, so
\begin{equation}
\sum_cw_{k^\star c}^2
=
\frac{(L/C-\delta)^2+(C-1)(L/C)^2}{(L-\delta)^2}
=
\frac{1}{C}+\left(1-\frac{1}{C}\right)\frac{\delta^2}{(L-\delta)^2}.
\label{eq:purity_kstar_expand}
\end{equation}
Therefore, the perturbed weighted purity satisfies
\begin{align}
\Phi_{\mathrm{pert}}
&=
(L-\delta)\sum_cw_{k^\star c}^2+\delta
\nonumber\\
&=
\frac{L}{C}
+\delta\left(1-\frac{1}{C}\right)
+\left(1-\frac{1}{C}\right)\frac{\delta^2}{L-\delta}.
\label{eq:Phi_pert_expand}
\end{align}
For any $C>1$ and sufficiently small $0<\delta<L/C$, every added term is positive:
\[
\Phi_{\mathrm{pert}}-\Phi_{\mathrm{coll}}>0.
\]

Hence $\Phi$ (and thus $\cR$) can be strictly increased by an arbitrarily small
separating perturbation.

\begin{theorem}[Collapsed assignments are not local maxima of the alignment surrogate]
\label{thm:vertex_unstable}
Consider the continuous mode-mass surrogate \eqref{eq:R_def}, with total mass
$L>0$, uniform mode masses $L/C$, $K=C>1$, and
$1>\rho$.
An assignment sending all mass to a single expert is not a local maximum of
$\cR$: every neighborhood contains a feasible mode-separating assignment with
strictly larger reward.
\end{theorem}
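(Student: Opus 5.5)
The plan is to work directly with the mode-mass parametrization already introduced before the theorem. Write $x_{kc}\ge 0$ for the mass of mode $c$ sent to expert $k$, subject to $\sum_k x_{kc}=L/C$ for every $c$. Then
\[
\cR(x)=\rho L+(1-\rho)\Phi(x),\qquad \Phi(x)=\sum_{k:\,L_k>0}\frac{\sum_c x_{kc}^2}{L_k},
\]
and an empty expert contributes zero. Since $1-\rho>0$ and $\rho L$ is constant on the feasible set, it suffices to show the following: every neighborhood, in these mode masses, of the collapsed assignment $x^{\mathrm{coll}}_{k^\star c}=L/C$ (with all other $x_{jc}=0$) contains a feasible $x$ with $\Phi(x)>\Phi(x^{\mathrm{coll}})=L/C$. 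The value $L/C$ is \eqref{eq:Phi_coll}.

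For the construction, I take the separating perturbation from the preceding paragraph. Fix a mode $c_0$ and an unused expert $j\neq k^\star$; this requires $K=C>1$. For $0<\delta<L/C$, set
\[
x_{jc_0}=\delta,\qquad x_{k^\star c_0}=L/C-\delta,
\]
and leave every other entry unchanged. The steps are then:
\begin{itemize}
\item Check feasibility: all masses are nonnegative and each column still sums to $L/C$.
\item Check locality: the perturbation moves mass $\delta$ out of one entry and into another, so in any norm on the mode masses its distance to $x^{\mathrm{coll}}$ is $O(\delta)$. It therefore lies in any prescribed neighborhood once $\delta$ is small enough.
\item Evaluate $\Phi$. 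Expert $j$ is pure and contributes $\delta^2/\delta=\delta$. Expert $k^\star$ contributes $(L-\delta)\sum_c w_{k^\star c}^2$, which \eqref{eq:purity_kstar_expand} computes. This gives \eqref{eq:Phi_pert_expand},
\[
\Phi_{\mathrm{pert}}-\Phi_{\mathrm{coll}}=\Bigl(1-\tfrac1C\Bigr)\Bigl(\delta+\tfrac{\delta^2}{L-\delta}\Bigr)>0.
\]
\end{itemize}
Multiplying this difference by $1-\rho>0$ gives $\cR_{\mathrm{pert}}>\cR_{\mathrm{coll}}$.

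I expect the main obstacle to be interpretive rather than computational. The functional $\Phi$ is not differentiable at a collapsed vertex, because the empty experts sit at $L_j=0$, where $\sum_c x_{jc}^2/L_j$ is defined only by its continuous extension. So I will avoid any gradient or Hessian argument and compare values of $\Phi$ directly. I will state explicitly that the zero convention is exactly the continuous extension: $\sum_c x_{jc}^2/L_j\le L_j\to 0$. This makes $\Phi$ continuous on the feasible polytope, so "local maximum" is meaningful in the mode-mass topology.

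I will also verify the algebra in \eqref{eq:purity_kstar_expand} independently. The direct route is
\[
(L/C-\delta)^2+(C-1)(L/C)^2=\frac{L^2}{C}-\frac{2L\delta}{C}+\delta^2,
\]
which I then divide by $L-\delta$ and compare with $L/C$. This gives a gain of $\delta\bigl(1-\tfrac1C\bigr)+\bigl(1-\tfrac1C\bigr)\tfrac{\delta^2}{L-\delta}$, matching the displayed formula.

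Finally, I will note that the choice of $k^\star$ and $c_0$ was arbitrary, so the argument covers every collapsed assignment.
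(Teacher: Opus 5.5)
Your proposal is correct and follows essentially the same argument as the paper's proof. You use the same transfer of mass $\delta$ of a single mode $c_0$ to an unused expert, evaluate $\Phi$ through \eqref{eq:purity_kstar_expand}--\eqref{eq:Phi_pert_expand}, and scale by $1-\rho>0$; your remarks that the empty-expert convention is the continuous extension (via $\sum_c x_{jc}^2/L_j\le L_j$) and your independent check of the algebra are sound refinements, not a different route.
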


\begin{proof}
The perturbation above preserves every mode's total mass and is feasible for
$0<\delta<L/C$. Its distance from the collapsed assignment tends to zero as
$\delta\to0$. Equations~\eqref{eq:R_def} and~\eqref{eq:Phi_pert_expand} give
\[
\cR_{\mathrm{pert}}-\cR_{\mathrm{coll}}
=(1-\rho)
\left(1-\frac1C\right)
\left(\delta+\frac{\delta^2}{L-\delta}\right)>0.
\]
Thus every neighborhood contains a strictly improving feasible assignment,
which excludes a local maximum.
\end{proof}

\paragraph{Synthesis.}
In the illustrative baseline model, collapse arises from a load--learning positive feedback:
more load $\Rightarrow$ faster learning $\Rightarrow$ lower loss $\Rightarrow$ more load.
The alignment channel introduces a competing geometric pressure:
mixed experts reduce the weighted purity $\Phi$ and thereby reduce the attainable alignment reward.
The constructed separation of one mode from the collapsed mixture increases
$\Phi$ and hence $\cR$. Multiplying the reward by $\lambda>0$ preserves this
static preference.

Together, the three cases establish that the expected marginal score does not grow with load,
directional preference for coherent mixtures, and an improving separation
direction at collapsed assignments under their respective assumptions.
The implemented router combines the task-loss gradient with the weighted
alignment gradient, as specified in Algorithm~\ref{alg:training}.

\section{Hyperparameters and Reproducibility}
\label{app:hyperparameters_reproducibility}

\subsection{Configuration Principles}

The frozen LoRA-FFN top-4 comparisons, the trainable classification-head and
full-FFN settings, the Qwen3-8B comparison, and the DeBERTa single-task
controls use the selection protocol below within each setting. Both top-1 comparisons follow the same protocol
(Appendices~\ref{appsubsec:top1_roberta_protocol} and~\ref{appsubsec:top1_head_protocol}). Model-training learning
rate and applicable method-specific coefficients are selected independently for
each method. Baseline first selects its learning rate, gradient clipping,
and weight decay by a coordinate-wise search followed by joint local confirmation
within each setting and mixture. All other methods inherit its clipping and
weight-decay selections, which are excluded from their own hyperparameter optimization (HPO).
The warmup ratio is fixed at $0.1$ for every method and setting.
Baseline has three optimizer search coordinates but no method-specific coefficient.
STGC+Load independently selects its model-training learning rate, $\beta_{\mathrm{STGC}}$, and $\lambda_{\mathrm{load}}$ within each setting and mixture under the shared HPO procedure.
The 1,000-update RoBERTa LoRA-FFN and classification-head coefficient ablations
cover the five E8K4 mixtures. Each setting and mixture reuses its
selected GAR configuration; only $\lambda$ varies within each
sweep, with no additional HPO. The grouped
configuration tables report the optimizer settings and method-specific
coefficients used under these protocols. All supervised endpoint summaries
average five seeds and describe variation conditional on the selected
configurations.

The trainer uses a single AdamW parameter group for all trainable
parameters, with $(\beta_1,\beta_2)=(0.9,0.999)$ and optimizer
$\epsilon_{\rm opt}=10^{-8}$. Learning rate and weight decay follow the
configuration tables; the warmup schedule is specified below. These AdamW
moment and stability settings are fixed, not additional HPO coordinates.

For readability, continuous hyperparameters are displayed to at most two
significant digits.

Final runs use the complete corresponding training split. Forward accuracy and routing-selection evaluation use the complete benchmark-provided labeled evaluation split; gradient diagnostics use the checkpoint probes defined in Appendix~\ref{app:gradient_probe_definition}. Evaluation splits, numerical
precision, and runtime resources are detailed in
Appendix~\ref{appsubsec:validation_sets_and_runtime_metadata}.

\subsection{Frozen-Backbone LoRA-FFN Configurations}
\label{appsubsec:cagrad_preselection}

The frozen LoRA-FFN comparison covers seven methods on frozen RoBERTa, DeBERTa, and
Qwen3-1.7B on five dataset mixtures containing five to eight tasks; their constituent datasets are listed in brackets in the result and configuration tables.
All five mixtures use E8K4 routing and 2,000 optimizer
updates, with a common batch schedule and evaluation protocol across methods
within each backbone and mixture.

\begin{table}[H]
\centering\small
\caption{The five task mixtures used throughout the experiments. Each named dataset supplies one supervised classification task. The two seven-task mixtures are distinguished by PAWS or MRPC; these descriptors always refer to the complete dataset sets below.}
\label{tab:task_mixtures}
\setlength{\tabcolsep}{5pt}
\renewcommand{\arraystretch}{1.08}
\begin{tabular}{@{}p{0.27\linewidth}p{0.67\linewidth}@{}}
\toprule
Mixture & Constituent classification datasets \\
\midrule
Five tasks & QNLI, BoolQ, RTE, PAWS, WiC \\
Six tasks & QNLI, BoolQ, RTE, PAWS, ANLI, CB \\
Seven tasks (PAWS) & QNLI, BoolQ, RTE, PAWS, WiC, CoLA, SST-2 \\
Seven tasks (MRPC) & QNLI, BoolQ, RTE, WiC, CoLA, SST-2, MRPC \\
Eight tasks & QNLI, BoolQ, RTE, PAWS, WiC, CoLA, SST-2, CB \\
\bottomrule
\end{tabular}
\end{table}

\paragraph{Shared batch schedule and candidate evaluation.}
A DeBERTa CAGrad development-probe comparison selected the effective per-task
batch and micro-batch sizes for each multi-task mixture. Those sizes are reused
by the compared methods and backbones, including top-1 routing, classification-head,
full-parameter FFN, and Qwen3-8B extensions. The
transferred values are the batch sizes, not CAGrad's learning rate or method
coefficients. The shared candidate-evaluation procedure uses a fixed 10\%
training-derived probe, training on the remaining 90\%, and scores
final-checkpoint equal-task macro accuracy. All methods in every
comparison use this candidate-evaluation protocol.

A multi-task update consumes one batch from each task loader. Fixed update
budgets are used because the mixed-task schedule has no single epoch count.
For a mixture of $T$ tasks in Table~\ref{tab:task_mixtures}, each update
uses $32T$ examples: a batch of 32 per task, split into four same-task
groups of eight. These schedules apply to all compared methods and backbones.

\paragraph{Optimizer and coefficient selection.}

For each backbone, adaptation setting, and mixture, Baseline is searched
first over learning rate, gradient clipping in $[0.5,2.0]$, and weight decay
in $[0,0.05]$. These three coordinates are included in Baseline's coordinate-wise
selection and local confirmation under the candidate budget below. Its selected clipping and weight
decay are then locked for all other methods in that comparison. Different
mixtures or settings may use different selected values.
The warmup ratio is fixed globally at $0.1$ and is excluded from HPO.
Learning rate increases linearly over the first 10\% of
optimizer updates and remains constant at the method's selected rate thereafter.

Each non-Baseline method tunes its own model-training learning rate and applicable
coefficients: CAGrad's $c$, GAR's
$\lambda_{\mathrm{align}}$, the respective STGC, LoadPen, and SwitchAux
coefficients, and both $\beta_{\mathrm{STGC}}$ and $\lambda_{\mathrm{load}}$ for STGC+Load. CAGrad's inner learning rate controls its gradient-combination
solver and is distinct from the model-training learning rate; it is fixed at
$0.1$ in every setting, including both top-1 extensions and the single-task
controls, and is not searched.
Gradient clipping
and weight decay remain fixed during these subsequent searches and all final
runs within each setting and mixture; warmup is always fixed. The STGC
conflict threshold is fixed at $\tau_{\mathrm{STGC}}=0$. Baseline has no
method-specific coefficient to tune.

The model-training learning-rate reference range is
$[2\times10^{-6},5\times10^{-3}]$ on a log scale.
The coefficient reference ranges are CAGrad $c\in[0.1,1.0]$,
$\lambda_{\mathrm{align}}\in[10^{-5},3\times10^{-2}]$, STGC $[0.25,4]$,
LoadPen $[10^{-6},1.1\times10^{-2}]$, and SwitchAux
$[10^{-6},1.3\times10^{-2}]$. STGC+Load uses the STGC range for
$\beta_{\mathrm{STGC}}$ and the LoadPen range for $\lambda_{\mathrm{load}}$. Selection uses the training-derived probe
described above.

HPO uses one coordinate-wise pass with eight candidate values per free
coordinate, followed by local confirmation. Initial candidates are equally
spaced on a log scale for the learning rate, $\lambda_{\mathrm{align}}$,
$\beta_{\mathrm{STGC}}$, $\lambda_{\mathrm{load}}$, and the SwitchAux coefficient,
and on a linear scale for clipping, weight decay, and CAGrad $c$. Each candidate is
evaluated with two shared random seeds, using 500 optimizer updates for
multi-task selection or one complete training-loader epoch for single-task
selection.
Final evaluations use five shared random seeds and the final-run budgets
listed for each setting; the selection and final-run budgets are distinct.
For $d$ active coordinates ($d\leq3$ for every method), local confirmation uses a
three-point Cartesian grid on all $d$ coordinates, giving $3^d$ nominal
evaluations. Each local coordinate uses the
current best value and two reproducibly sampled nearby values within its
reference range. Thus final selections need not lie on the initial grid. Baseline has $d=3$ (learning rate, clipping, and
weight decay), giving
$8\times3=24$ coordinate-sweep and $3^3=27$ local-confirmation candidates
(51 in total). CAGrad, GAR, STGC, LoadPen, and SwitchAux each have $d=2$
(learning rate and one method coefficient), giving
$8\times2=16$ coordinate-sweep and $3^2=9$ local-confirmation candidates
(25 in total). STGC+Load has $d=3$ (learning rate,
$\beta_{\mathrm{STGC}}$, and $\lambda_{\mathrm{load}}$), giving
$8\times3=24$ coordinate-sweep and $3\times3\times3=27$
local-confirmation candidates (51 in total). Fixed clip/WD settings are not
reintroduced as search axes. All methods use the same per-coordinate search
density and per-candidate evaluation protocol; total candidate counts follow
the number of free coordinates. Each method selects its own learning rate and
applicable coefficients under this shared selection procedure.

The frozen LoRA-FFN experiments use LoRA rank $16$, LoRA scaling $\alpha=16$,
and dropout $0.1$. At the final transformer layer's feed-forward block, the
base intermediate and output modules are retained, and routed LoRA experts
add hidden-to-rank-to-hidden residual deltas, following the low-rank
parameterization of LoRA \citep{lora_hu2022}. The routing
objectives use $\epsilon=10^{-8}$ for numerical stabilization. These fixed
architectural choices are held constant across methods within a setting.

All five mixtures use E8K4 and sequence length 256. RoBERTa and
DeBERTa use FP32, and Qwen3-1.7B uses BF16.

\paragraph{Task-specific prediction heads.}
The frozen top-4 LoRA-FFN models share the backbone and routed FFN modules and
use a separate linear prediction head for each task. Label indices are local
to a task: the same index can denote entailment, acceptability, or a sentiment
category in different datasets. Separate heads avoid forcing these distinct
label meanings onto the same output weights, while retaining the shared
representation in which task gradients can interact. Each head has the
output width of its task; ANLI and CB therefore use three logits, and binary
tasks use two, with no padding logits. Task identity selects the prediction
head and corresponding loss; it is not supplied as an explicit router input.
The router receives token hidden states, so any task structure in its
assignments must be learned from those representations and the training
signals. All compared methods within this setting use the same head design.

The configuration tables are grouped by backbone and task count to match the
corresponding result tables. Each row gives the final optimizer settings and
method coefficients for that comparison.

\subsubsection{RoBERTa}
Tables~\ref{tab:appendix_hparams_5_6_roberta} and~\ref{tab:appendix_hparams_7_8_roberta}
report the selected learning rates, fixed clipping and
weight-decay settings, and applicable
method-specific coefficients for each mixture.

\begin{table}[H]
\centering
\scriptsize
\caption{Final RoBERTa configurations for the five--six-task mixtures. All rows use frozen backbones with trainable rank-16 LoRA-FFN experts, E8K4 routing, FP32, weight decay $0.01$, a per-task batch size of $32$, and 2,000 optimizer updates.}
\label{tab:appendix_hparams_5_6_roberta}
\setlength{\tabcolsep}{3.5pt}
\renewcommand{\arraystretch}{1.02}
\AppendixTable{%
\begin{tabular}{@{}lrrrl@{}}
\toprule
Method & Learning rate & Grad. clip & Weight decay & Method-specific setting \\
\midrule
\multicolumn{5}{@{}l}{\textbf{[QNLI, BoolQ, RTE, PAWS, WiC]} \textnormal{(5 tasks)}} \\*
Baseline  & $1.0\mathrm{e}{-3}$ & $1.1$ & $0.01$ & -- \\*
CAGrad    & $1.0\mathrm{e}{-3}$ & $1.1$ & $0.01$ & $c=0.32$; inner lr $=0.1$ \\*
GAR      & $1.0\mathrm{e}{-3}$ & $1.1$ & $0.01$ & $\lambda_{\mathrm{align}}=9\mathrm{e}{-5}$; load norm. \\*
STGC      & $1.0\mathrm{e}{-3}$ & $1.1$ & $0.01$ & $\beta_{\mathrm{STGC}}=0.25$; $\tau_{\mathrm{STGC}}=0$ \\*
LoadPen   & $1.2\mathrm{e}{-3}$ & $1.1$ & $0.01$ & $\lambda_{\mathrm{load}}=1.2\mathrm{e}{-3}$ \\*
SwitchAux & $1.0\mathrm{e}{-3}$ & $1.1$ & $0.01$ & $\alpha_{\mathrm{switch}}=1.2\mathrm{e}{-3}$ \\*
STGC+Load & $1.0\mathrm{e}{-3}$ & $1.1$ & $0.01$ & $\beta_{\mathrm{STGC}}=0.25$; $\lambda_{\mathrm{load}}=1.2\mathrm{e}{-3}$; $\tau_{\mathrm{STGC}}=0$ \\
\addlinespace[2pt]
\multicolumn{5}{@{}l}{\textbf{[QNLI, BoolQ, RTE, PAWS, ANLI, CB]} \textnormal{(6 tasks)}} \\*
Baseline  & $1.0\mathrm{e}{-3}$ & $0.8$ & $0.01$ & -- \\*
CAGrad    & $1.0\mathrm{e}{-3}$ & $0.8$ & $0.01$ & $c=0.26$; inner lr $=0.1$ \\*
GAR      & $1.0\mathrm{e}{-3}$ & $0.8$ & $0.01$ & $\lambda_{\mathrm{align}}=1.1\mathrm{e}{-4}$; load norm. \\*
STGC      & $1.0\mathrm{e}{-3}$ & $0.8$ & $0.01$ & $\beta_{\mathrm{STGC}}=0.44$; $\tau_{\mathrm{STGC}}=0$ \\*
LoadPen   & $1.0\mathrm{e}{-3}$ & $0.8$ & $0.01$ & $\lambda_{\mathrm{load}}=1.1\mathrm{e}{-4}$ \\*
SwitchAux & $1.2\mathrm{e}{-3}$ & $0.8$ & $0.01$ & $\alpha_{\mathrm{switch}}=1.1\mathrm{e}{-3}$ \\*
STGC+Load & $1.1\mathrm{e}{-3}$ & $0.8$ & $0.01$ & $\beta_{\mathrm{STGC}}=0.44$; $\lambda_{\mathrm{load}}=1.1\mathrm{e}{-4}$; $\tau_{\mathrm{STGC}}=0$ \\
\bottomrule
\end{tabular}
}%
\end{table}

\clearpage
\begin{table}[H]
\centering
\scriptsize
\caption{Final RoBERTa configurations for the seven--eight-task mixtures. All rows use frozen backbones with trainable rank-16 LoRA-FFN experts, E8K4 routing, FP32, weight decay $0.01$, a per-task batch size of $32$, and 2,000 optimizer updates.}
\label{tab:appendix_hparams_7_8_roberta}
\setlength{\tabcolsep}{3.5pt}
\renewcommand{\arraystretch}{1.02}
\AppendixTable{%
\begin{tabular}{@{}lrrrl@{}}
\toprule
Method & Learning rate & Grad. clip & Weight decay & Method-specific setting \\
\midrule
\multicolumn{5}{@{}l}{\textbf{[QNLI, BoolQ, RTE, PAWS, WiC, CoLA, SST-2]} \textnormal{(7 tasks)}} \\*
Baseline  & $1.0\mathrm{e}{-3}$ & $0.8$ & $0.01$ & -- \\*
CAGrad    & $1.0\mathrm{e}{-3}$ & $0.8$ & $0.01$ & $c=0.26$; inner lr $=0.1$ \\*
GAR      & $2.0\mathrm{e}{-3}$ & $0.8$ & $0.01$ & $\lambda_{\mathrm{align}}=1\mathrm{e}{-3}$; load norm. \\*
STGC      & $1.0\mathrm{e}{-3}$ & $0.8$ & $0.01$ & $\beta_{\mathrm{STGC}}=0.25$; $\tau_{\mathrm{STGC}}=0$ \\*
LoadPen   & $1.2\mathrm{e}{-3}$ & $0.8$ & $0.01$ & $\lambda_{\mathrm{load}}=1.2\mathrm{e}{-3}$ \\*
SwitchAux & $1.0\mathrm{e}{-3}$ & $0.8$ & $0.01$ & $\alpha_{\mathrm{switch}}=1.1\mathrm{e}{-3}$ \\*
STGC+Load & $1.0\mathrm{e}{-3}$ & $0.8$ & $0.01$ & $\beta_{\mathrm{STGC}}=0.5$; $\lambda_{\mathrm{load}}=1.2\mathrm{e}{-3}$; $\tau_{\mathrm{STGC}}=0$ \\
\addlinespace[2pt]
\multicolumn{5}{@{}l}{\textbf{[QNLI, BoolQ, RTE, PAWS, WiC, CoLA, SST-2, CB]} \textnormal{(8 tasks)}} \\*
Baseline  & $2.0\mathrm{e}{-3}$ & $0.8$ & $0.01$ & -- \\*
CAGrad    & $2.0\mathrm{e}{-3}$ & $0.8$ & $0.01$ & $c=0.35$; inner lr $=0.1$ \\*
GAR      & $1.0\mathrm{e}{-3}$ & $0.8$ & $0.01$ & $\lambda_{\mathrm{align}}=1.2\mathrm{e}{-2}$; load norm. \\*
STGC      & $2.0\mathrm{e}{-3}$ & $0.8$ & $0.01$ & $\beta_{\mathrm{STGC}}=0.33$; $\tau_{\mathrm{STGC}}=0$ \\*
LoadPen   & $2.0\mathrm{e}{-3}$ & $0.8$ & $0.01$ & $\lambda_{\mathrm{load}}=1.3\mathrm{e}{-4}$ \\*
SwitchAux & $1.0\mathrm{e}{-3}$ & $0.8$ & $0.01$ & $\alpha_{\mathrm{switch}}=1.2\mathrm{e}{-4}$ \\*
STGC+Load & $2.0\mathrm{e}{-3}$ & $0.8$ & $0.01$ & $\beta_{\mathrm{STGC}}=0.33$; $\lambda_{\mathrm{load}}=1\mathrm{e}{-3}$; $\tau_{\mathrm{STGC}}=0$ \\
\addlinespace[2pt]
\multicolumn{5}{@{}l}{\textbf{[QNLI, BoolQ, RTE, WiC, CoLA, SST-2, MRPC]} \textnormal{(7 tasks)}} \\*
Baseline  & $2.0\mathrm{e}{-3}$ & $0.8$ & $0.01$ & -- \\*
CAGrad    & $2.0\mathrm{e}{-3}$ & $0.8$ & $0.01$ & $c=0.26$; inner lr $=0.1$ \\*
GAR      & $1.0\mathrm{e}{-3}$ & $0.8$ & $0.01$ & $\lambda_{\mathrm{align}}=1.2\mathrm{e}{-4}$; load norm. \\*
STGC      & $2.0\mathrm{e}{-3}$ & $0.8$ & $0.01$ & $\beta_{\mathrm{STGC}}=0.25$; $\tau_{\mathrm{STGC}}=0$ \\*
LoadPen   & $1.8\mathrm{e}{-3}$ & $0.8$ & $0.01$ & $\lambda_{\mathrm{load}}=1.1\mathrm{e}{-3}$ \\*
SwitchAux & $2.0\mathrm{e}{-3}$ & $0.8$ & $0.01$ & $\alpha_{\mathrm{switch}}=1.1\mathrm{e}{-3}$ \\*
STGC+Load & $1.5\mathrm{e}{-3}$ & $0.8$ & $0.01$ & $\beta_{\mathrm{STGC}}=0.33$; $\lambda_{\mathrm{load}}=1\mathrm{e}{-3}$; $\tau_{\mathrm{STGC}}=0$ \\
\bottomrule
\end{tabular}
}%
\end{table}

\subsubsection{DeBERTa}
We use DeBERTaV3-base~\citep{deberta_v3_he2023}.
Tables~\ref{tab:appendix_hparams_5_6_deberta} and~\ref{tab:appendix_hparams_7_8_deberta}
report the selected learning rates, fixed clipping and
weight-decay settings, and applicable
method-specific coefficients for each mixture.

\begin{table}[H]
\centering
\scriptsize
\caption{Final DeBERTa configurations for the five--six-task mixtures. All rows use frozen backbones with trainable rank-16 LoRA-FFN experts, E8K4 routing, FP32, weight decay $0.01$, a per-task batch size of $32$, and 2,000 optimizer updates.}
\label{tab:appendix_hparams_5_6_deberta}
\setlength{\tabcolsep}{3.5pt}
\renewcommand{\arraystretch}{1.02}
\AppendixTable{%
\begin{tabular}{@{}lrrrl@{}}
\toprule
Method & Learning rate & Grad. clip & Weight decay & Method-specific setting \\
\midrule
\multicolumn{5}{@{}l}{\textbf{[QNLI, BoolQ, RTE, PAWS, WiC]} \textnormal{(5 tasks)}} \\*
Baseline & $5.1\mathrm{e}{-4}$ & $0.8$ & $0.01$ & -- \\*
CAGrad & $5.1\mathrm{e}{-4}$ & $0.8$ & $0.01$ & $c=0.35$; inner lr $=0.1$ \\*
GAR & $1.0\mathrm{e}{-3}$ & $0.8$ & $0.01$ & $\lambda_{\mathrm{align}}=1.1\mathrm{e}{-3}$; load norm. \\*
STGC & $1.0\mathrm{e}{-3}$ & $0.8$ & $0.01$ & $\beta_{\mathrm{STGC}}=0.75$; $\tau_{\mathrm{STGC}}=0$ \\*
LoadPen & $1.2\mathrm{e}{-3}$ & $0.8$ & $0.01$ & $\lambda_{\mathrm{load}}=1.2\mathrm{e}{-4}$ \\*
SwitchAux & $1.2\mathrm{e}{-3}$ & $0.8$ & $0.01$ & $\alpha_{\mathrm{switch}}=1.2\mathrm{e}{-3}$ \\*
STGC+Load & $1.1\mathrm{e}{-3}$ & $0.8$ & $0.01$ & $\beta_{\mathrm{STGC}}=0.8$; $\lambda_{\mathrm{load}}=1\mathrm{e}{-3}$; $\tau_{\mathrm{STGC}}=0$ \\
\addlinespace[2pt]
\multicolumn{5}{@{}l}{\textbf{[QNLI, BoolQ, RTE, PAWS, ANLI, CB]} \textnormal{(6 tasks)}} \\*
Baseline & $1.0\mathrm{e}{-3}$ & $1.1$ & $0.01$ & -- \\*
CAGrad & $1.0\mathrm{e}{-3}$ & $1.1$ & $0.01$ & $c=0.5$; inner lr $=0.1$ \\*
GAR & $2.0\mathrm{e}{-3}$ & $1.1$ & $0.01$ & $\lambda_{\mathrm{align}}=1.2\mathrm{e}{-3}$; load norm. \\*
STGC & $1.0\mathrm{e}{-3}$ & $1.1$ & $0.01$ & $\beta_{\mathrm{STGC}}=0.5$; $\tau_{\mathrm{STGC}}=0$ \\*
LoadPen & $2.0\mathrm{e}{-3}$ & $1.1$ & $0.01$ & $\lambda_{\mathrm{load}}=1.2\mathrm{e}{-3}$ \\*
SwitchAux & $1.0\mathrm{e}{-3}$ & $1.1$ & $0.01$ & $\alpha_{\mathrm{switch}}=1.3\mathrm{e}{-2}$ \\*
STGC+Load & $1.0\mathrm{e}{-3}$ & $1.1$ & $0.01$ & $\beta_{\mathrm{STGC}}=0.5$; $\lambda_{\mathrm{load}}=1.2\mathrm{e}{-3}$; $\tau_{\mathrm{STGC}}=0$ \\
\bottomrule
\end{tabular}
}%
\end{table}

\begin{table}[H]
\centering
\scriptsize
\caption{Final DeBERTa configurations for the seven--eight-task mixtures. All rows use frozen backbones with trainable rank-16 LoRA-FFN experts, E8K4 routing, FP32, weight decay $0.01$, a per-task batch size of $32$, and 2,000 optimizer updates.}
\label{tab:appendix_hparams_7_8_deberta}
\setlength{\tabcolsep}{3.5pt}
\renewcommand{\arraystretch}{1.02}
\AppendixTable{%
\begin{tabular}{@{}lrrrl@{}}
\toprule
Method & Learning rate & Grad. clip & Weight decay & Method-specific setting \\
\midrule
\multicolumn{5}{@{}l}{\textbf{[QNLI, BoolQ, RTE, PAWS, WiC, CoLA, SST-2]} \textnormal{(7 tasks)}} \\*
Baseline & $1.0\mathrm{e}{-3}$ & $1.1$ & $0.01$ & -- \\*
CAGrad & $1.0\mathrm{e}{-3}$ & $1.1$ & $0.01$ & $c=0.5$; inner lr $=0.1$ \\*
GAR & $2.0\mathrm{e}{-3}$ & $1.1$ & $0.01$ & $\lambda_{\mathrm{align}}=1\mathrm{e}{-2}$; load norm. \\*
STGC & $1.0\mathrm{e}{-3}$ & $1.1$ & $0.01$ & $\beta_{\mathrm{STGC}}=0.4$; $\tau_{\mathrm{STGC}}=0$ \\*
LoadPen & $1.2\mathrm{e}{-3}$ & $1.1$ & $0.01$ & $\lambda_{\mathrm{load}}=1.1\mathrm{e}{-3}$ \\*
SwitchAux & $2.0\mathrm{e}{-3}$ & $1.1$ & $0.01$ & $\alpha_{\mathrm{switch}}=1\mathrm{e}{-3}$ \\*
STGC+Load & $1.0\mathrm{e}{-3}$ & $1.1$ & $0.01$ & $\beta_{\mathrm{STGC}}=0.4$; $\lambda_{\mathrm{load}}=1\mathrm{e}{-3}$; $\tau_{\mathrm{STGC}}=0$ \\
\addlinespace[2pt]
\multicolumn{5}{@{}l}{\textbf{[QNLI, BoolQ, RTE, PAWS, WiC, CoLA, SST-2, CB]} \textnormal{(8 tasks)}} \\*
Baseline & $1.0\mathrm{e}{-3}$ & $1.1$ & $0.01$ & -- \\*
CAGrad & $1.0\mathrm{e}{-3}$ & $1.1$ & $0.01$ & $c=0.35$; inner lr $=0.1$ \\*
GAR & $2.0\mathrm{e}{-3}$ & $1.1$ & $0.01$ & $\lambda_{\mathrm{align}}=1\mathrm{e}{-3}$; load norm. \\*
STGC & $2.0\mathrm{e}{-3}$ & $1.1$ & $0.01$ & $\beta_{\mathrm{STGC}}=0.33$; $\tau_{\mathrm{STGC}}=0$ \\*
LoadPen & $2.0\mathrm{e}{-3}$ & $1.1$ & $0.01$ & $\lambda_{\mathrm{load}}=1.2\mathrm{e}{-4}$ \\*
SwitchAux & $2.0\mathrm{e}{-3}$ & $1.1$ & $0.01$ & $\alpha_{\mathrm{switch}}=1.1\mathrm{e}{-4}$ \\*
STGC+Load & $2.0\mathrm{e}{-3}$ & $1.1$ & $0.01$ & $\beta_{\mathrm{STGC}}=0.4$; $\lambda_{\mathrm{load}}=1\mathrm{e}{-2}$; $\tau_{\mathrm{STGC}}=0$ \\
\addlinespace[2pt]
\multicolumn{5}{@{}l}{\textbf{[QNLI, BoolQ, RTE, WiC, CoLA, SST-2, MRPC]} \textnormal{(7 tasks)}} \\*
Baseline & $1.0\mathrm{e}{-3}$ & $1.1$ & $0.01$ & -- \\*
CAGrad & $1.0\mathrm{e}{-3}$ & $1.1$ & $0.01$ & $c=0.5$; inner lr $=0.1$ \\*
GAR & $2.0\mathrm{e}{-3}$ & $1.1$ & $0.01$ & $\lambda_{\mathrm{align}}=1.2\mathrm{e}{-3}$; load norm. \\*
STGC & $2.0\mathrm{e}{-3}$ & $1.1$ & $0.01$ & $\beta_{\mathrm{STGC}}=2$; $\tau_{\mathrm{STGC}}=0$ \\*
LoadPen & $1.0\mathrm{e}{-3}$ & $1.1$ & $0.01$ & $\lambda_{\mathrm{load}}=1.1\mathrm{e}{-3}$ \\*
SwitchAux & $1.0\mathrm{e}{-3}$ & $1.1$ & $0.01$ & $\alpha_{\mathrm{switch}}=1.1\mathrm{e}{-4}$ \\*
STGC+Load & $2.0\mathrm{e}{-3}$ & $1.1$ & $0.01$ & $\beta_{\mathrm{STGC}}=2.0$; $\lambda_{\mathrm{load}}=1.2\mathrm{e}{-3}$; $\tau_{\mathrm{STGC}}=0$ \\
\bottomrule
\end{tabular}
}%
\end{table}

\subsubsection{Qwen3-1.7B}
Following the Baseline-first selection procedure above, the selected clipping
values are shared within each mixture; the selected weight decay is $0.01$
throughout. Subsequent method searches tune model learning rate and applicable
coefficients while inheriting these controls. Warmup remains fixed at $0.1$.
Tables~\ref{tab:appendix_hparams_5_6_qwen17} and~\ref{tab:appendix_hparams_7_8_qwen17}
report the final configurations.

\begin{table}[H]
\centering
\scriptsize
\caption{Final Qwen3-1.7B configurations for the five--six-task mixtures. All rows use frozen backbones with trainable rank-16 LoRA-FFN experts, E8K4 routing, BF16, weight decay $0.01$, a per-task batch size of $32$, and 2,000 optimizer updates.}
\label{tab:appendix_hparams_5_6_qwen17}
\setlength{\tabcolsep}{3.8pt}
\renewcommand{\arraystretch}{1.02}
\AppendixTable{%
\begin{tabular}{@{}lrrl@{}}
\toprule
Method & Learning rate & Grad. clip & Method-specific setting \\
\midrule
\multicolumn{4}{@{}l}{\textbf{[QNLI, BoolQ, RTE, PAWS, WiC]} \textnormal{(5 tasks)}} \\*
Baseline & $1.3\mathrm{e}{-3}$ & $1.2$ & -- \\*
CAGrad & $7.0\mathrm{e}{-4}$ & $1.2$ & $c=0.50$; inner lr $=0.1$ \\*
GAR & $1.1\mathrm{e}{-3}$ & $1.2$ & $\lambda_{\mathrm{align}}=1.8\mathrm{e}{-4}$; load norm. \\*
STGC & $1.0\mathrm{e}{-3}$ & $1.2$ & $\beta_{\mathrm{STGC}}=0.75$; $\tau_{\mathrm{STGC}}=0$ \\*
LoadPen & $7.0\mathrm{e}{-4}$ & $1.2$ & $\lambda_{\mathrm{load}}=1.2\mathrm{e}{-3}$ \\*
SwitchAux & $1.2\mathrm{e}{-3}$ & $1.2$ & $\alpha_{\mathrm{switch}}=1\mathrm{e}{-3}$ \\*
STGC+Load & $1.0\mathrm{e}{-3}$ & $1.2$ & $\beta_{\mathrm{STGC}}=0.75$; $\lambda_{\mathrm{load}}=1.2\mathrm{e}{-4}$; $\tau_{\mathrm{STGC}}=0$ \\
\addlinespace[2pt]
\multicolumn{4}{@{}l}{\textbf{[QNLI, BoolQ, RTE, PAWS, ANLI, CB]} \textnormal{(6 tasks)}} \\*
Baseline & $1.0\mathrm{e}{-3}$ & $0.6$ & -- \\*
CAGrad & $1.0\mathrm{e}{-3}$ & $0.6$ & $c=0.34$; inner lr $=0.1$ \\*
GAR & $5.1\mathrm{e}{-4}$ & $0.6$ & $\lambda_{\mathrm{align}}=1.4\mathrm{e}{-3}$; load norm. \\*
STGC & $1.2\mathrm{e}{-3}$ & $0.6$ & $\beta_{\mathrm{STGC}}=0.9$; $\tau_{\mathrm{STGC}}=0$ \\*
LoadPen & $1.0\mathrm{e}{-3}$ & $0.6$ & $\lambda_{\mathrm{load}}=1.3\mathrm{e}{-3}$ \\*
SwitchAux & $7.0\mathrm{e}{-4}$ & $0.6$ & $\alpha_{\mathrm{switch}}=1.2\mathrm{e}{-4}$ \\*
STGC+Load & $1.0\mathrm{e}{-3}$ & $0.6$ & $\beta_{\mathrm{STGC}}=0.8$; $\lambda_{\mathrm{load}}=1\mathrm{e}{-3}$; $\tau_{\mathrm{STGC}}=0$ \\
\bottomrule
\end{tabular}
}%
\end{table}

\begin{table}[H]
\centering
\scriptsize
\caption{Final Qwen3-1.7B configurations for the seven--eight-task mixtures. All rows use frozen backbones with trainable rank-16 LoRA-FFN experts, E8K4 routing, BF16, weight decay $0.01$, a per-task batch size of $32$, and 2,000 optimizer updates.}
\label{tab:appendix_hparams_7_8_qwen17}
\setlength{\tabcolsep}{3.8pt}
\renewcommand{\arraystretch}{1.02}
\AppendixTable{%
\begin{tabular}{@{}lrrl@{}}
\toprule
Method & Learning rate & Grad. clip & Method-specific setting \\
\midrule
\multicolumn{4}{@{}l}{\textbf{[QNLI, BoolQ, RTE, PAWS, WiC, CoLA, SST-2]} \textnormal{(7 tasks)}} \\*
Baseline & $5.0\mathrm{e}{-5}$ & $1.0$ & -- \\*
CAGrad & $1.0\mathrm{e}{-4}$ & $1.0$ & $c=0.32$; inner lr $=0.1$ \\*
GAR & $1.0\mathrm{e}{-4}$ & $1.0$ & $\lambda_{\mathrm{align}}=1\mathrm{e}{-3}$; load norm. \\*
STGC & $1.0\mathrm{e}{-4}$ & $1.0$ & $\beta_{\mathrm{STGC}}=0.8$; $\tau_{\mathrm{STGC}}=0$ \\*
LoadPen & $1.0\mathrm{e}{-4}$ & $1.0$ & $\lambda_{\mathrm{load}}=1.2\mathrm{e}{-3}$ \\*
SwitchAux & $1.1\mathrm{e}{-4}$ & $1.0$ & $\alpha_{\mathrm{switch}}=1\mathrm{e}{-3}$ \\*
STGC+Load & $1.0\mathrm{e}{-4}$ & $1.0$ & $\beta_{\mathrm{STGC}}=0.9$; $\lambda_{\mathrm{load}}=1.2\mathrm{e}{-4}$; $\tau_{\mathrm{STGC}}=0$ \\
\addlinespace[2pt]
\multicolumn{4}{@{}l}{\textbf{[QNLI, BoolQ, RTE, PAWS, WiC, CoLA, SST-2, CB]} \textnormal{(8 tasks)}} \\*
Baseline & $1.0\mathrm{e}{-4}$ & $1.0$ & -- \\*
CAGrad & $1.0\mathrm{e}{-4}$ & $1.0$ & $c=0.32$; inner lr $=0.1$ \\*
GAR & $4.0\mathrm{e}{-4}$ & $1.0$ & $\lambda_{\mathrm{align}}=3\mathrm{e}{-3}$; load norm. \\*
STGC & $1.2\mathrm{e}{-4}$ & $1.0$ & $\beta_{\mathrm{STGC}}=1.2$; $\tau_{\mathrm{STGC}}=0$ \\*
LoadPen & $2.0\mathrm{e}{-4}$ & $1.0$ & $\lambda_{\mathrm{load}}=1.3\mathrm{e}{-3}$ \\*
SwitchAux & $2.0\mathrm{e}{-4}$ & $1.0$ & $\alpha_{\mathrm{switch}}=1.3\mathrm{e}{-2}$ \\*
STGC+Load & $1.0\mathrm{e}{-4}$ & $1.0$ & $\beta_{\mathrm{STGC}}=1.2$; $\lambda_{\mathrm{load}}=1\mathrm{e}{-3}$; $\tau_{\mathrm{STGC}}=0$ \\
\addlinespace[2pt]
\multicolumn{4}{@{}l}{\textbf{[QNLI, BoolQ, RTE, WiC, CoLA, SST-2, MRPC]} \textnormal{(7 tasks)}} \\*
Baseline & $5.0\mathrm{e}{-5}$ & $1.0$ & -- \\*
CAGrad & $1.0\mathrm{e}{-4}$ & $1.0$ & $c=0.32$; inner lr $=0.1$ \\*
GAR & $1.0\mathrm{e}{-4}$ & $1.0$ & $\lambda_{\mathrm{align}}=1\mathrm{e}{-3}$; load norm. \\*
STGC & $1.0\mathrm{e}{-4}$ & $1.0$ & $\beta_{\mathrm{STGC}}=1.5$; $\tau_{\mathrm{STGC}}=0$ \\*
LoadPen & $1.0\mathrm{e}{-4}$ & $1.0$ & $\lambda_{\mathrm{load}}=1.1\mathrm{e}{-2}$ \\*
SwitchAux & $1.0\mathrm{e}{-4}$ & $1.0$ & $\alpha_{\mathrm{switch}}=1.2\mathrm{e}{-4}$ \\*
STGC+Load & $1.2\mathrm{e}{-4}$ & $1.0$ & $\beta_{\mathrm{STGC}}=1.5$; $\lambda_{\mathrm{load}}=1.1\mathrm{e}{-3}$; $\tau_{\mathrm{STGC}}=0$ \\
\bottomrule
\end{tabular}
}%
\end{table}

\subsection{Qwen3-8B Final Configurations}
\label{appsubsec:qwen8b_hyperparameters}
The three Qwen3-8B methods reuse the shared per-mixture batch and micro-batch sizes and the Baseline-first HPO procedure described in Appendix~\ref{appsubsec:cagrad_preselection}. Final evaluation uses five seeds with 2,000 updates and BF16. The selected weight decay is 0.010 throughout and is shared across methods within each mixture; CAGrad's inner learning rate is fixed at 0.10. GAR uses load normalization. Token routing and auxiliary group averaging follow Section~\ref{sec:methodology_implementation}.
The LoRA experts, per-task prediction heads, sequence length, and warmup follow Appendix~\ref{appsubsec:cagrad_preselection}.
Table~\ref{tab:qwen8b_hp_5_8} reports the selected values to two significant digits. All five mixtures use eight experts with top-4 routing and 32 examples per task with micro-batches of eight.
\begin{table}[H]
\centering\footnotesize
\caption{Selected Qwen3-8B configurations for the five--eight-task mixtures. Dashes indicate an inapplicable method coefficient.}
\label{tab:qwen8b_hp_5_8}
\setlength{\tabcolsep}{8pt}
\renewcommand{\arraystretch}{1.05}
\begin{tabular}{@{}lrrrr@{}}
\toprule
Method & Learning rate & Grad. clip & CAGrad $c$ & GAR $\lambda$ \\
\midrule
\multicolumn{5}{@{}l}{\textbf{[QNLI, BoolQ, RTE, PAWS, WiC]}} \\*
Baseline & $1.1\times10^{-4}$ & 1.2 & -- & -- \\*
CAGrad & $1.4\times10^{-4}$ & 1.2 & 0.22 & -- \\*
GAR & $1.4\times10^{-4}$ & 1.2 & -- & $1.8\times10^{-4}$ \\*
\addlinespace[3pt]
\multicolumn{5}{@{}l}{\textbf{[QNLI, BoolQ, RTE, PAWS, ANLI, CB]}} \\*
Baseline & $7.4\times10^{-5}$ & 1.0 & -- & -- \\*
CAGrad & $1.1\times10^{-4}$ & 1.0 & 0.40 & -- \\*
GAR & $1.1\times10^{-4}$ & 1.0 & -- & $1.4\times10^{-3}$ \\*
\addlinespace[3pt]
\multicolumn{5}{@{}l}{\textbf{[QNLI, BoolQ, RTE, PAWS, WiC, CoLA, SST-2]}} \\*
Baseline & $1.1\times10^{-4}$ & 1.2 & -- & -- \\*
CAGrad & $7.1\times10^{-5}$ & 1.2 & 0.50 & -- \\*
GAR & $1.2\times10^{-4}$ & 1.2 & -- & $1.3\times10^{-4}$ \\*
\addlinespace[3pt]
\multicolumn{5}{@{}l}{\textbf{[QNLI, BoolQ, RTE, WiC, CoLA, SST-2, MRPC]}} \\*
Baseline & $1.4\times10^{-4}$ & 1.2 & -- & -- \\*
CAGrad & $1.2\times10^{-4}$ & 1.2 & 0.34 & -- \\*
GAR & $1.2\times10^{-4}$ & 1.2 & -- & $1.5\times10^{-4}$ \\*
\addlinespace[3pt]
\multicolumn{5}{@{}l}{\textbf{[QNLI, BoolQ, RTE, PAWS, WiC, CoLA, SST-2, CB]}} \\*
Baseline & $7.1\times10^{-5}$ & 1.0 & -- & -- \\*
CAGrad & $7.1\times10^{-5}$ & 1.0 & 0.34 & -- \\*
GAR & $1.3\times10^{-4}$ & 1.0 & -- & $1.1\times10^{-4}$ \\*
\bottomrule
\end{tabular}
\end{table}

\subsection{Frozen RoBERTa Top-1 LoRA-FFN}
\label{appsubsec:top1_roberta_protocol}

This extension evaluates Baseline, CAGrad, and GAR on the same five
dataset mixtures with frozen RoBERTa-base and eight rank-16
LoRA experts ($\alpha=16$) in the final-layer FFN. One expert is selected
per token (E8K1). Training uses FP32, sequence length 256, the
per-task batch size 32 and micro-batch size 8 of Appendix~\ref{appsubsec:cagrad_preselection}, warmup ratio $0.1$, and
2,000 optimizer updates. Final evaluations use the full labeled evaluation
splits and the same five shared random seeds.

\paragraph{Straight-through top-1 routing.}
For token-router logits $z$, define $\sigma=\operatorname{softmax}(z)$ at unit
temperature and $o=\operatorname{onehot}(\arg\max_e z_e)$. The configured
gate is
\[
\pi(z)=\sg(o-\sigma)+\sigma.
\]
Its forward value is exactly one-hot, while its backward derivative is the
full-softmax Jacobian. All three methods use this rule in the task forward
computation. GAR also uses it when recomputing auxiliary gates from detached
routing inputs. The auxiliary branch averages the configured gates over each
example's valid tokens and then over the examples in each same-task
micro-batch. It pairs these group probabilities with detached expert-gradient
observations and uses the load-normalized objective of
Section~\ref{sec:methodology_implementation}, with $\epsilon=10^{-8}$.

\paragraph{Prediction heads and configurations.}
Each task has its own linear prediction head, as in the frozen top-4 setting,
with three outputs for ANLI and CB and two for the binary tasks.
Gradient clipping and weight decay are shared within each mixture;
Table~\ref{tab:appendix_top1_roberta_hparams} gives the selected
method-specific learning rates and coefficients. CAGrad's inner learning
rate is fixed at $0.1$; the selected $\lambda_{\mathrm{align}}$ values are
$1.2\mathrm{e}{-3}$, $1\mathrm{e}{-3}$, $1\mathrm{e}{-3}$,
$1.2\mathrm{e}{-3}$, and $1\mathrm{e}{-3}$ for the five-task, six-task,
seven-task (PAWS), seven-task (MRPC), and eight-task mixtures, respectively.
GAR uses load normalization.

\paragraph{Search protocol.}
The top-1 search uses the candidate-evaluation protocol of Appendix~\ref{appsubsec:cagrad_preselection}:
500 updates with two shared random seeds, training on 90\% of each training split and
scoring final-checkpoint macro accuracy on its fixed 10\% development probe.
Baseline first searches its model-training learning rate, clipping, and
weight decay over the reference ranges in
Appendix~\ref{appsubsec:cagrad_preselection}. After local confirmation,
CAGrad and GAR inherit only Baseline's selected clipping and weight decay.
CAGrad searches its own model-training learning rate and $c$, fixing its
inner learning rate at $0.1$; GAR searches its own learning rate and
$\lambda_{\mathrm{align}}$. Each free coordinate receives eight candidates,
followed by a three-point local grid on all $d$ free coordinates.
Baseline therefore uses 51 candidate configurations, and CAGrad and GAR
each use 25. The batch schedule remains fixed throughout selection and
the 2,000-update final runs.

\begin{table}[H]
\centering
\scriptsize
\caption{Final configurations for the frozen RoBERTa top-1 LoRA-FFN comparison. All five mixtures use E8K1 straight-through routing, rank 16, scaling $\alpha=16$, FP32, sequence length 256, per-task batch size 32, micro-batch size 8, warmup ratio 0.1, and 2,000 optimizer updates. Gradient clipping and weight decay are shared by the three methods within each mixture.}
\label{tab:appendix_top1_roberta_hparams}
\setlength{\tabcolsep}{4pt}
\renewcommand{\arraystretch}{1.03}
\AppendixTable{%
\begin{tabular}{@{}lrrrl@{}}
\toprule
Method & Learning rate & Grad.\ clip & Weight decay & Method-specific setting \\
\midrule
\multicolumn{5}{@{}l}{\textbf{[QNLI, BoolQ, RTE, PAWS, WiC]} \textnormal{(5 tasks)}} \\*
Baseline & $1.0\mathrm{e}{-3}$ & $1.1$ & $0.02$ & -- \\*
CAGrad & $1.5\mathrm{e}{-3}$ & $1.1$ & $0.02$ & $c=0.42$; inner lr $=0.1$ \\*
\textbf{GAR} & $1.5\mathrm{e}{-3}$ & $1.1$ & $0.02$ & $\lambda_{\mathrm{align}}=1.2\mathrm{e}{-3}$; load norm. \\
\midrule
\multicolumn{5}{@{}l}{\textbf{[QNLI, BoolQ, RTE, PAWS, ANLI, CB]} \textnormal{(6 tasks)}} \\*
Baseline & $1.2\mathrm{e}{-3}$ & $1.1$ & $0.01$ & -- \\*
CAGrad & $1.5\mathrm{e}{-3}$ & $1.1$ & $0.01$ & $c=0.26$; inner lr $=0.1$ \\*
\textbf{GAR} & $1.5\mathrm{e}{-3}$ & $1.1$ & $0.01$ & $\lambda_{\mathrm{align}}=1\mathrm{e}{-3}$; load norm. \\
\midrule
\multicolumn{5}{@{}l}{\textbf{[QNLI, BoolQ, RTE, PAWS, WiC, CoLA, SST-2]} \textnormal{(7 tasks)}} \\*
Baseline & $2.0\mathrm{e}{-3}$ & $1.0$ & $0.02$ & -- \\*
CAGrad & $2.2\mathrm{e}{-3}$ & $1.0$ & $0.02$ & $c=0.26$; inner lr $=0.1$ \\*
\textbf{GAR} & $2.0\mathrm{e}{-3}$ & $1.0$ & $0.02$ & $\lambda_{\mathrm{align}}=1\mathrm{e}{-3}$; load norm. \\
\midrule
\multicolumn{5}{@{}l}{\textbf{[QNLI, BoolQ, RTE, WiC, CoLA, SST-2, MRPC]} \textnormal{(7 tasks)}} \\*
Baseline & $1.0\mathrm{e}{-3}$ & $1.1$ & $0.02$ & -- \\*
CAGrad & $1.2\mathrm{e}{-3}$ & $1.1$ & $0.02$ & $c=0.26$; inner lr $=0.1$ \\*
\textbf{GAR} & $1.1\mathrm{e}{-3}$ & $1.1$ & $0.02$ & $\lambda_{\mathrm{align}}=1.2\mathrm{e}{-3}$; load norm. \\
\midrule
\multicolumn{5}{@{}l}{\textbf{[QNLI, BoolQ, RTE, PAWS, WiC, CoLA, SST-2, CB]} \textnormal{(8 tasks)}} \\*
Baseline & $1.2\mathrm{e}{-3}$ & $1.0$ & $0.01$ & -- \\*
CAGrad & $1.0\mathrm{e}{-3}$ & $1.0$ & $0.01$ & $c=0.26$; inner lr $=0.1$ \\*
\textbf{GAR} & $1.1\mathrm{e}{-3}$ & $1.0$ & $0.01$ & $\lambda_{\mathrm{align}}=1\mathrm{e}{-3}$; load norm. \\
\bottomrule
\end{tabular}
}%
\end{table}

\subsection{Trainable RoBERTa Classification-Head MoE}
\label{appsubsec:implb_protocol}

The compared methods use the same batch schedule, update budget, and evaluation
protocol within this extension. Baseline first selects learning rate, clipping,
and weight decay using Appendix~\ref{appsubsec:cagrad_preselection}'s procedure.
Subsequent methods inherit the selected clipping and weight decay and tune
their own learning rates and applicable coefficients. Warmup is fixed at $0.1$.

This setting uses the same routing objective and implementation pathway to test
whether the five-to-eight-task accuracy gain and the routing trade-off depend on
expert placement and trainability, and to record full routing-diagnostic
trajectories. It differs from the frozen LoRA-FFN setting on three axes at
once. First, the RoBERTa-base backbone is fully trainable. Second, the experts
are rank-16 LoRA deltas (scaling $\alpha=16$, dropout $0.1$) applied to the
classification head over a shared base head, so the combined logits are
$\mathrm{base}(h)+\sum_k p_k\,\Delta_k(h)$ for the pooled representation $h$.
Every task in a mixture uses this same logit space.  The output width is three
when the mixture contains a three-way task (ANLI or CB), and two otherwise.
Binary-task targets remain indexed by $\{0,1\}$ in a three-logit mixture; the
third logit participates in the common softmax normalization but is never a
binary-task target.  The shared-width rule is identical for every compared
method in the classification-head setting.
Third, the router is a linear map on the masked-mean pooled representation, so
routing is sequence-level, using top-4 masked-softmax routing over eight
experts (E8K4) in all five mixtures. Router numerics are FP32; the
remaining forward and backward computation uses BF16.

\paragraph{Shared-output stress test.}
We use this extension to examine routing under a shared prediction space.
It has no task-specific output head: the shared base head and the
input-conditioned mixture of expert deltas jointly produce the logits.
Task identity is used to organize task losses and gradient observations,
but is not supplied to the head or router. Both the trainable representation
and the router can learn task differences from the inputs; expert mixing is
one component of this adaptation.

Sharing output weights aliases task-local label indices with different
semantics or polarity, creating competition in the shared prediction space.
In three-logit mixtures, all three classes participate in softmax and
prediction for binary examples as well. Every compared method uses this
same shared-output convention.

The alignment pathway is the one of
Section~\ref{sec:methodology_implementation} with the routed unit being a
same-task example group: $g_m$ sums the group-loss gradients across
experts in the common LoRA parameter template and is detached as
$\tilde g_m=\sg(g_m)$ for the auxiliary loss. From detached pooled
representations $\sg(h_n)$, the auxiliary branch recomputes per-example router
probabilities $q_n$ using top-4 masked softmax over eight experts. It then sets $p_m=|m|^{-1}\sum_{n\in m}q_n$, with no
further top-$k$ operation on the group mean. Thus $p_m$ can have more than $k$
nonzero entries even when each example selects only $k$ experts. The
$\mathcal{L}_{\mathrm{norm}}$ objective with $\epsilon=10^{-8}$ has no direct
gradient to non-router parameters; detaching $h_n$ preserves its derivatives
with respect to router parameters while blocking this branch from the
backbone. Groups contain eight examples;
the per-task batch schedule and the five seeds
used throughout match the frozen LoRA-FFN setting, and final runs use
2,000 updates. The final optimizer and
method-specific values are reported in
Tables~\ref{tab:appendix_implb_hparams_5_6} and~\ref{tab:appendix_implb_hparams_7_8}.

\begin{table}[H]
\centering
\scriptsize
\caption{Final trainable RoBERTa classification-head configurations for the five--six-task mixtures: top-4 masked-softmax routing over eight experts (E8K4), sequence length 256, per-task batch size 32, and same-task groups of eight examples. All rows use rank-16 LoRA experts on the classification head over a shared base head, 2,000 multi-task updates, BF16 forward/backward with FP32 router numerics, and the CAGrad-selected batch schedule of Appendix~\ref{app:hyperparameters_reproducibility}.}
\label{tab:appendix_implb_hparams_5_6}
\setlength{\tabcolsep}{3.5pt}
\renewcommand{\arraystretch}{1.02}
\AppendixTable{%
\begin{tabular}{@{}lrrrl@{}}
\toprule
Method & Learning rate & Grad.\ clip & Weight decay & Method-specific setting \\
\midrule
\multicolumn{5}{@{}l}{\textbf{[QNLI, BoolQ, RTE, PAWS, WiC]} \textnormal{(5 tasks)}} \\*
Baseline  & $1.0\mathrm{e}{-4}$ & $1.0$ & $0.032$ & -- \\*
CAGrad    & $5.7\mathrm{e}{-5}$ & $1.0$ & $0.032$ & $c=0.22$; inner lr $=0.1$ \\*
GAR      & $1.0\mathrm{e}{-4}$ & $1.0$ & $0.032$ & $\lambda_{\mathrm{align}}=1\mathrm{e}{-3}$; load norm. \\*
STGC      & $1.0\mathrm{e}{-4}$ & $1.0$ & $0.032$ & $\beta_{\mathrm{STGC}}=0.75$; $\tau_{\mathrm{STGC}}=0$ \\*
LoadPen   & $1.0\mathrm{e}{-4}$ & $1.0$ & $0.032$ & $\lambda_{\mathrm{load}}=1.2\mathrm{e}{-4}$ \\*
SwitchAux & $1.0\mathrm{e}{-4}$ & $1.0$ & $0.032$ & $\alpha_{\mathrm{switch}}=1.2\mathrm{e}{-5}$ \\*
STGC+Load & $1.2\mathrm{e}{-4}$ & $1.0$ & $0.032$ & $\beta_{\mathrm{STGC}}=0.5$; $\lambda_{\mathrm{load}}=1.1\mathrm{e}{-4}$; $\tau_{\mathrm{STGC}}=0$ \\
\addlinespace[2pt]
\multicolumn{5}{@{}l}{\textbf{[QNLI, BoolQ, RTE, PAWS, ANLI, CB]} \textnormal{(6 tasks)}} \\*
Baseline  & $1.2\mathrm{e}{-4}$ & $0.8$ & $0.01$ & -- \\*
CAGrad    & $1.4\mathrm{e}{-4}$ & $0.8$ & $0.01$ & $c=0.26$; inner lr $=0.1$ \\*
GAR      & $1.2\mathrm{e}{-4}$ & $0.8$ & $0.01$ & $\lambda_{\mathrm{align}}=1\mathrm{e}{-2}$; load norm. \\*
STGC      & $1.3\mathrm{e}{-4}$ & $0.8$ & $0.01$ & $\beta_{\mathrm{STGC}}=0.5$; $\tau_{\mathrm{STGC}}=0$ \\*
LoadPen   & $1.4\mathrm{e}{-4}$ & $0.8$ & $0.01$ & $\lambda_{\mathrm{load}}=1.3\mathrm{e}{-3}$ \\*
SwitchAux & $1.2\mathrm{e}{-4}$ & $0.8$ & $0.01$ & $\alpha_{\mathrm{switch}}=1.2\mathrm{e}{-2}$ \\*
STGC+Load & $1.1\mathrm{e}{-4}$ & $0.8$ & $0.01$ & $\beta_{\mathrm{STGC}}=0.5$; $\lambda_{\mathrm{load}}=1.1\mathrm{e}{-3}$; $\tau_{\mathrm{STGC}}=0$ \\
\bottomrule
\end{tabular}
}%
\end{table}

\begin{table}[H]
\centering
\scriptsize
\caption{Final trainable RoBERTa classification-head configurations for the seven--eight-task mixtures: E8K4 routing, sequence length 256, per-task batch size 32, and same-task groups of eight examples. All rows use rank-16 LoRA experts on the classification head over a shared base head, 2,000 multi-task updates, BF16 forward/backward with FP32 router numerics, and the CAGrad-selected batch schedule of Appendix~\ref{app:hyperparameters_reproducibility}.}
\label{tab:appendix_implb_hparams_7_8}
\setlength{\tabcolsep}{3.5pt}
\renewcommand{\arraystretch}{1.02}
\AppendixTable{%
\begin{tabular}{@{}lrrrl@{}}
\toprule
Method & Learning rate & Grad.\ clip & Weight decay & Method-specific setting \\
\midrule
\multicolumn{5}{@{}l}{\textbf{[QNLI, BoolQ, RTE, PAWS, WiC, CoLA, SST-2]} \textnormal{(7 tasks)}} \\*
Baseline  & $7.0\mathrm{e}{-5}$ & $0.8$ & $0.01$ & -- \\*
CAGrad    & $7.4\mathrm{e}{-5}$ & $0.8$ & $0.01$ & $c=0.28$; inner lr $=0.1$ \\*
GAR      & $7.0\mathrm{e}{-5}$ & $0.8$ & $0.01$ & $\lambda_{\mathrm{align}}=1.2\mathrm{e}{-3}$; load norm. \\*
STGC      & $7.0\mathrm{e}{-5}$ & $0.8$ & $0.01$ & $\beta_{\mathrm{STGC}}=0.25$; $\tau_{\mathrm{STGC}}=0$ \\*
LoadPen   & $7.0\mathrm{e}{-5}$ & $0.8$ & $0.01$ & $\lambda_{\mathrm{load}}=1.2\mathrm{e}{-3}$ \\*
SwitchAux & $4.4\mathrm{e}{-5}$ & $0.8$ & $0.01$ & $\alpha_{\mathrm{switch}}=1\mathrm{e}{-3}$ \\*
STGC+Load & $7.0\mathrm{e}{-5}$ & $0.8$ & $0.01$ & $\beta_{\mathrm{STGC}}=0.25$; $\lambda_{\mathrm{load}}=1.1\mathrm{e}{-3}$; $\tau_{\mathrm{STGC}}=0$ \\
\addlinespace[2pt]
\multicolumn{5}{@{}l}{\textbf{[QNLI, BoolQ, RTE, PAWS, WiC, CoLA, SST-2, CB]} \textnormal{(8 tasks)}} \\*
Baseline  & $1.1\mathrm{e}{-4}$ & $1.0$ & $0.01$ & -- \\*
CAGrad    & $7.0\mathrm{e}{-5}$ & $1.0$ & $0.01$ & $c=0.30$; inner lr $=0.1$ \\*
GAR      & $7.0\mathrm{e}{-5}$ & $1.0$ & $0.01$ & $\lambda_{\mathrm{align}}=1.1\mathrm{e}{-3}$; load norm. \\*
STGC      & $9.2\mathrm{e}{-5}$ & $1.0$ & $0.01$ & $\beta_{\mathrm{STGC}}=0.25$; $\tau_{\mathrm{STGC}}=0$ \\*
LoadPen   & $7.0\mathrm{e}{-5}$ & $1.0$ & $0.01$ & $\lambda_{\mathrm{load}}=1.2\mathrm{e}{-3}$ \\*
SwitchAux & $7.0\mathrm{e}{-5}$ & $1.0$ & $0.01$ & $\alpha_{\mathrm{switch}}=1\mathrm{e}{-3}$ \\*
STGC+Load & $9\mathrm{e}{-5}$ & $1.0$ & $0.01$ & $\beta_{\mathrm{STGC}}=0.25$; $\lambda_{\mathrm{load}}=1.2\mathrm{e}{-3}$; $\tau_{\mathrm{STGC}}=0$ \\
\addlinespace[2pt]
\multicolumn{5}{@{}l}{\textbf{[QNLI, BoolQ, RTE, WiC, CoLA, SST-2, MRPC]} \textnormal{(7 tasks)}} \\*
Baseline  & $7.0\mathrm{e}{-5}$ & $1.1$ & $0.02$ & -- \\*
CAGrad    & $1.3\mathrm{e}{-4}$ & $1.1$ & $0.02$ & $c=0.28$; inner lr $=0.1$ \\*
GAR      & $7.0\mathrm{e}{-5}$ & $1.1$ & $0.02$ & $\lambda_{\mathrm{align}}=1\mathrm{e}{-4}$; load norm. \\*
STGC      & $7.0\mathrm{e}{-5}$ & $1.1$ & $0.02$ & $\beta_{\mathrm{STGC}}=1.2$; $\tau_{\mathrm{STGC}}=0$ \\*
LoadPen   & $4.6\mathrm{e}{-5}$ & $1.1$ & $0.02$ & $\lambda_{\mathrm{load}}=1.1\mathrm{e}{-4}$ \\*
SwitchAux & $7.0\mathrm{e}{-5}$ & $1.1$ & $0.02$ & $\alpha_{\mathrm{switch}}=1.2\mathrm{e}{-3}$ \\*
STGC+Load & $6.0\mathrm{e}{-5}$ & $1.1$ & $0.02$ & $\beta_{\mathrm{STGC}}=1.2$; $\lambda_{\mathrm{load}}=1\mathrm{e}{-2}$; $\tau_{\mathrm{STGC}}=0$ \\
\bottomrule
\end{tabular}
}%
\end{table}

\subsection{Trainable RoBERTa Classification-Head Top-1}
\label{appsubsec:top1_head_protocol}

This comparison keeps the trainable classification-head setting of
Appendix~\ref{appsubsec:implb_protocol} unchanged, namely the fully trainable
RoBERTa-base backbone, the rank-16 LoRA experts over the shared base head, the
shared output width, and the linear router on the masked-mean pooled
representation, and replaces the top-4 masked-softmax gate by the
straight-through one-hot gate of Appendix~\ref{appsubsec:top1_roberta_protocol}
applied to the per-example router logits (E8K1). Each example is therefore
dispatched to one expert in the forward computation, while the backward
derivative is the full-softmax Jacobian. GAR recomputes the same gate from
the detached pooled representation, averages it over the eight examples of
each same-task group, and pairs the group probability with the detached
expert-gradient observation as in Section~\ref{sec:methodology_implementation};
task losses are weighted as in Appendix~\ref{appsubsec:training_semantics}. This comparison includes
Baseline, CAGrad, and GAR.

The batch schedule, update budget, and selection protocol are those of the
top-4 classification-head comparison: per-task batch size 32 with same-task
groups of eight, 2,000 optimizer updates, five shared final seeds, and the
Baseline-first search of Appendix~\ref{appsubsec:cagrad_preselection}. Baseline
searches its learning rate, clipping, and weight decay; CAGrad and GAR inherit
the selected clipping and weight decay and search their own learning rate and
coefficient ($c$ and $\lambda_{\mathrm{align}}$, respectively), with CAGrad's
inner learning rate fixed at $0.1$. Table~\ref{tab:appendix_top1_head_hparams}
lists the selected learning rates, the shared clipping and weight decay, and
the selected coefficients.

\begin{table}[H]
\centering
\scriptsize
\caption{Final configurations for the trainable RoBERTa classification-head top-1 comparison. All five mixtures use E8K1 straight-through routing on the sequence-level router, rank-16 LoRA experts on the classification head ($\alpha=16$), BF16 forward/backward with FP32 router numerics, sequence length 256, per-task batch size 32, same-task groups of eight examples, warmup ratio 0.1, and 2,000 optimizer updates. Gradient clipping and weight decay are shared by the three methods within each mixture.}
\label{tab:appendix_top1_head_hparams}
\setlength{\tabcolsep}{5pt}
\renewcommand{\arraystretch}{1.03}
\begin{tabular}{@{}lrrrl@{}}
\toprule
Method & Learning rate & Grad.\ clip & Weight decay & Method-specific setting \\
\midrule
\multicolumn{5}{@{}l}{\textbf{[QNLI, BoolQ, RTE, PAWS, WiC]} \textnormal{(5 tasks)}} \\*
Baseline & $1.2\mathrm{e}{-4}$ & $1.1$ & $0.02$ & -- \\*
CAGrad & $1.0\mathrm{e}{-4}$ & $1.1$ & $0.02$ & $c=0.42$; inner lr $=0.1$ \\*
\textbf{GAR} & $1.0\mathrm{e}{-4}$ & $1.1$ & $0.02$ & $\lambda_{\mathrm{align}}=10^{-3}$; load norm. \\
\midrule
\multicolumn{5}{@{}l}{\textbf{[QNLI, BoolQ, RTE, PAWS, ANLI, CB]} \textnormal{(6 tasks)}} \\*
Baseline & $1.0\mathrm{e}{-4}$ & $0.8$ & $0.02$ & -- \\*
CAGrad & $1.2\mathrm{e}{-4}$ & $0.8$ & $0.02$ & $c=0.26$; inner lr $=0.1$ \\*
\textbf{GAR} & $9.0\mathrm{e}{-5}$ & $0.8$ & $0.02$ & $\lambda_{\mathrm{align}}=1.2\mathrm{e}{-3}$; load norm. \\
\midrule
\multicolumn{5}{@{}l}{\textbf{[QNLI, BoolQ, RTE, PAWS, WiC, CoLA, SST-2]} \textnormal{(7 tasks)}} \\*
Baseline & $7.0\mathrm{e}{-5}$ & $1.0$ & $0.01$ & -- \\*
CAGrad & $1.2\mathrm{e}{-4}$ & $1.0$ & $0.01$ & $c=0.26$; inner lr $=0.1$ \\*
\textbf{GAR} & $1.1\mathrm{e}{-4}$ & $1.0$ & $0.01$ & $\lambda_{\mathrm{align}}=10^{-3}$; load norm. \\
\midrule
\multicolumn{5}{@{}l}{\textbf{[QNLI, BoolQ, RTE, WiC, CoLA, SST-2, MRPC]} \textnormal{(7 tasks)}} \\*
Baseline & $9.0\mathrm{e}{-5}$ & $1.0$ & $0.01$ & -- \\*
CAGrad & $7.0\mathrm{e}{-5}$ & $1.0$ & $0.01$ & $c=0.35$; inner lr $=0.1$ \\*
\textbf{GAR} & $1.1\mathrm{e}{-4}$ & $1.0$ & $0.01$ & $\lambda_{\mathrm{align}}=10^{-4}$; load norm. \\
\midrule
\multicolumn{5}{@{}l}{\textbf{[QNLI, BoolQ, RTE, PAWS, WiC, CoLA, SST-2, CB]} \textnormal{(8 tasks)}} \\*
Baseline & $1.4\mathrm{e}{-4}$ & $0.8$ & $0.02$ & -- \\*
CAGrad & $1.1\mathrm{e}{-4}$ & $0.8$ & $0.02$ & $c=0.26$; inner lr $=0.1$ \\*
\textbf{GAR} & $1.4\mathrm{e}{-4}$ & $0.8$ & $0.02$ & $\lambda_{\mathrm{align}}=1.2\mathrm{e}{-2}$; load norm. \\
\bottomrule
\end{tabular}
\end{table}

\subsection{Trainable DeBERTa Full-Parameter FFN MoE}
\label{appsubsec:implc_protocol}

This setting retains the expert
insertion sites and token-level routing of the frozen LoRA-FFN comparison while changing
the expert parameterization and backbone trainability. The
\texttt{microsoft/deberta-v3-base} backbone is fully trainable, and each expert
is a complete feed-forward block at the final-layer feed-forward insertion
sites, without LoRA factorization, so the routed unit is a token rather than a
pooled sequence. All five mixtures use top-4 routing over eight experts
(E8K4) at sequence length 256, matching the frozen LoRA-FFN setting. This extension also retains a separate
prediction head with the appropriate output width for each task. All computation
is FP32. The alignment
pathway, the detached gradient observations, and the
$\mathcal{L}_{\mathrm{norm}}$ objective are those of
Section~\ref{sec:methodology_implementation}, with the same-task example groups
formed at the micro-batch sizes recorded in
Table~\ref{tab:appendix_implc_hparams}. This setting compares
Baseline, CAGrad, and GAR.

The three methods use the shared batch schedule and a 2,000-update budget.
Baseline first selects learning rate, clipping, and weight decay using
Appendix~\ref{appsubsec:cagrad_preselection}'s procedure. Subsequent methods
inherit the selected clipping and weight decay and tune their own learning
rates and applicable coefficients. Warmup is fixed at $0.1$.
The following table gives the configurations.

\begin{table}[H]
\centering
\scriptsize
\caption{Final trainable DeBERTa full-parameter FFN configurations for the five mixtures. All rows use \texttt{microsoft/deberta-v3-base} with a trainable backbone, full-parameter FFN experts at the final-layer feed-forward insertion sites, token-level top-$k$ routing, FP32 throughout, 2,000 optimizer updates, and the CAGrad-selected batch schedule of Appendix~\ref{appsubsec:cagrad_preselection}. The table reports the final values used at that locked schedule.}
\label{tab:appendix_implc_hparams}
\setlength{\tabcolsep}{3.5pt}
\renewcommand{\arraystretch}{1.02}
\AppendixTable{%
\begin{tabular}{@{}lrrrl@{}}
\toprule
Method & Learning rate & Grad.\ clip & Weight decay & Method-specific setting \\
\midrule
\multicolumn{5}{@{}l}{\textbf{[QNLI, BoolQ, RTE, PAWS, WiC]} \textnormal{(5 tasks; E8K4; batch $32\times5$, groups of 8; sequence length 256)}} \\*
Baseline & $1.3\mathrm{e}{-4}$ & $0.92$ & $0.02$ & -- \\*
CAGrad & $1.3\mathrm{e}{-4}$ & $0.92$ & $0.02$ & $c=0.22$; inner lr $=0.1$ \\*
\textbf{GAR} & $1.3\mathrm{e}{-4}$ & $0.92$ & $0.02$ & $\lambda_{\mathrm{align}}=1.0\mathrm{e}{-5}$; load norm. \\
\addlinespace[2pt]
\multicolumn{5}{@{}l}{\textbf{[QNLI, BoolQ, RTE, PAWS, ANLI, CB]} \textnormal{(6 tasks; E8K4; batch $32\times6$, groups of 8; sequence length 256)}} \\*
Baseline & $1.3\mathrm{e}{-4}$ & $0.86$ & $0.01$ & -- \\*
CAGrad & $1.3\mathrm{e}{-4}$ & $0.86$ & $0.01$ & $c=0.32$; inner lr $=0.1$ \\*
\textbf{GAR} & $1.3\mathrm{e}{-4}$ & $0.86$ & $0.01$ & $\lambda_{\mathrm{align}}=1.0\mathrm{e}{-3}$; load norm. \\
\addlinespace[2pt]
\multicolumn{5}{@{}l}{\textbf{[QNLI, BoolQ, RTE, PAWS, WiC, CoLA, SST-2]} \textnormal{(7 tasks; E8K4; batch $32\times7$, groups of 8; sequence length 256)}} \\*
Baseline & $6.9\mathrm{e}{-5}$ & $0.80$ & $0.01$ & -- \\*
CAGrad & $7.6\mathrm{e}{-5}$ & $0.80$ & $0.01$ & $c=0.28$; inner lr $=0.1$ \\*
\textbf{GAR} & $7.0\mathrm{e}{-5}$ & $0.80$ & $0.01$ & $\lambda_{\mathrm{align}}=1.0\mathrm{e}{-3}$; load norm. \\
\addlinespace[2pt]
\multicolumn{5}{@{}l}{\textbf{[QNLI, BoolQ, RTE, WiC, CoLA, SST-2, MRPC]} \textnormal{(7 tasks; E8K4; batch $32\times7$, groups of 8; sequence length 256)}} \\*
Baseline & $7.1\mathrm{e}{-5}$ & $0.80$ & $0.01$ & -- \\*
CAGrad & $1.3\mathrm{e}{-4}$ & $0.80$ & $0.01$ & $c=0.28$; inner lr $=0.1$ \\*
\textbf{GAR} & $7.0\mathrm{e}{-5}$ & $0.80$ & $0.01$ & $\lambda_{\mathrm{align}}=1.0\mathrm{e}{-3}$; load norm. \\
\addlinespace[2pt]
\multicolumn{5}{@{}l}{\textbf{[QNLI, BoolQ, RTE, PAWS, WiC, CoLA, SST-2, CB]} \textnormal{(8 tasks; E8K4; batch $32\times8$, groups of 8; sequence length 256)}} \\*
Baseline & $1.1\mathrm{e}{-4}$ & $0.80$ & $0.01$ & -- \\*
CAGrad & $7.0\mathrm{e}{-5}$ & $0.80$ & $0.01$ & $c=0.30$; inner lr $=0.1$ \\*
\textbf{GAR} & $7.0\mathrm{e}{-5}$ & $0.80$ & $0.01$ & $\lambda_{\mathrm{align}}=1.0\mathrm{e}{-3}$; load norm. \\
\bottomrule
\end{tabular}
}%
\end{table}

\subsection{DeBERTa Single-Task Configurations}

Both DeBERTa single-task adaptations use four experts with top-2 routing
(E4K2). The LoRA variant uses the frozen-backbone LoRA architecture described
in Appendix~\ref{appsubsec:cagrad_preselection}.
The FFN variant unfreezes the backbone and uses full-parameter FFN experts
at the same final-layer insertion sites, without a LoRA factorization.
The router, expert parameters, and classification head are trainable in both
variants; backbone parameters are also trainable in the FFN variant.
These architectural choices are shared by Baseline, CAGrad, and GAR.

Within each adaptation, the compared methods use the same experimental protocol.
For each task and adaptation, Baseline first selects learning rate, clipping,
and weight decay using Appendix~\ref{appsubsec:cagrad_preselection}'s procedure.
Subsequent methods inherit the selected clipping and weight decay and tune
their own learning rates and applicable coefficients. Warmup is fixed at $0.1$.
All runs use sequence length 128, effective batch size 16, and same-task
groups of eight examples. These sizes were selected by the same DeBERTa CAGrad
development-probe procedure, applied to MRPC, and are reused for all five tasks, both
adaptations, and all three methods; as in the multi-task settings, only the
batch sizes are transferred, not CAGrad's learning rate or coefficients. SST-2 and QQP use 3 epochs; CoLA, MRPC, and RTE use 5 epochs.
Tables~\ref{tab:appendix_hparams_4} and~\ref{tab:appendix_hparams_5}
list the LoRA and FFN configurations, respectively.

\raggedbottom
\subsubsection{Frozen-Backbone LoRA}

\begin{table}[H]
\centering
\scriptsize
\caption{Final configurations for DeBERTa single-task LoRA results (SST-2 and QQP use 3 epochs; CoLA, MRPC, and RTE use 5 epochs).}
\label{tab:appendix_hparams_4}
\begin{tabular}{@{}llrrrl@{}}
\toprule
Task & Method & Learning rate & Grad.\ clip & Weight decay & Method-specific setting \\
\midrule
CoLA & Baseline & $9.4\mathrm{e}{-6}$ & $1.1$ & $0$ & -- \\
 & CAGrad & $9.4\mathrm{e}{-6}$ & $1.1$ & $0$ & $c=0.31$; inner lr $=0.1$ \\
 & GAR & $9.4\mathrm{e}{-6}$ & $1.1$ & $0$ & $\lambda_{\mathrm{align}}=1.7\mathrm{e}{-4}$; load norm. \\
\midrule
MRPC & Baseline & $9.0\mathrm{e}{-5}$ & $1.0$ & $0.0044$ & -- \\
 & CAGrad & $7.8\mathrm{e}{-5}$ & $1.0$ & $0.0044$ & $c=0.91$; inner lr $=0.1$ \\
 & GAR & $5.5\mathrm{e}{-5}$ & $1.0$ & $0.0044$ & $\lambda_{\mathrm{align}}=1\mathrm{e}{-3}$; load norm. \\
\midrule
QQP & Baseline & $5.0\mathrm{e}{-5}$ & $1.0$ & $0.0044$ & -- \\
 & CAGrad & $7.8\mathrm{e}{-5}$ & $1.0$ & $0.0044$ & $c=0.91$; inner lr $=0.1$ \\
 & GAR & $1.4\mathrm{e}{-5}$ & $1.0$ & $0.0044$ & $\lambda_{\mathrm{align}}=1\mathrm{e}{-3}$; load norm. \\
\midrule
RTE & Baseline & $6.1\mathrm{e}{-5}$ & $1.9$ & $0.0024$ & -- \\
 & CAGrad & $4.5\mathrm{e}{-5}$ & $1.9$ & $0.0024$ & $c=0.48$; inner lr $=0.1$ \\
 & GAR & $5.5\mathrm{e}{-5}$ & $1.9$ & $0.0024$ & $\lambda_{\mathrm{align}}=1\mathrm{e}{-3}$; load norm. \\
\midrule
SST-2 & Baseline & $3.1\mathrm{e}{-5}$ & $1.4$ & $0.01$ & -- \\
 & CAGrad & $2.0\mathrm{e}{-5}$ & $1.4$ & $0.01$ & $c=0.69$; inner lr $=0.1$ \\
 & GAR & $2.2\mathrm{e}{-5}$ & $1.4$ & $0.01$ & $\lambda_{\mathrm{align}}=1\mathrm{e}{-3}$; load norm. \\
\bottomrule
\end{tabular}
\end{table}

\subsubsection{Trainable Full-Parameter FFN}

\begin{table}[H]
\centering
\scriptsize
\caption{Final configurations for DeBERTa single-task FFN results (SST-2 and QQP use 3 epochs; CoLA, MRPC, and RTE use 5 epochs).}
\label{tab:appendix_hparams_5}
\begin{tabular}{@{}llrrrl@{}}
\toprule
Task & Method & Learning rate & Grad.\ clip & Weight decay & Method-specific setting \\
\midrule
CoLA & Baseline & $2.0\mathrm{e}{-5}$ & $1.0$ & $0.017$ & -- \\
 & CAGrad & $2.0\mathrm{e}{-5}$ & $1.0$ & $0.017$ & $c=0.54$; inner lr $=0.1$ \\
 & GAR & $1.0\mathrm{e}{-5}$ & $1.0$ & $0.017$ & $\lambda_{\mathrm{align}}=2.8\mathrm{e}{-5}$; load norm. \\
\midrule
MRPC & Baseline & $1.1\mathrm{e}{-4}$ & $1.0$ & $0.011$ & -- \\
 & CAGrad & $7.8\mathrm{e}{-5}$ & $1.0$ & $0.011$ & $c=0.91$; inner lr $=0.1$ \\
 & GAR & $3.4\mathrm{e}{-5}$ & $1.0$ & $0.011$ & $\lambda_{\mathrm{align}}=7.6\mathrm{e}{-4}$; load norm. \\
\midrule
QQP & Baseline & $1.0\mathrm{e}{-5}$ & $1.0$ & $0.011$ & -- \\
 & CAGrad & $7.8\mathrm{e}{-5}$ & $1.0$ & $0.011$ & $c=0.91$; inner lr $=0.1$ \\
 & GAR & $1.7\mathrm{e}{-5}$ & $1.0$ & $0.011$ & $\lambda_{\mathrm{align}}=7.6\mathrm{e}{-4}$; load norm. \\
\midrule
RTE & Baseline & $2.6\mathrm{e}{-5}$ & $1.0$ & $0.046$ & -- \\
 & CAGrad & $4.0\mathrm{e}{-5}$ & $1.0$ & $0.046$ & $c=0.69$; inner lr $=0.1$ \\
 & GAR & $4.2\mathrm{e}{-5}$ & $1.0$ & $0.046$ & $\lambda_{\mathrm{align}}=2.4\mathrm{e}{-5}$; load norm. \\
\midrule
SST-2 & Baseline & $1.3\mathrm{e}{-5}$ & $1.4$ & $0.01$ & -- \\
 & CAGrad & $3.5\mathrm{e}{-5}$ & $1.4$ & $0.01$ & $c=0.48$; inner lr $=0.1$ \\
 & GAR & $1.3\mathrm{e}{-5}$ & $1.4$ & $0.01$ & $\lambda_{\mathrm{align}}=4\mathrm{e}{-3}$; load norm. \\
\bottomrule
\end{tabular}
\end{table}

\subsection{RoBERTa Fixed-Configuration Coefficient Ablations}
\label{appsubsec:lambda_ablation_protocol}

We report separate RoBERTa LoRA-FFN and classification-head LoRA coefficient
sweeps on the same five E8K4 dataset mixtures listed in Tables~\ref{tab:roberta_lambda_configs} and~\ref{tab:roberta_classifier_lambda_configs}. For each setting and mixture, we reuse the corresponding
selected GAR configuration, including learning rate, gradient clipping,
weight decay, batch and micro-batch sizes, sequence length, warmup ratio,
expert topology, backbone trainability, and numerical precision. The training
budget is reduced from 2,000 to 1,000 updates for every coefficient, with the
same five random seeds shared across coefficients. Within each sweep, only
$\lambda\in\{0,10^{-5},10^{-4},10^{-3},10^{-2}\}$ varies; no additional HPO
or per-coefficient configuration selection is performed. Both settings use
the common-template gradient observations defined in Section~\ref{Sec:Euc}.
At $\lambda=0$, the auxiliary backward pass is disabled. This task-loss-only
control retains the matched GAR configuration and is distinct from the
separately tuned Baseline in the main comparison.
The ablations use a common coefficient grid across mixtures. The coefficients
selected for the main comparisons appear in the configuration tables; for
[QNLI, BoolQ, RTE, PAWS, WiC, CoLA, SST-2, CB], the selected LoRA-FFN value,
$1.2\times10^{-2}$, lies above the grid maximum of $10^{-2}$.

\paragraph{Frozen-backbone LoRA-FFN.}
The experts are rank-16 LoRA adapters in the final FFN layer, with token-level
E8K4 routing, a frozen RoBERTa-base backbone, and FP32 computation.
Table~\ref{tab:roberta_lambda_configs} lists the five fixed configurations.

\begin{table}[H]
\centering
\small
\setlength{\tabcolsep}{3pt}
\caption{Fixed RoBERTa LoRA-FFN configurations for the five-mixture E8K4 coefficient sweep. Each mixture reuses its selected GAR configuration, with 1,000 updates and $\lambda\in\{0,10^{-5},10^{-4},10^{-3},10^{-2}\}$. All rows use a frozen backbone, final-layer LoRA-FFN experts, token-level routing, and FP32 computation, rank 16, scaling 16, load normalization ($\varepsilon=10^{-8}$), and warmup ratio 0.1. LR denotes learning rate, Clip the gradient-clipping threshold, WD weight decay, and Seq. the maximum sequence length. Brackets list constituent tasks; batch/group is per task.}
\label{tab:roberta_lambda_configs}
\begin{tabular}{@{}p{0.40\textwidth}rrrrr@{}}
\toprule
Mixture & LR & Clip & WD & Batch/group & Seq. \\
\midrule
{}[QNLI, BoolQ, RTE, PAWS, WiC] & $1.0\mathrm{e}{-3}$ & 1.1 & 0.010 & 32/8 & 256 \\
{}[QNLI, BoolQ, RTE, PAWS, ANLI, CB] & $1.0\mathrm{e}{-3}$ & 0.8 & 0.010 & 32/8 & 256 \\
{}[QNLI, BoolQ, RTE, PAWS, WiC, CoLA, \mbox{SST-2}] & $2.0\mathrm{e}{-3}$ & 0.8 & 0.010 & 32/8 & 256 \\
{}[QNLI, BoolQ, RTE, WiC, CoLA, \mbox{SST-2}, MRPC] & $1.0\mathrm{e}{-3}$ & 0.8 & 0.010 & 32/8 & 256 \\
{}[QNLI, BoolQ, RTE, PAWS, WiC, CoLA, \mbox{SST-2}, CB] & $1.0\mathrm{e}{-3}$ & 0.8 & 0.010 & 32/8 & 256 \\
\bottomrule
\end{tabular}
\end{table}

\paragraph{Trainable classification-head LoRA.}
The trainable backbone, classification-head LoRA experts, and sequence-level
E8K4 router follow Appendix~\ref{appsubsec:implb_protocol}. Computation uses
BF16 with FP32 router numerics. Table~\ref{tab:roberta_classifier_lambda_configs}
lists the five fixed configurations. Appendix~\ref{appsubsec:lambda_ablation_results}
reports mixture-level and per-task results for both settings without pooling
their accuracies.
The classification-head denominator ablation reuses the $\lambda=0$ and load-normalized
$\lambda=10^{-3}$ runs from this sweep. For the numerator-only objective,
each mixture uses the corresponding $\lambda=10^{-3}$ configuration and
replaces $-\sum_k\lVert G_k\rVert^2/(d_k(P)+\epsilon)$ with
$-\sum_k\lVert G_k\rVert^2$. No additional HPO or per-arm configuration
selection is performed.

\begin{table}[H]
\centering
\small
\setlength{\tabcolsep}{3pt}
\caption{Fixed RoBERTa classification-head LoRA configurations for the five-mixture E8K4 coefficient sweep. Each mixture reuses its selected GAR configuration, with 1,000 updates and $\lambda\in\{0,10^{-5},10^{-4},10^{-3},10^{-2}\}$. All rows use a trainable backbone, classification-head LoRA experts, sequence-level routing, and BF16 computation with FP32 router numerics, rank 16, scaling 16, load normalization ($\varepsilon=10^{-8}$), and warmup ratio 0.1. LR denotes learning rate, Clip the gradient-clipping threshold, WD weight decay, and Seq. the maximum sequence length. Brackets list constituent tasks; batch/group is per task.}
\label{tab:roberta_classifier_lambda_configs}
\begin{tabular}{@{}p{0.40\textwidth}rrrrr@{}}
\toprule
Mixture & LR & Clip & WD & Batch/group & Seq. \\
\midrule
{}[QNLI, BoolQ, RTE, PAWS, WiC] & $1.0\mathrm{e}{-4}$ & 1.0 & 0.032 & 32/8 & 256 \\
{}[QNLI, BoolQ, RTE, PAWS, ANLI, CB] & $1.2\mathrm{e}{-4}$ & 0.8 & 0.010 & 32/8 & 256 \\
{}[QNLI, BoolQ, RTE, PAWS, WiC, CoLA, \mbox{SST-2}] & $7.0\mathrm{e}{-5}$ & 0.8 & 0.010 & 32/8 & 256 \\
{}[QNLI, BoolQ, RTE, WiC, CoLA, \mbox{SST-2}, MRPC] & $7.0\mathrm{e}{-5}$ & 1.1 & 0.020 & 32/8 & 256 \\
{}[QNLI, BoolQ, RTE, PAWS, WiC, CoLA, \mbox{SST-2}, CB] & $7.0\mathrm{e}{-5}$ & 1.0 & 0.010 & 32/8 & 256 \\
\bottomrule
\end{tabular}
\end{table}

\subsection{Update Semantics}
\label{appsubsec:training_semantics}

In each multi-task update, we draw one batch of up to the stated effective
per-task size from each task loader. Within each experimental setting and mixture,
the compared methods share the task set, batch schedule, routed MoE
forward/dispatch semantics, and stated update budget.

For \textbf{Baseline}, the task loss averages the per-task mean losses with
equal task weight. Within task $t$, a group $m$ contributes weight
$|m|/(T B_t)$, where $B_t$ is the actual task-batch size and $T$ is the
number of tasks in the update. Thus a shorter final loader batch does not
reduce that task's total weight. GAR and the auxiliary-loss controls use the
same task-loss weights in every setting. For \textbf{CAGrad}, each input row to the gradient-combination rule is
the gradient of the mean loss within one same-task micro-batch. In multi-task
settings, all such rows from the constituent task batches are combined, with
scaling that preserves equal task weight and sample weighting within each task.
In single-task settings, the rows come from distinct micro-batches of the sole task.
CAGrad therefore still combines multiple gradient observations in a single-task
update. These rows test within-task micro-batch gradient combination.

Let $\theta_{\mathrm C}$ concatenate all trainable model parameters in a
fixed order, including the router, experts, classification head, and any
unfrozen backbone parameters. CAGrad combines the stated micro-batch loss
gradients with respect to $\theta_{\mathrm C}$ and assigns the resulting
vector to that same parameter set before global gradient clipping and the
AdamW step. Frozen parameters are excluded. The inner solver takes 10
projected-simplex gradient steps of size $0.1/\operatorname{tr}(GG^{\top})$,
where the rows of $G$ are the observations, to approximately minimize
$\langle g_w,g_0\rangle+c\|g_0\|\|g_w\|$, where $g_0$ is the mean
observation and $g_w$ is their simplex-weighted combination. It returns
$g_0+c\|g_0\|g_w/(\|g_w\|+\epsilon)$ without further rescaling.
The coefficient $c$ follows the selected configuration tables. The inner
learning rate $0.1$ is fixed; dividing it by $\operatorname{tr}(GG^{\top})$
makes the solver step invariant to the gradient scale.

Table~\ref{tab:comparator_definitions} summarizes the scalar auxiliary
objectives and their gradient paths.
\paragraph{Implemented control losses.}
Let $w_m=|m|/\sum_j|j|$ be the sample weight of micro-batch $m$, and
$v_m=|m|/(TB_t)$ its task-loss weight when $m$ comes from task $t$
(see above); the two coincide when every task batch is full.
For LoadPen, define the update-level example summaries
\[
P_e=\sum_m w_m\frac{1}{|m|}\sum_{n\in m}q_{n,e},\qquad
f_e^{(1)}=\sum_m w_m\frac{1}{|m|}\sum_{n\in m}
\mathbf{1}[e=\arg\max_j q_{n,j}].
\]
For SwitchAux, let $V_m$ contain the valid routed units in micro-batch $m$:
non-padding tokens for FFN routing and examples for classification-head routing.
Here $r_i$ is the configured routed-unit probability vector and $S_i$
is the top-4 expert set. With these probabilities and selection sets,
\[
P_{m,e}=\frac{1}{|V_m|}\sum_{i\in V_m}r_{i,e},\qquad
f_{m,e}^{(k)}=\frac{1}{k|V_m|}\sum_{i\in V_m}\mathbf{1}[e\in S_i].
\]
Both controls use the frequency--probability product form of auxiliary
load balancing \citep{switch_transformer}. Both hard-frequency vectors sum
to one and are detached. LoadPen forms its
product after aggregating examples across the update. SwitchAux averages
micro-batch products with weights $u_m$: $u_m=w_m$ for frozen FFN experts, and
$u_m=v_m$ for the trainable head, where each product is added to its
micro-batch task loss. Neither objective is the diagnostic LVar.

For adapted STGC, let $z_i$ be the router logits and $C_{m,i,e}$ indicate a
valid selected assignment whose conflict score is below the fixed threshold
$\tau_{\mathrm{STGC}}=0$. The score averages the cosine similarities of the
two LoRA virtual-bias gradient blocks to their respective means over units
assigned to expert $e$ in that micro-batch. For an expert delta
$sB_eA_e x$, these detached proxy blocks are $sB_e^\top\delta_{i,e}$ and
$s\delta_{i,e}$, where $\delta_{i,e}$ is the task-loss derivative at the expert
output. They detect conflict without introducing trainable biases.
Writing $N_m^C=\sum_{i,e}C_{m,i,e}$, the adapted conflict loss is
\[
\mathcal C_m=-\frac{1}{E\max(1,N_m^C)}
\sum_{i,e}C_{m,i,e}\log[\mathrm{softmax}(-z_i)]_e .
\]
It is zero when no selected assignment conflicts. The mask is detached;
the loss differentiates through the router logits. Each $\mathcal C_m$ is
added to its micro-batch task loss and therefore carries weight $v_m$.

\begin{table}[H]
\centering\footnotesize
\caption{Implemented scalar control objectives and their auxiliary gradient paths.
Task-loss gradients follow the ordinary training path in every row.
CAGrad is a gradient-combination control, described separately in the update semantics.}
\label{tab:comparator_definitions}
\setlength{\tabcolsep}{4pt}
\renewcommand{\arraystretch}{1.12}
\begin{tabular}{@{}lp{0.48\linewidth}p{0.31\linewidth}@{}}
\toprule
Method & Objective & Auxiliary gradient path \\
\midrule
Baseline & $\mathcal L_{\mathrm{task}}$ & None. \\
STGC & $\mathcal L_{\mathrm{task}}+\beta_{\mathrm{STGC}}\sum_mv_m\mathcal C_m$ & Through logits; conflict masks detached. \\
STGC+Load & $\mathcal L_{\mathrm{task}}+\beta_{\mathrm{STGC}}\sum_mv_m\mathcal C_m+\lambda_{\mathrm{load}}E\sum_e\sg(f_e^{(1)})P_e$ & Through logits; conflict masks and hard frequencies detached. \\
LoadPen & $\mathcal L_{\mathrm{task}}+\lambda_{\mathrm{load}}E\sum_e\sg(f_e^{(1)})P_e$ & Directly through the router; routing inputs detached. \\
SwitchAux & $\mathcal L_{\mathrm{task}}+\alpha_{\mathrm{switch}}\sum_mu_m E\sum_e\sg(f_{m,e}^{(k)})P_{m,e}$ & Directly through the router for frozen FFN; ordinary routing path for the trainable head. \\
\bottomrule
\end{tabular}
\end{table}
All resulting gradients undergo one global-norm clip before the
optimizer step.
The trainable-head STGC and SwitchAux paths can also update the backbone
through the routing inputs. The reported STGC objective consists only of task
loss and the conflict term; LoadPen and SwitchAux are separate controls.
STGC+Load combines the STGC conflict term with the LoadPen term. Its model-training learning rate, $\beta_{\mathrm{STGC}}$, and $\lambda_{\mathrm{load}}$ are selected independently for every setting and mixture under the common HPO procedure; the selected values are reported in the grouped configuration tables. Its endpoints are reported in the corresponding grouped result tables.

Each task batch is traversed in same-task example groups, called
micro-batches here. Their size follows the shared schedule in
Appendix~\ref{appsubsec:cagrad_preselection}; a final group may be shorter.
These groups fix the granularity at which gradient observations are read out;
task-loss weights follow the equal-task rule above. The auxiliary objective depends
on that granularity, which is held fixed within each method comparison.
All five mixtures use effective per-task batch size $B=32$ and groups of
eight examples, giving four groups per task per full-batch update; single-task $B=16$ runs use two groups of eight. The selected group size is excluded from subsequent method-specific
coordinate sweeps and local confirmation.

For \textbf{GAR}, $m$ indexes one such group and $\ell_m$ is the ordinary loss
averaged over that group. The gradient observation $g_m$ is formed as in
Section~\ref{Sec:Euc}: the group-loss gradients with respect to the selected trainable expert
parameters are summed entrywise across experts using matching parameter names,
shapes, and local order before inner products are computed. The template
contains one expert's LoRA parameters in the LoRA settings and one expert's
full FFN parameters in the full-parameter FFN settings.
Each $g_m$ is obtained within the same training batch and from the same task
loss as Baseline. Auxiliary router probabilities are recomputed from detached
routing inputs. Let $q_n$ denote the resulting probability vector for example
$n$; in token-routed FFN settings, $q_n$ first averages the configured token
probabilities over that example's non-padding tokens. In sequence-routed
classification-head settings, $q_n$ is the configured per-example router
probability vector. The auxiliary branch then sets
\[
p_m=\frac{1}{|m|}\sum_{n\in m}q_n,
\]
with no additional top-$k$ operation after group averaging. This construction
lies on the assignment simplex of Section~\ref{sec:variational_formulation},
although the group mean need not be $k$-sparse. The task loss uses the same
equal-task weights as the baselines, while $\mathcal{L}_{\mathrm{norm}}$ is
computed from the paired gradient observations and router summaries. The
alignment loss is backpropagated only through the router branch; task
parameters remain driven by the task loss.

\subsection{Validation Sets and Runtime Metadata}
\label{appsubsec:validation_sets_and_runtime_metadata}

Supervised accuracy results use each task's complete benchmark-provided
labeled validation or development split. Final runs use the complete
corresponding training split. The five mixtures draw from GLUE
\citep{glue_wang2019}, SuperGLUE \citep{superglue_wang2019},
PAWS \citep{paws_zhang2019}, and ANLI \citep{anli_nie2020}; ANLI uses rounds
R1--R3 for both training and validation.
Setting-specific selection details accompany the configuration tables above.
The exact task composition of each mixture matches the sets listed in
Section~\ref{sec:empirical_summary} and Appendix~\ref{app:full_empirical_results}.

RoBERTa and DeBERTa use FP32, while Qwen3-1.7B and Qwen3-8B use BF16 for throughput and
memory efficiency; the trainable RoBERTa classification-head setting uses BF16
forward and backward passes with FP32 router numerics
(Appendix~\ref{appsubsec:implb_protocol}). Within each backbone and experimental setting, all
compared methods use the same numerical precision, so no reported comparison
mixes precisions across methods. The main multi-task experiments use one
NVIDIA RTX PRO 6000 Blackwell Server Edition GPU per run with PyTorch
2.8.0+cu128 (CUDA 12.8), without distributed expert-parallel execution.
Each run records total wall-clock time and mean training-step
time.

Metric definitions and aggregation rules are centralized in
Appendix~\ref{app:experimental_protocol}.

\subsubsection{Existing Assets, Code Release, and Societal-Risk Scope}
\label{appsubsec:assets_and_impact}

All datasets and pretrained backbones used in the experiments are existing
public assets accessed through their standard benchmark or model-provider
interfaces. Code to be released at \url{https://github.com/lyclyq/MoE_arxiv} will include a reference
implementation of the objective, the training and diagnostic computation code, and
architecture presets for the reported settings; it will not redistribute raw benchmark
data or pretrained model weights. Users of the code will therefore need to
obtain the underlying assets from their original providers and comply with the
corresponding licenses, model cards, and terms of use.
Table~\ref{tab:asset_license_scope} lists the asset groups and their scope.

\begin{table}[H]
\centering
\scriptsize
\caption{Existing assets used by the experiments. We cite the original creators, describe the role of each asset, and avoid redistributing raw datasets or pretrained model weights with the code release.}
\label{tab:asset_license_scope}
\AppendixTable{%
\begin{tabular}{p{0.19\textwidth}p{0.23\textwidth}p{0.23\textwidth}p{0.27\textwidth}}
\toprule
Asset group & Role in experiments & Source/citation & License / terms identifier \\
\midrule
GLUE tasks (CoLA, MRPC, QQP, RTE, QNLI, SST-2) & Classification/paraphrase/NLI mixture components and single-task checks & GLUE benchmark \citep{glue_wang2019} & Upstream benchmark terms and original task dataset licenses; not redistributed. \\
SuperGLUE tasks (BoolQ, WiC, CB) & Five-to-eight-task mixtures & SuperGLUE benchmark \citep{superglue_wang2019} & Upstream benchmark terms and original task dataset licenses; not redistributed. \\
PAWS and ANLI & Paraphrase and adversarial-NLI components of larger mixtures & PAWS \citep{paws_zhang2019}; ANLI \citep{anli_nie2020} & License/terms specified by the upstream dataset cards or providers; not redistributed. \\
Qwen3, RoBERTa, DeBERTa & Pretrained backbones for sparse MoE adaptation & Qwen3 \citep{qwen3_yang2025}; RoBERTa \citep{roberta_liu2019}; DeBERTa \citep{deberta_he2021} & Model-card licenses and provider terms for the corresponding pretrained weights; not redistributed. \\
PyTorch, Transformers, Datasets & Training and data-loading software dependencies & Public open-source packages listed with the code release & Upstream open-source package licenses and versions documented with the code release. \\
\bottomrule
\end{tabular}
}
\end{table}

The societal-impact scope is indirect. This work is methodological and does not
introduce new datasets, user-facing systems, or new generative model
capabilities. A potential positive impact is better use of expert capacity in multi-task
sparse models. Potential risks are
deployment-mediated: better routing and training efficiency could lower the cost
of multi-task models in sensitive applications, where fairness, privacy, and
safety evaluations remain necessary before deployment.

\subsection{Runtime Overhead: DeBERTa Measurements}
\label{appsubsec:runtime_overhead_deberta_single}

The runtime measurements are single-GPU FP32 DeBERTa runs: single-task
MRPC runs for the LoRA and FFN adaptations, and a frozen LoRA-FFN five-task
mixture. Relative to Baseline, GAR adds an alignment branch whose auxiliary
gradient is directed to the router and that,
within one update, holds the detached gradient observations and the detached
routing inputs (hidden states and padding masks) needed to recompute the
differentiable router summaries of Algorithm~\ref{alg:training}; these buffers
are released after the optimizer step. Let $m$ denote the number of gradient
observations in an update, $E$ the number of experts, and $P_{\mathrm{tr}}$ the
number of selected trainable parameters in one expert. The additional
bookkeeping is $O(mEP_{\mathrm{tr}}+mE)$ arithmetic plus the route-summary
recomputation; it introduces no multi-GPU communication or expert-parallel
state. The
gradient-buffer term is smaller for LoRA because $P_{\mathrm{tr}}$ is the LoRA
adapter parameter count. Table~\ref{tab:runtime_overhead_deberta_single}
reports the step and wall-clock measurements, which also include
route-summary recomputation and host overhead. The MRPC measurements use
two gradient observations per update; the five-task mixture uses 20 (five
tasks with four same-task groups each), where GAR takes $1.04\times$
Baseline's mean step time over five seeds.

\begin{table}[H]
\centering
\scriptsize
\caption{Runtime summary for DeBERTa runs, Baseline versus GAR. Single-task
MRPC rows: the frozen-backbone LoRA rows report a one-epoch local
probe with top-$k$ gating, and the non-frozen FFN rows report the corresponding
five-seed summaries; both use effective batch size 16 and two gradient
observations per update. Five-task rows: frozen-backbone LoRA-FFN runs on
[QNLI, BoolQ, RTE, PAWS, WiC] with E8K4 routing, per-task batch size 32, and
same-task groups of eight (20 gradient observations per update), over 500
updates and five seeds; step time excludes validation and model loading, and
wall time is the corresponding 500-update training time. All rows use the same
data-loading and scheduling path as the final runs.
Relative factors compare mean step time with the corresponding baseline; for the
five-seed rows, step and wall time are reported as mean $\pm$ population
standard deviation.}
\label{tab:runtime_overhead_deberta_single}
\begin{tabular}{llccc}
\toprule
Setting & Method & Step (s) & Wall (s) & Factor \\
\midrule
\multicolumn{5}{@{}l}{\emph{Single-task MRPC}} \\
Frozen LoRA & Baseline & 0.0528 & 13.27 & 1.00$\times$ \\
Frozen LoRA & GAR & 0.0593 & 14.77 & 1.12$\times$ \\
Trainable FFN & Baseline & $0.319{\pm}0.003$ & $409.6{\pm}4.5$ & 1.00$\times$ \\
Trainable FFN & GAR & $0.329{\pm}0.002$ & $422.5{\pm}2.0$ & 1.03$\times$ \\
\midrule
\multicolumn{5}{@{}l}{\emph{Five-task mixture [QNLI, BoolQ, RTE, PAWS, WiC]}} \\
Frozen LoRA-FFN & Baseline & $0.561{\pm}0.012$ & $280.7{\pm}5.9$ & 1.00$\times$ \\
Frozen LoRA-FFN & GAR & $0.584{\pm}0.005$ & $292.0{\pm}2.3$ & 1.04$\times$ \\
\bottomrule
\end{tabular}
\end{table}

\section{Metric Definitions and Aggregation}
\label{app:experimental_protocol}

All endpoint tables report final-checkpoint summaries. Let $S$ be the
number of final seeds, $T$ the number of tasks in a mixture, and $E$ the number
of experts, denoted $K$ in Sections~\ref{sec:variational_formulation}--\ref{sec:methodology_implementation};
a label such as E8K4 gives $E$ and the number of experts selected per routed unit. For seed $s$, task accuracy is $a_t^{(s)}=c_t^{(s)}/N_t$, where $c_t^{(s)}$
counts correctly classified validation examples and $N_t$ is the size of the
task's labeled evaluation split. Each input sequence or sequence pair
contributes one prediction; accuracy is not averaged over tokens or batches.
Define the equal-task macro average
\[
a^{(s)}=\frac{1}{T}\sum_{t=1}^{T}a_t^{(s)}.
\]
Thus, each task contributes the same weight regardless of its validation-set
size. Let
$\boldsymbol{\ell}^{(s)} = (\ell_1^{(s)}, \ldots, \ell_E^{(s)})$ denote the
final expert-load vector recorded from the router, with
$\sum_{e=1}^{E} \ell_e^{(s)} = 1$.

\paragraph{Observation units across settings.}
The routed unit and the statistical aggregation are distinct. FFN experts
route non-padding tokens; classification-head experts route pooled examples.
Accuracy uses example-level predictions in both settings. Association metrics
use routed-unit selection events, whereas gradient purity and cosine metrics
use same-task micro-batch gradients averaged within each task. Training-time
group observations are defined in Section~\ref{sec:methodology_implementation};
the checkpoint gradient probes are defined below. These definitions apply to
every method within a setting.

\paragraph{Construction of the load vector.}
The FFN endpoint tables use router probability mass. With $q_{n,e}^{(s)}$
the per-example router summary defined in
Section~\ref{sec:methodology_implementation},
\[
\ell_e^{(s)}=\frac{1}{N_s}\sum_{n=1}^{N_s}q_{n,e}^{(s)}.
\]
Here $N_s$ counts validation examples pooled across the task loaders. For
token-routed FFN experts, each $q_n$ first averages the configured post-top-$k$
probabilities over that example's non-padding tokens. In the frozen top-1 extension,
these token gates are one-hot in the forward computation, so the same
averaging rule records hard expert-selection mass. Thus examples receive
equal load weight, while tasks contribute according to their validation-set
sizes; this differs from equal-task macro accuracy. No top-$k$ truncation is
applied to the averaged load vector.
The trainable classification-head diagnostics instead use normalized expert
selection counts from the contingency table defined below:
\[
\ell_e^{(s)}=
\frac{\sum_t C_{t,e}^{(s)}}{\sum_{t,j}C_{t,j}^{(s)}}.
\]
These counts include each of the $k$ selected experts once per example. LVar and Util
always use the same load vector within a setting. The FFN and classification-
head routing summaries are reported separately; their absolute load values
represent probability mass and selection frequency, respectively.

\paragraph{Accuracy and seed standard deviation.}
The reported final accuracy is
\[
\mathrm{Acc} = \frac{1}{S}\sum_{s=1}^{S} a^{(s)},
\]
and the accompanying seed standard deviation is the population standard
deviation over the completed final-evaluation seeds:
\[
\mathrm{Std} = \sqrt{\frac{1}{S}\sum_{s=1}^{S} \bigl(a^{(s)} - \mathrm{Acc}\bigr)^2}.
\]
We use this equal-task macro accuracy on the complete benchmark-provided
labeled evaluation split for reported cross-task aggregation. Task-standard
metrics, CoLA MCC and MRPC/QQP F1, are reported with the single-task results.

\paragraph{Load variance.}
Within each seed, the final expert-load variance is computed from the final
load vector as
\[
\mathrm{LVar}^{(s)} = \frac{1}{E}\sum_{e=1}^{E}\Bigl(\ell_e^{(s)} - \frac{1}{E}\Bigr)^2,
\]
which is exactly the population variance of the final expert loads. The tabled
load-variance summary is then averaged across seeds:
\[
\mathrm{LVar} = \frac{1}{S}\sum_{s=1}^{S}\mathrm{LVar}^{(s)}.
\]
This population-variance definition is used for all reported load-variance values in the
paper. The multi-task experiments use $E=8$; for the single-task E4K2 controls, $E=4$ and
$0 \leq \mathrm{LVar} \leq (E-1)/E^2 = 0.1875$; for $E=8$ the bound is
$7/64\approx0.109$.

\paragraph{Scope of load-balance metrics.}
Load variance characterizes routing organization in the matched single-GPU
experiments. Together with gradient-mass expert purity and utilization, it
describes expert usage alongside validation performance. Measured wall-clock
costs are reported separately in
Appendix~\ref{appsubsec:runtime_overhead_deberta_single}.

\paragraph{Expert utilization.}
An expert is counted as \emph{utilized} in seed $s$ if its final load is at least
half of the uniform-load baseline, that is,
\[
\ell_e^{(s)} \ge \frac{1}{2E}.
\]
The per-seed utilization ratio is therefore
\[
\mathrm{Util}^{(s)} = \frac{1}{E}\sum_{e=1}^{E}
\mathbf{1}\!\left[\ell_e^{(s)} \ge \frac{1}{2E}\right],
\]
and the reported utilization is
\[
\mathrm{Util} = \frac{1}{S}\sum_{s=1}^{S}\mathrm{Util}^{(s)}.
\]

\paragraph{Gradient diagnostic observations.}
\label{app:gradient_probe_definition}
At a checkpoint, the diagnostic implementation runs in evaluation mode and
processes leading batches within the active partition of each unshuffled validation loader. Each loader
batch is split into same-task micro-batches using the training group size.
Let $B_{t,r}^{(s)}$, $r=1,\ldots,R_t^{(s)}$, denote the micro-batches actually
processed for task $t$ and seed $s$. For FFN routing, let $V_n$ be the
non-padding token positions of example $n$, $\ell_n^{(s)}$ its supervised
sequence-classification loss, and $u_{n,j,e}^{(s)}$ the output of expert $e$
at token $j$. Define that token's contribution to the expert-parameter gradient:
\[
v_{n,j,e}^{(s)}=
\left(\frac{\partial u_{n,j,e}^{(s)}}{\partial\theta_e}\right)^{\!\top}
\nabla_{u_{n,j,e}^{(s)}}\ell_n^{(s)}.
\]
Although the sequence loss can depend on all tokens, the chain rule gives
$\nabla_{\theta_e}\ell_n^{(s)}=\sum_{j\in V_n}v_{n,j,e}^{(s)}$.
The detached FFN diagnostic observation is therefore
\[
g_{t,e}^{(s)}=\operatorname{stopgrad}\!\left[
\frac{1}{R_t^{(s)}}\sum_{r=1}^{R_t^{(s)}}
\frac{1}{|B_{t,r}^{(s)}|}\sum_{n\in B_{t,r}^{(s)}}
\sum_{j\in V_n}v_{n,j,e}^{(s)}\right].
\]
For classification-head experts, the routed unit is the pooled example, so
the innermost token sum is replaced by $\nabla_{\theta_e}\ell_n^{(s)}$.
The implementation obtains each micro-batch gradient by differentiating its
mean example loss; linearity makes this identical to the token-contribution
formula for FFN routing. Expert parameters follow a fixed local ordering.
Micro-batches receive equal weight. Token contributions are summed as vectors
before task-gradient norms or cosines are computed; no separate token-level
norm or token-level classification loss is used.
The stopping condition is checked between complete loader batches, so
$R_t^{(s)}$ counts the processed micro-batches rather than the configured
stopping threshold. This gradient pass is separate from forward evaluation.

\paragraph{Gradient-mass expert purity.}
Using the final-checkpoint observations $g_{t,e}^{(s)}$, define the normalized
task-mass distribution for an expert whose total diagnostic gradient mass
exceeds the numerical floor $10^{-12}$ as
\[
p_{t,e}^{(s)} =
\frac{\lVert g_{t,e}^{(s)} \rVert_2}
{\sum_{t'} \lVert g_{t',e}^{(s)} \rVert_2}.
\]
For experts at or below this numerical floor, the distribution above is not
formed; we set the expert purity contribution to $0$ and include that expert in
the average.
The purity of expert $e$ in seed $s$ is the dominant task share
\[
\mathrm{Purity}_{e}^{(s)} = \max_t p_{t,e}^{(s)}.
\]
We first average across experts within each seed,
\[
\mathrm{Purity}^{(s)} = \frac{1}{E}\sum_{e=1}^{E}\mathrm{Purity}_{e}^{(s)},
\]
and then average across seeds:
\[
\mathrm{Pur.} = \frac{1}{S}\sum_{s=1}^{S}\mathrm{Purity}^{(s)}.
\]
This diagnostic measures per-expert dominance in gradient mass, including
task-dependent gradient scales. We report it jointly with validation accuracy,
load variance, and utilization.

\paragraph{Task--expert NMI and ARI for supervised finetuning.}
At the final forward evaluation for seed $s$, let $C^{(s)}\in\mathbb{N}^{T\times E}$
be the contingency table of task labels and expert selections. For FFN
routing, each non-padding token contributes one count for each of its
top-$k$ selected experts, inheriting the task label of its input example.
For classification-head routing, each example contributes those counts once.
Counts are collected before token-to-example or example-to-group probability
averaging. Consequently, FFN association counts weight examples by their
numbers of valid tokens; classification-head counts weight examples equally.
Writing the corresponding empirical joint distribution as $q_{t,e}^{(s)}$ and
its marginals as $q_t^{(s)}$ and $q_e^{(s)}$, we use arithmetic-normalized
mutual information
\[
\mathrm{NMI}^{(s)}=
\frac{\sum_{t,e}q_{t,e}^{(s)}
\log\!\left(q_{t,e}^{(s)}/(q_t^{(s)}q_e^{(s)})\right)}
{\tfrac12\left(H(q_t^{(s)})+H(q_e^{(s)})\right)},
\]
with zero-mass cells omitted. ARI is the adjusted Rand index computed from the
same selection-event contingency table \citep{hubert1985ari}; it therefore
describes task association across top-$k$ expert-selection events rather than a
one-expert partition of the original examples. We compute both metrics
within each seed and then average over the five seeds. These are task--expert
association diagnostics, whereas the partition objective uses gradient
inner products; aligned gradients from different dataset tasks can therefore
favor a shared expert under the objective.
Gradient-mass purity uses per-task gradient norms, while NMI/ARI use
expert-selection counts.

\paragraph{Routed-unit structure purity.}
From the same selection-event contingency table $C^{(s)}$, define
\[
\mathrm{Purity}_{\mathrm{struct}}^{(s)} =
\frac{\sum_{e} \max_{t} C^{(s)}_{t,e}}{\sum_{t,e} C^{(s)}_{t,e}},
\]
the count-weighted share of top-$k$ selection events that belong to the
dominant task of their expert. Unlike gradient-mass purity, it counts routed
units rather than gradient norm and weights each expert by its selection mass,
so lightly used experts contribute little. It is reported for the trainable RoBERTa classification-head setting
(Appendix~\ref{appsubsec:implb_results}),
alongside NMI and ARI, and is averaged over seeds in the same way. It describes
the concentration of task labels within each expert's selection events. Since
$\sum_e\max_t C_{t,e}^{(s)}\ge\sum_e C_{t^\star,e}^{(s)}$ for the task $t^\star$
with the most routed units, structure purity is bounded below by that task's
share of the routed units (validation examples for classification-head
routing). The bound is attained when every expert is dominated by $t^\star$,
for example under task-independent routing, so values near it indicate
little task association within experts, regardless of load balance.

\paragraph{Routing entropy.}
Let $p_{n,e}^{(s)}$ denote the router probability assigned to expert $e$ for
validation example $n$ in seed $s$, with
$\sum_{e=1}^{E} p_{n,e}^{(s)} = 1$. The per-example routing entropy is
\[
H_n^{(s)} = -\sum_{e=1}^{E} p_{n,e}^{(s)} \log p_{n,e}^{(s)},
\]
For FFN routing, $p_n=q_n$ is the non-padding-token mean probability
vector defined above, so this is the entropy of that mean vector, rather
than the mean of token entropies. Classification-head routing uses its
per-example gate directly. The reported per-seed routing entropy is the validation average
\[
\mathrm{Ent}^{(s)} = \frac{1}{N_s}\sum_{n=1}^{N_s} H_n^{(s)}.
\]
When a single scalar is reported, we average over seeds:
\[
\mathrm{Ent} = \frac{1}{S}\sum_{s=1}^{S}\mathrm{Ent}^{(s)}.
\]
For the routing-entropy trend plots in Appendix~\ref{app:full_empirical_results}, the displayed value is
further normalized by $\log E$, relative to the entropy of a uniform
distribution over all $E$ experts. For the classification-head E8K4 gates used in all
reported trajectories, the per-example normalized maximum is
$\log 4/\log 8=2/3$.

\paragraph{Gradient cosine diagnostics.}
Using the same detached per-task expert gradients $g_{t,e}^{(s)}$ in the
local coordinates of each expert, the
intra-expert coherence of expert $e$ in seed $s$ is the mean pairwise cosine
similarity across task gradients within that expert:
\[
\mathrm{Intra}_{e}^{(s)} =
\frac{1}{\binom{T}{2}}
\sum_{1 \le i < j \le T}
\cos\!\bigl(g_{i,e}^{(s)}, g_{j,e}^{(s)}\bigr),
\]
where $T$ is the number of tasks in the mixture. The reported per-seed
intra-expert coherence is the average over experts,
\[
\mathrm{Intra}^{(s)} = \frac{1}{E}\sum_{e=1}^{E}\mathrm{Intra}_{e}^{(s)}.
\]
The reported intra-expert cosine includes all task pairs, assigning zero whenever
either gradient is zero. At each expert, it equals the active-pair mean multiplied
by the fraction of task pairs that are active, and is zero when no pair is active.
This all-task-pair summary reflects both coverage and directional agreement.
For inter-expert similarity, we first form the expert-level aggregate gradient
\[
\bar g_e^{(s)} = \frac{1}{T}\sum_{t=1}^{T} g_{t,e}^{(s)},
\]
and then compute the mean pairwise cosine similarity across experts, comparing
the local gradient vectors in the same parameter-entry order across the
identically structured expert blocks:
\[
\mathrm{Inter}^{(s)} =
\frac{1}{\binom{E}{2}}
\sum_{1 \le e < e' \le E}
\cos\!\bigl(\bar g_e^{(s)}, \bar g_{e'}^{(s)}\bigr).
\]
Whenever Appendix~\ref{app:full_empirical_results} shows these diagnostics as trajectories, the same formulas
are evaluated at each sampled checkpoint, averaged over mixtures within each run,
and then averaged across the five runs.

\paragraph{Task-specific F1 and MCC.}
For the single-task binary classification results, let $\mathrm{TP}$,
$\mathrm{TN}$, $\mathrm{FP}$, and $\mathrm{FN}$ denote the validation-set
confusion counts for one seed. Precision and recall are
\[
\mathrm{Prec}=\frac{\mathrm{TP}}{\max(1,\mathrm{TP}+\mathrm{FP})},
\qquad
\mathrm{Rec}=\frac{\mathrm{TP}}{\max(1,\mathrm{TP}+\mathrm{FN})}.
\]
The reported F1 score is
\[
\mathrm{F1}=
\begin{cases}
\displaystyle
\frac{2\,\mathrm{Prec}\,\mathrm{Rec}}{\mathrm{Prec}+\mathrm{Rec}},
& \text{if }\mathrm{Prec}+\mathrm{Rec}>0,\\[6pt]
0, & \text{otherwise,}
\end{cases}
\]
and the Matthews correlation coefficient is
\[
\mathrm{MCC}=
\begin{cases}
\displaystyle
\frac{\mathrm{TP}\,\mathrm{TN}-\mathrm{FP}\,\mathrm{FN}}
{\sqrt{(\mathrm{TP}+\mathrm{FP})(\mathrm{TP}+\mathrm{FN})(\mathrm{TN}+\mathrm{FP})(\mathrm{TN}+\mathrm{FN})}},
& \text{if the denominator is positive,}\\[10pt]
0, & \text{otherwise.}
\end{cases}
\]
In Appendix~\ref{appsubsec:deberta_single_task_lora_results}--\ref{appsubsec:deberta_single_task_ffn_results},
MRPC and QQP report F1 as the standard task-specific metric,
while CoLA reports MCC.

\paragraph{Higher-level summaries.}
When the main text reports backbone-level or overall multi-task summaries, it
first forms the equal-task macro average above within each seed and mixture,
then averages equally across seeds and the listed task mixtures. Any final
cross-backbone summary weights the listed backbones equally. All aggregation
and differencing use the unrounded per-seed values; decimal rounding is applied
only to the displayed table entries and prose summaries.

\paragraph{Uncertainty convention.}
Paired intervals summarize variation over five seeds conditional on the
selected configurations. We report nominal 95\% Student-$t$ intervals with
four degrees of freedom and no multiplicity adjustment. The same interval
convention applies to routing-diagnostic and coefficient-sweep comparisons.

\section{Full Empirical Results}
\label{app:full_empirical_results}

\paragraph{Metric guide.}
Accuracy is the equal-task macro validation accuracy, and seed standard deviation
describes its variation across final runs. Load variance (LVar) measures marginal
expert-load imbalance, while utilization is the fraction of experts receiving
non-negligible traffic. Gradient-mass purity is the per-expert concentration of
task-gradient norms; structure purity is the count-weighted dominant-task share
of expert selections. Normalized mutual information (NMI) and adjusted Rand
index (ARI) measure task--expert association. Intra-expert coherence and
inter-expert similarity summarize gradient direction, while normalized routing
entropy measures routing concentration. Appendix~\ref{app:experimental_protocol}
gives the formal definitions and aggregation rules.

\subsection{Multi-task results by backbone}
\label{appsubsec:task_grouped_multitask_results}

For DeBERTa, Qwen3-1.7B, and RoBERTa, the tables below report final-checkpoint
aggregates over five seeds under the frozen LoRA-FFN setting. The
DeBERTa and Qwen3-1.7B results form the two-backbone aggregate of
Table~\ref{tab:main_endpoint_controls}(b) and
Figure~\ref{fig:task_scale_accuracy_gain}; the frozen RoBERTa configuration is
reported separately in Appendix~\ref{appsubsec:roberta_multitask_tables}.
Within each backbone, the tables cover five dataset mixtures, identified by their bracketed task lists and grouped by task count. All seven
methods are evaluated on these three backbones; the Qwen3-8B extension evaluates Baseline, CAGrad, and GAR and is summarized
separately. RoBERTa additionally reports NMI and ARI for all five mixtures.

Over the five mixtures used in the main endpoint
summary, GAR has the
highest two-backbone accuracy ($0.7702$) and purity
($0.5645$), while STGC has the lowest LVar ($0.00101$).

The within-backbone diagnostic means exhibit different trade-offs. Across the
10 DeBERTa and Qwen3-1.7B combinations, GAR's LVar relative to Baseline increases
in DeBERTa's five- and six-task mixtures, and its utilization decreases in those two and in Qwen3-1.7B's
seven-task mixture containing MRPC. On DeBERTa, GAR
improves accuracy and purity relative to CAGrad, with slightly higher LVar
and lower utilization. The largest LVar reductions over Baseline occur in
the Qwen3-1.7B five- and six-task mixtures, where Baseline has highly
concentrated loads. The four simultaneous improvements in the main table
describe the equal-backbone aggregate across the five mixtures.
The two Qwen3-1.7B mixtures account for $85.2\%$ of the aggregate LVar
reduction and $69.0\%$ of the gradient-mass purity increase over Baseline.
As a leave-two-combinations-out check, the remaining 8
backbone--mixture combinations have a mean purity increase of $0.0494$
(versus $0.1277$ across all 10), a mean LVar reduction of $0.00292$, and
an accuracy gain of $1.20$ percentage points. Thus the magnitude of the
aggregate routing improvements is concentrated in these two combinations,
whereas the accuracy gain is not.

\subsubsection{RoBERTa}
\label{appsubsec:roberta_multitask_tables}

Frozen RoBERTa-base has limited downstream accuracy under the shared configuration; it is
reported separately from the two-backbone aggregate and serves as
the testbed for the top-1 extension
(Appendix~\ref{appsubsec:top1_roberta_results}) and the LoRA-FFN coefficient
sweep (Appendix~\ref{appsubsec:lambda_ablation_results}). On this
backbone, GAR improves accuracy and utilization and reduces LVar relative
to Baseline, while gradient-mass purity is essentially unchanged; adding it as a third
backbone gives paired gains of $+1.111$ $[+0.764,+1.457]$ points over Baseline
and $+0.878$ $[+0.596,+1.160]$ over CAGrad (per-mixture results in
Tables~\ref{tab:appendix_multitask_5_6_roberta} and~\ref{tab:appendix_multitask_7_8_roberta})
(Table~\ref{tab:paired_supervised_uncertainty}).

\begin{table}[H]
\centering
\scriptsize
\caption{RoBERTa five--six-task multi-task results for the seven supervised methods, with a frozen backbone, final-layer LoRA-FFN experts, and E8K4 routing. Purity is gradient-mass purity. RoBERTa additionally reports NMI and ARI task-label association diagnostics.}\label{tab:appendix_multitask_5_6_roberta}
\setlength{\tabcolsep}{3.4pt}
\renewcommand{\arraystretch}{1.03}
\AppendixTable{%
\begin{tabular}{@{}lrrrrrrr@{}}
\toprule
Method & Acc & Seed std & LVar & Purity & Util. & NMI & ARI \\
\midrule
\multicolumn{8}{@{}l}{\textbf{[QNLI, BoolQ, RTE, PAWS, WiC]} \textnormal{(5 tasks)}} \\*
Baseline & 0.6406 & 0.0061 & 0.01572 & 0.5185 & 0.550 & 0.00465 & 0.00275 \\*
CAGrad & 0.6427 & 0.0058 & 0.01135 & 0.4457 & 0.650 & 0.00713 & 0.00404 \\*
\textbf{GAR} & 0.6421 & 0.0062 & 0.00997 & 0.4817 & 0.700 & 0.00548 & 0.00333 \\*
STGC & 0.6424 & 0.0101 & 0.00322 & 0.3865 & 0.775 & 0.00285 & 0.00129 \\*
LoadPen & 0.6362 & 0.0057 & 0.01055 & 0.3961 & 0.650 & 0.00677 & 0.00348 \\*
SwitchAux & 0.6360 & 0.0093 & 0.00960 & 0.3965 & 0.625 & 0.00632 & 0.00364 \\*
STGC+Load & 0.6397 & 0.0095 & 0.00202 & 0.5139 & 0.925 & 0.00244 & 0.00177 \\
\addlinespace[2pt]
\multicolumn{8}{@{}l}{\textbf{[QNLI, BoolQ, RTE, PAWS, ANLI, CB]} \textnormal{(6 tasks)}} \\*
Baseline & 0.6384 & 0.0078 & 0.02125 & 0.5458 & 0.550 & 0.00724 & 0.00371 \\*
CAGrad & 0.6426 & 0.0054 & 0.01769 & 0.5454 & 0.550 & 0.00802 & 0.00424 \\*
\textbf{GAR} & 0.6450 & 0.0103 & 0.02055 & 0.5300 & 0.500 & 0.00717 & 0.00391 \\*
STGC & 0.6351 & 0.0093 & 0.00325 & 0.3098 & 0.850 & 0.00310 & 0.00124 \\*
LoadPen & 0.6317 & 0.0111 & 0.01924 & 0.3222 & 0.575 & 0.00891 & 0.00415 \\*
SwitchAux & 0.6430 & 0.0108 & 0.01910 & 0.3219 & 0.525 & 0.00910 & 0.00414 \\*
STGC+Load & 0.6311 & 0.0071 & 0.00169 & 0.5618 & 0.925 & 0.00259 & 0.00117 \\
\bottomrule
\end{tabular}
}%
\end{table}

\begin{table}[H]
\centering
\scriptsize
\caption{RoBERTa seven--eight-task multi-task results for the seven supervised methods, with a frozen backbone, final-layer LoRA-FFN experts, and E8K4 routing. Purity is gradient-mass purity. RoBERTa additionally reports NMI and ARI task-label association diagnostics.}\label{tab:appendix_multitask_7_8_roberta}
\setlength{\tabcolsep}{3.4pt}
\renewcommand{\arraystretch}{1.03}
\AppendixTable{%
\begin{tabular}{@{}lrrrrrrr@{}}
\toprule
Method & Acc & Seed std & LVar & Purity & Util. & NMI & ARI \\
\midrule
\multicolumn{8}{@{}l}{\textbf{[QNLI, BoolQ, RTE, PAWS, WiC, CoLA, SST-2]} \textnormal{(7 tasks)}} \\*
Baseline & 0.6809 & 0.0093 & 0.01216 & 0.3922 & 0.575 & 0.00758 & 0.00494 \\*
CAGrad & 0.6878 & 0.0159 & 0.01252 & 0.3903 & 0.650 & 0.00976 & 0.00636 \\*
\textbf{GAR} & 0.7030 & 0.0075 & 0.01155 & 0.4049 & 0.625 & 0.00604 & 0.00291 \\*
STGC & 0.6857 & 0.0151 & 0.00136 & 0.3746 & 0.975 & 0.00342 & 0.00165 \\*
LoadPen & 0.6826 & 0.0090 & 0.01572 & 0.3851 & 0.650 & 0.00614 & 0.00392 \\*
SwitchAux & 0.6932 & 0.0174 & 0.01214 & 0.3902 & 0.725 & 0.00733 & 0.00546 \\*
STGC+Load & 0.6917 & 0.0110 & 0.00175 & 0.4025 & 0.900 & 0.00288 & 0.00127 \\
\addlinespace[2pt]
\multicolumn{8}{@{}l}{\textbf{[QNLI, BoolQ, RTE, PAWS, WiC, CoLA, SST-2, CB]} \textnormal{(8 tasks)}} \\*
Baseline & 0.7081 & 0.0066 & 0.01827 & 0.3774 & 0.525 & 0.00648 & 0.00335 \\*
CAGrad & 0.6996 & 0.0062 & 0.02248 & 0.3834 & 0.575 & 0.00910 & 0.00309 \\*
\textbf{GAR} & 0.7182 & 0.0090 & 0.01575 & 0.3947 & 0.700 & 0.00627 & 0.00244 \\*
STGC & 0.7073 & 0.0167 & 0.00163 & 0.3649 & 0.950 & 0.00241 & 0.00119 \\*
LoadPen & 0.7198 & 0.0054 & 0.02704 & 0.3758 & 0.500 & 0.00746 & 0.00231 \\*
SwitchAux & 0.7108 & 0.0126 & 0.02325 & 0.3784 & 0.575 & 0.00697 & 0.00255 \\*
STGC+Load & 0.7109 & 0.0092 & 0.00211 & 0.3962 & 0.900 & 0.00246 & 0.00113 \\
\addlinespace[2pt]
\multicolumn{8}{@{}l}{\textbf{[QNLI, BoolQ, RTE, WiC, CoLA, SST-2, MRPC]} \textnormal{(7 tasks)}} \\*
Baseline & 0.7205 & 0.0090 & 0.02015 & 0.5409 & 0.575 & 0.00442 & 0.00260 \\*
CAGrad & 0.7189 & 0.0105 & 0.02032 & 0.5500 & 0.575 & 0.00391 & 0.00017 \\*
\textbf{GAR} & 0.7373 & 0.0084 & 0.00931 & 0.5647 & 0.750 & 0.00825 & 0.00416 \\*
STGC & 0.7194 & 0.0148 & 0.00101 & 0.5530 & 1.000 & 0.00207 & 0.00071 \\*
LoadPen & 0.7223 & 0.0068 & 0.01429 & 0.5467 & 0.575 & 0.00456 & 0.00143 \\*
SwitchAux & 0.7208 & 0.0069 & 0.01970 & 0.5453 & 0.600 & 0.00520 & 0.00075 \\*
STGC+Load & 0.7169 & 0.0073 & 0.00195 & 0.3604 & 0.925 & 0.00279 & 0.00094 \\
\bottomrule
\end{tabular}
}%
\end{table}

\subsubsection{DeBERTa}
\label{appsubsec:deberta_multitask_tables}

Tables~\ref{tab:appendix_multitask_5_6_deberta} and~\ref{tab:appendix_multitask_7_8_deberta}
report the five mixtures for DeBERTa.

\begin{table}[H]
\centering
\scriptsize
\caption{DeBERTa five--six-task multi-task results for the seven supervised methods, with a frozen backbone, final-layer LoRA-FFN experts, and E8K4 routing. Purity is gradient-mass purity.}\label{tab:appendix_multitask_5_6_deberta}
\setlength{\tabcolsep}{3.4pt}
\renewcommand{\arraystretch}{1.03}
\AppendixTable{%
\begin{tabular}{@{}lrrrrr@{}}
\toprule
Method & Acc & Seed std & LVar & Purity & Util. \\
\midrule
\multicolumn{6}{@{}l}{\textbf{[QNLI, BoolQ, RTE, PAWS, WiC]} \textnormal{(5 tasks)}} \\*
Baseline & 0.7763 & 0.0069 & 0.00741 & 0.5148 & 0.775 \\*
CAGrad & 0.7781 & 0.0072 & 0.01215 & 0.4988 & 0.575 \\*
\textbf{GAR} & 0.7910 & 0.0068 & 0.01698 & 0.6287 & 0.475 \\*
STGC & 0.7293 & 0.0151 & 0.00186 & 0.4135 & 0.975 \\*
LoadPen & 0.7738 & 0.0080 & 0.01083 & 0.4921 & 0.625 \\*
SwitchAux & 0.7719 & 0.0043 & 0.01175 & 0.4974 & 0.550 \\*
STGC+Load & 0.7584 & 0.0117 & 0.00103 & 0.4812 & 0.975 \\
\addlinespace[2pt]
\multicolumn{6}{@{}l}{\textbf{[QNLI, BoolQ, RTE, PAWS, ANLI, CB]} \textnormal{(6 tasks)}} \\*
Baseline & 0.7488 & 0.0078 & 0.02228 & 0.5084 & 0.500 \\*
CAGrad & 0.7523 & 0.0087 & 0.02290 & 0.4820 & 0.525 \\*
\textbf{GAR} & 0.7494 & 0.0045 & 0.03025 & 0.5870 & 0.325 \\*
STGC & 0.7285 & 0.0023 & 0.00116 & 0.4397 & 0.950 \\*
LoadPen & 0.7455 & 0.0065 & 0.01494 & 0.5227 & 0.575 \\*
SwitchAux & 0.7469 & 0.0093 & 0.01917 & 0.5275 & 0.625 \\*
STGC+Load & 0.7333 & 0.0036 & 0.00064 & 0.4353 & 1.000 \\
\bottomrule
\end{tabular}
}%
\end{table}

\begin{table}[H]
\centering
\scriptsize
\caption{DeBERTa seven--eight-task multi-task results for the seven supervised methods, with a frozen backbone, final-layer LoRA-FFN experts, and E8K4 routing. Purity is gradient-mass purity.}\label{tab:appendix_multitask_7_8_deberta}
\setlength{\tabcolsep}{3.4pt}
\renewcommand{\arraystretch}{1.03}
\AppendixTable{%
\begin{tabular}{@{}lrrrrr@{}}
\toprule
Method & Acc & Seed std & LVar & Purity & Util. \\
\midrule
\multicolumn{6}{@{}l}{\textbf{[QNLI, BoolQ, RTE, PAWS, WiC, CoLA, SST-2]} \textnormal{(7 tasks)}} \\*
Baseline & 0.8054 & 0.0064 & 0.02361 & 0.4093 & 0.450 \\*
CAGrad & 0.8060 & 0.0069 & 0.01480 & 0.4044 & 0.600 \\*
\textbf{GAR} & 0.8226 & 0.0049 & 0.01282 & 0.4496 & 0.725 \\*
STGC & 0.7988 & 0.0103 & 0.00119 & 0.3883 & 0.950 \\*
LoadPen & 0.8079 & 0.0073 & 0.01981 & 0.4242 & 0.600 \\*
SwitchAux & 0.8051 & 0.0057 & 0.01386 & 0.4152 & 0.600 \\*
STGC+Load & 0.8028 & 0.0064 & 0.00097 & 0.3953 & 0.975 \\
\addlinespace[2pt]
\multicolumn{6}{@{}l}{\textbf{[QNLI, BoolQ, RTE, PAWS, WiC, CoLA, SST-2, CB]} \textnormal{(8 tasks)}} \\*
Baseline & 0.7881 & 0.0067 & 0.01976 & 0.3923 & 0.550 \\*
CAGrad & 0.7929 & 0.0076 & 0.01114 & 0.4007 & 0.650 \\*
\textbf{GAR} & 0.8014 & 0.0132 & 0.00951 & 0.4518 & 0.650 \\*
STGC & 0.7843 & 0.0064 & 0.00113 & 0.3825 & 1.000 \\*
LoadPen & 0.7879 & 0.0110 & 0.01410 & 0.3968 & 0.575 \\*
SwitchAux & 0.7938 & 0.0067 & 0.01015 & 0.4156 & 0.650 \\*
STGC+Load & 0.8013 & 0.0099 & 0.00159 & 0.4413 & 0.925 \\
\addlinespace[2pt]
\multicolumn{6}{@{}l}{\textbf{[QNLI, BoolQ, RTE, WiC, CoLA, SST-2, MRPC]} \textnormal{(7 tasks)}} \\*
Baseline & 0.7930 & 0.0090 & 0.01290 & 0.5494 & 0.650 \\*
CAGrad & 0.7952 & 0.0112 & 0.01446 & 0.5561 & 0.650 \\*
\textbf{GAR} & 0.8060 & 0.0082 & 0.01054 & 0.5722 & 0.800 \\*
STGC & 0.7940 & 0.0115 & 0.00104 & 0.5578 & 0.975 \\*
LoadPen & 0.7953 & 0.0072 & 0.01247 & 0.5525 & 0.725 \\*
SwitchAux & 0.7917 & 0.0066 & 0.01194 & 0.5528 & 0.650 \\*
STGC+Load & 0.7764 & 0.0111 & 0.00321 & 0.3423 & 0.825 \\
\bottomrule
\end{tabular}
}%
\end{table}

\subsubsection{Qwen3-1.7B}
\label{appsubsec:qwen17_multitask_tables}

Tables~\ref{tab:appendix_multitask_5_6_qwen17} and~\ref{tab:appendix_multitask_7_8_qwen17}
report the corresponding Qwen3-1.7B endpoints.

\begin{table}[H]
\centering
\scriptsize
\caption{Qwen3-1.7B five--six-task multi-task results for the seven supervised methods, with a frozen backbone, final-layer LoRA-FFN experts, and E8K4 routing. Purity is gradient-mass purity.}\label{tab:appendix_multitask_5_6_qwen17}
\setlength{\tabcolsep}{3.4pt}
\renewcommand{\arraystretch}{1.03}
\AppendixTable{%
\begin{tabular}{@{}lrrrrr@{}}
\toprule
Method & Acc & Seed std & LVar & Purity & Util. \\
\midrule
\multicolumn{6}{@{}l}{\textbf{[QNLI, BoolQ, RTE, PAWS, WiC]} \textnormal{(5 tasks)}} \\*
Baseline & 0.7241 & 0.0086 & 0.09545 & 0.3046 & 0.150 \\*
CAGrad & 0.7235 & 0.0056 & 0.10232 & 0.3512 & 0.125 \\*
\textbf{GAR} & 0.7315 & 0.0093 & 0.03298 & 0.7981 & 0.425 \\*
STGC & 0.6714 & 0.0145 & 0.00159 & 0.3939 & 0.925 \\*
LoadPen & 0.6993 & 0.0189 & 0.01039 & 0.3953 & 0.675 \\*
SwitchAux & 0.7138 & 0.0206 & 0.00469 & 0.3923 & 0.800 \\*
STGC+Load & 0.6851 & 0.0088 & 0.00299 & 0.6009 & 0.875 \\
\addlinespace[2pt]
\multicolumn{6}{@{}l}{\textbf{[QNLI, BoolQ, RTE, PAWS, ANLI, CB]} \textnormal{(6 tasks)}} \\*
Baseline & 0.7008 & 0.0111 & 0.09173 & 0.3794 & 0.150 \\*
CAGrad & 0.7045 & 0.0046 & 0.08659 & 0.4371 & 0.200 \\*
\textbf{GAR} & 0.7072 & 0.0053 & 0.01997 & 0.7676 & 0.475 \\*
STGC & 0.7050 & 0.0180 & 0.00134 & 0.3147 & 0.950 \\*
LoadPen & 0.7010 & 0.0137 & 0.01590 & 0.3152 & 0.625 \\*
SwitchAux & 0.7028 & 0.0071 & 0.01070 & 0.3119 & 0.675 \\*
STGC+Load & 0.6694 & 0.0113 & 0.00258 & 0.4375 & 0.950 \\
\bottomrule
\end{tabular}
}%
\end{table}

\begin{table}[H]
\centering
\scriptsize
\caption{Qwen3-1.7B seven--eight-task multi-task results for the seven supervised methods, with a frozen backbone, final-layer LoRA-FFN experts, and E8K4 routing. Purity is gradient-mass purity.}\label{tab:appendix_multitask_7_8_qwen17}
\setlength{\tabcolsep}{3.4pt}
\renewcommand{\arraystretch}{1.03}
\AppendixTable{%
\begin{tabular}{@{}lrrrrr@{}}
\toprule
Method & Acc & Seed std & LVar & Purity & Util. \\
\midrule
\multicolumn{6}{@{}l}{\textbf{[QNLI, BoolQ, RTE, PAWS, WiC, CoLA, SST-2]} \textnormal{(7 tasks)}} \\*
Baseline & 0.7475 & 0.0075 & 0.00954 & 0.3834 & 0.625 \\*
CAGrad & 0.7530 & 0.0085 & 0.01552 & 0.3858 & 0.600 \\*
\textbf{GAR} & 0.7603 & 0.0078 & 0.00693 & 0.4208 & 0.850 \\*
STGC & 0.7400 & 0.0079 & 0.00026 & 0.3802 & 1.000 \\*
LoadPen & 0.7499 & 0.0045 & 0.00127 & 0.3911 & 0.975 \\*
SwitchAux & 0.7467 & 0.0066 & 0.00103 & 0.3790 & 1.000 \\*
STGC+Load & 0.7283 & 0.0060 & 0.00059 & 0.3216 & 1.000 \\
\addlinespace[2pt]
\multicolumn{6}{@{}l}{\textbf{[QNLI, BoolQ, RTE, PAWS, WiC, CoLA, SST-2, CB]} \textnormal{(8 tasks)}} \\*
Baseline & 0.7526 & 0.0096 & 0.01849 & 0.3721 & 0.575 \\*
CAGrad & 0.7535 & 0.0169 & 0.01148 & 0.3754 & 0.650 \\*
\textbf{GAR} & 0.7682 & 0.0070 & 0.00814 & 0.3978 & 0.750 \\*
STGC & 0.7483 & 0.0065 & 0.00024 & 0.3696 & 1.000 \\*
LoadPen & 0.7496 & 0.0111 & 0.00192 & 0.3768 & 0.950 \\*
SwitchAux & 0.7520 & 0.0065 & 0.00072 & 0.3714 & 0.975 \\*
STGC+Load & 0.7294 & 0.0059 & 0.00018 & 0.2882 & 1.000 \\
\addlinespace[2pt]
\multicolumn{6}{@{}l}{\textbf{[QNLI, BoolQ, RTE, WiC, CoLA, SST-2, MRPC]} \textnormal{(7 tasks)}} \\*
Baseline & 0.7562 & 0.0045 & 0.01064 & 0.5541 & 0.775 \\*
CAGrad & 0.7655 & 0.0092 & 0.00806 & 0.5548 & 0.675 \\*
\textbf{GAR} & 0.7648 & 0.0033 & 0.00608 & 0.5712 & 0.750 \\*
STGC & 0.7389 & 0.0106 & 0.00034 & 0.5602 & 1.000 \\*
LoadPen & 0.7461 & 0.0085 & 0.00154 & 0.5599 & 0.975 \\*
SwitchAux & 0.7488 & 0.0040 & 0.00055 & 0.5606 & 1.000 \\*
STGC+Load & 0.7277 & 0.0055 & 0.00012 & 0.3279 & 1.000 \\
\bottomrule
\end{tabular}
}%
\end{table}

\subsubsection{Qwen3-8B}
\label{appsubsec:qwen8b_results}
Tables~\ref{tab:qwen8b_5_6} and~\ref{tab:qwen8b_7_8}
give the mixture endpoints; Tables~\ref{tab:qwen8b_aggregate} and
\ref{tab:qwen8b_paired} summarize their means and paired accuracy uncertainty.
We additionally evaluate a frozen Qwen3-8B backbone with final-layer LoRA-FFN experts on the same five dataset mixtures. This extension compares Baseline, CAGrad, and GAR. It uses five final seeds and 2,000 optimizer updates. This extension is reported separately from the two-backbone, seven-method frozen aggregate and its Pareto comparison.
Token-level routing uses E8K4 in all five mixtures. The auxiliary group probability averages the routed token probabilities within examples and then equally across same-task examples, without an additional group-level top-$k$ truncation, as in Section~\ref{sec:methodology_implementation}. All configurations are listed in Appendix~\ref{appsubsec:qwen8b_hyperparameters}.
\begin{table}[H]
\centering\footnotesize
\caption{Qwen3-8B five--six-task mixture endpoints. Acc is equal-task macro validation accuracy; seed std is its population standard deviation over five seeds. Routing metrics use the final diagnostic checkpoint; Purity is gradient-mass purity.}
\label{tab:qwen8b_5_6}
\setlength{\tabcolsep}{6pt}
\renewcommand{\arraystretch}{1.04}
\begin{tabular*}{\linewidth}{@{\extracolsep{\fill}}lrrrrr@{}}
\toprule
Method & Acc & Seed std & LVar & Purity & Util. \\
\midrule
\multicolumn{6}{@{}l}{\textbf{[QNLI, BoolQ, RTE, PAWS, WiC]}} \\*
Baseline & 0.7268 & 0.0077 & 0.01229 & 0.5835 & 0.600 \\*
CAGrad & 0.7360 & 0.0030 & 0.00966 & 0.6466 & 0.700 \\*
GAR & 0.7326 & 0.0033 & 0.00790 & 0.6401 & 0.650 \\*
\addlinespace[3pt]
\multicolumn{6}{@{}l}{\textbf{[QNLI, BoolQ, RTE, PAWS, ANLI, CB]}} \\*
Baseline & 0.6950 & 0.0113 & 0.05777 & 0.4679 & 0.375 \\*
CAGrad & 0.7066 & 0.0047 & 0.04762 & 0.4537 & 0.475 \\*
GAR & 0.7038 & 0.0032 & 0.01301 & 0.6026 & 0.625 \\*
\bottomrule
\end{tabular*}
\end{table}
\begin{table}[H]
\centering\footnotesize
\caption{Qwen3-8B seven--eight-task mixture endpoints. Acc is equal-task macro validation accuracy; seed std is its population standard deviation over five seeds. Routing metrics use the final diagnostic checkpoint; Purity is gradient-mass purity.}
\label{tab:qwen8b_7_8}
\setlength{\tabcolsep}{6pt}
\renewcommand{\arraystretch}{1.04}
\begin{tabular*}{\linewidth}{@{\extracolsep{\fill}}lrrrrr@{}}
\toprule
Method & Acc & Seed std & LVar & Purity & Util. \\
\midrule
\multicolumn{6}{@{}l}{\textbf{[QNLI, BoolQ, RTE, PAWS, WiC, CoLA, SST-2]}} \\*
Baseline & 0.7466 & 0.0088 & 0.01096 & 0.3761 & 0.625 \\*
CAGrad & 0.7509 & 0.0075 & 0.01623 & 0.3934 & 0.600 \\*
GAR & 0.7641 & 0.0059 & 0.00713 & 0.4391 & 0.775 \\*
\addlinespace[3pt]
\multicolumn{6}{@{}l}{\textbf{[QNLI, BoolQ, RTE, WiC, CoLA, SST-2, MRPC]}} \\*
Baseline & 0.7475 & 0.0048 & 0.01020 & 0.5165 & 0.725 \\*
CAGrad & 0.7560 & 0.0094 & 0.00836 & 0.5243 & 0.625 \\*
GAR & 0.7602 & 0.0060 & 0.00554 & 0.5292 & 0.700 \\*
\addlinespace[3pt]
\multicolumn{6}{@{}l}{\textbf{[QNLI, BoolQ, RTE, PAWS, WiC, CoLA, SST-2, CB]}} \\*
Baseline & 0.7481 & 0.0084 & 0.01793 & 0.3724 & 0.475 \\*
CAGrad & 0.7513 & 0.0108 & 0.01260 & 0.3764 & 0.575 \\*
GAR & 0.7652 & 0.0085 & 0.00707 & 0.4054 & 0.700 \\*
\bottomrule
\end{tabular*}
\end{table}
\begin{table}[H]
\centering\small
\caption{Qwen3-8B aggregate endpoints. Every mixture and seed has equal weight across the five dataset mixtures, whose constituent tasks are listed in brackets in the preceding tables. These three-method means do not enter the two-backbone, seven-method frozen aggregate.}
\label{tab:qwen8b_aggregate}
\setlength{\tabcolsep}{7pt}
\begin{tabular}{@{}llrrrr@{}}
\toprule
Scope & Method & Acc & LVar & Purity & Util. \\
\midrule
Five mixtures & Baseline & 0.7328 & 0.02183 & 0.4633 & 0.560 \\
 & CAGrad & 0.7401 & 0.01890 & 0.4789 & 0.595 \\
 & GAR & 0.7452 & 0.00813 & 0.5233 & 0.690 \\
\bottomrule
\end{tabular}
\end{table}
\begin{table}[H]
\centering\small
\caption{Paired Qwen3-8B accuracy gains (percentage points). Within each shared seed, first average the mixture endpoints and then subtract the comparator. Brackets are two-sided 95\% Student-$t$ intervals over the five paired seed differences (four degrees of freedom); no multiplicity adjustment is applied.}
\label{tab:qwen8b_paired}
\begin{tabular}{@{}lcc@{}}
\toprule
Scope & GAR$-$Baseline & GAR$-$CAGrad \\
\midrule
Five mixtures & $+1.239$ [$+0.514$, $+1.965$] & $+0.505$ [$+0.059$, $+0.950$] \\
\bottomrule
\end{tabular}
\end{table}
GAR improves the five-mixture mean by 1.24 percentage points over Baseline and 0.50 over CAGrad, alongside lower load variance and higher gradient-mass purity and utilization. Across the five mixture means, GAR exceeds Baseline in all five and CAGrad in three; CAGrad is higher on [QNLI, BoolQ, RTE, PAWS, WiC] and [QNLI, BoolQ, RTE, PAWS, ANLI, CB].

\subsubsection{Paired uncertainty for the frozen LoRA-FFN comparisons}
\label{appsubsec:paired_supervised_uncertainty}

\begin{table}[H]
\centering
\small
\caption{Paired accuracy differences for the five mixtures (five to eight tasks), in percentage points. For each backbone and seed, we average the equal-task macro validation accuracy over the five matched dataset sets and then subtract the comparator's result from GAR's for the same seed. The last two rows first average the two (DeBERTa, Qwen3-1.7B) or three backbone-specific differences within each seed; the two-backbone mean is the frozen aggregate used in the main text. Brackets are two-sided 95\% Student-$t$ confidence intervals over the five paired seed differences ($n=5$, four degrees of freedom). No multiplicity correction is applied.}
\label{tab:paired_supervised_uncertainty}
\setlength{\tabcolsep}{5.5pt}
\begin{tabular}{@{}lcc@{}}
\toprule
Scope & GAR $-$ Baseline & GAR $-$ CAGrad \\
\midrule
RoBERTa & $+1.141$ [$+0.760$, $+1.523$] & $+1.078$ [$+0.632$, $+1.523$] \\
DeBERTa & $+1.176$ [$+0.687$, $+1.666$] & $+0.917$ [$+0.350$, $+1.485$] \\
Qwen3-1.7B & $+1.014$ [$+0.260$, $+1.768$] & $+0.638$ [$-0.099$, $+1.375$] \\
Two-backbone mean (DeBERTa, Qwen3-1.7B) & $+1.095$ [$+0.661$, $+1.529$] & $+0.778$ [$+0.266$, $+1.289$] \\
Three-backbone mean & $+1.111$ [$+0.764$, $+1.457$] & $+0.878$ [$+0.596$, $+1.160$] \\
\bottomrule
\end{tabular}
\end{table}

GAR$-$Baseline intervals are positive on every backbone. GAR$-$CAGrad is
positive on DeBERTa and RoBERTa and has a positive mean on Qwen3-1.7B, where
its interval includes zero.

\subsection{Frozen RoBERTa Top-1 LoRA-FFN}
\label{appsubsec:top1_roberta_results}

This extension uses the frozen RoBERTa LoRA-FFN architecture, with configurations
selected by the common protocol (Appendix~\ref{appsubsec:top1_roberta_protocol}), to
test top-1 straight-through gating.
Table~\ref{tab:appendix_top1_roberta_results} reports mixture-level endpoints
for Baseline, CAGrad, and GAR under E8K1 routing at 2,000 updates
(Appendix~\ref{appsubsec:top1_roberta_protocol}). It covers the same five
dataset mixtures and five seeds as the main comparison. The
five-mixture mean accuracies are $66.49\%$, $66.68\%$, and $67.26\%$,
respectively. GAR improves on Baseline in all five mixture means and on
CAGrad in four; CAGrad has the highest accuracy on
[QNLI, BoolQ, RTE, PAWS, WiC, CoLA, SST-2].

Averaging the five mixtures within each seed, the paired GAR$-$Baseline
accuracy gain is $+0.77$ percentage points with a 95\% Student-$t$ interval
$[+0.59,+0.94]$. The GAR$-$CAGrad difference is $+0.58$ points with interval
$[-0.24,+1.40]$. GAR also has lower aggregate LVar and higher
gradient-mass purity and utilization than both comparators.
Table~\ref{tab:appendix_top1_roberta_paired} reports the corresponding
paired differences and task-count summaries.

\begin{table}[H]
\centering
\scriptsize
\caption{Frozen RoBERTa LoRA-FFN with top-1 routing (E8K1), trained for 2,000 optimizer updates on each of the five mixtures. Columns report equal-task macro validation accuracy, its population seed standard deviation, LVar, gradient-mass purity, and utilization at the final evaluation, with metrics defined as in Appendix~\ref{app:experimental_protocol}. Five seeds per cell. For the aggregate block, each metric is first averaged equally over the five mixtures within each seed; the displayed mean and accuracy standard deviation are then computed across the five seeds.}\label{tab:appendix_top1_roberta_results}
\setlength{\tabcolsep}{5pt}
\renewcommand{\arraystretch}{1.03}
\begin{tabular}{@{}lrrrrr@{}}
\toprule
Method & Acc & Seed std & LVar & \shortstack{Gradient-mass\\purity} & Util. \\
\midrule
\multicolumn{6}{@{}l}{\textbf{[QNLI, BoolQ, RTE, PAWS, WiC]} \textnormal{(5 tasks)}} \\*
Baseline & 0.6294 & 0.0091 & 0.01398 & 0.4734 & 0.575 \\*
CAGrad & 0.6288 & 0.0093 & 0.00937 & 0.4222 & 0.700 \\*
\textbf{GAR} & 0.6367 & 0.0076 & 0.00848 & 0.4966 & 0.675 \\
\midrule
\multicolumn{6}{@{}l}{\textbf{[QNLI, BoolQ, RTE, PAWS, ANLI, CB]} \textnormal{(6 tasks)}} \\*
Baseline & 0.6375 & 0.0069 & 0.02184 & 0.4472 & 0.575 \\*
CAGrad & 0.6309 & 0.0061 & 0.03027 & 0.4952 & 0.475 \\*
\textbf{GAR} & 0.6442 & 0.0044 & 0.01755 & 0.4859 & 0.600 \\
\midrule
\multicolumn{6}{@{}l}{\textbf{[QNLI, BoolQ, RTE, PAWS, WiC, CoLA, SST-2]} \textnormal{(7 tasks)}} \\*
Baseline & 0.6672 & 0.0059 & 0.00711 & 0.3950 & 0.775 \\*
CAGrad & 0.6776 & 0.0042 & 0.00853 & 0.4707 & 0.675 \\*
\textbf{GAR} & 0.6742 & 0.0079 & 0.00669 & 0.4473 & 0.775 \\
\midrule
\multicolumn{6}{@{}l}{\textbf{[QNLI, BoolQ, RTE, WiC, CoLA, SST-2, MRPC]} \textnormal{(7 tasks)}} \\*
Baseline & 0.6818 & 0.0032 & 0.00893 & 0.4480 & 0.725 \\*
CAGrad & 0.6888 & 0.0061 & 0.00678 & 0.4516 & 0.750 \\*
\textbf{GAR} & 0.6917 & 0.0080 & 0.00503 & 0.4608 & 0.850 \\
\midrule
\multicolumn{6}{@{}l}{\textbf{[QNLI, BoolQ, RTE, PAWS, WiC, CoLA, SST-2, CB]} \textnormal{(8 tasks)}} \\*
Baseline & 0.7087 & 0.0096 & 0.02264 & 0.4088 & 0.375 \\*
CAGrad & 0.7081 & 0.0052 & 0.02292 & 0.3822 & 0.500 \\*
\textbf{GAR} & 0.7164 & 0.0082 & 0.01424 & 0.4747 & 0.550 \\
\midrule
\multicolumn{6}{@{}l}{\emph{Five-mixture aggregate}} \\*
Baseline & 0.6649 & 0.0044 & 0.01490 & 0.4345 & 0.605 \\*
CAGrad & 0.6668 & 0.0046 & 0.01557 & 0.4444 & 0.620 \\*
\textbf{GAR} & 0.6726 & 0.0037 & 0.01040 & 0.4731 & 0.690 \\
\bottomrule
\end{tabular}
\end{table}

\begin{table}[H]
\centering
\small
\caption{Paired final-evaluation differences for frozen RoBERTa LoRA-FFN with top-1 routing (E8K1; 2,000 optimizer updates). For each seed, metrics are averaged equally over the five mixtures before subtracting the comparator for that seed. The lower block repeats the accuracy comparison within the five--six- and seven--eight-task groups. Accuracy is in percentage points; LVar, gradient-mass purity, and utilization are in their native units. Brackets are unadjusted two-sided 95\% Student-$t$ confidence intervals over the five paired seed differences ($n=5$, four degrees of freedom).}
\label{tab:appendix_top1_roberta_paired}
\setlength{\tabcolsep}{5.5pt}
\begin{tabular}{@{}lcc@{}}
\toprule
Quantity & GAR $-$ Baseline & GAR $-$ CAGrad \\
\midrule
\multicolumn{3}{@{}l}{\emph{Five mixtures (five to eight tasks)}} \\
Accuracy (pp) & $+0.769$ [$+0.595$, $+0.944$] & $+0.577$ [$-0.243$, $+1.398$] \\
LVar & $-0.00450$ [$-0.00604$, $-0.00296$] & $-0.00518$ [$-0.00869$, $-0.00167$] \\
Gradient-mass purity & $+0.0386$ [$+0.0281$, $+0.0491$] & $+0.0287$ [$+0.0065$, $+0.0509$] \\
Utilization & $+0.0850$ [$+0.0071$, $+0.1629$] & $+0.0700$ [$+0.0064$, $+0.1336$] \\
\midrule
\multicolumn{3}{@{}l}{\emph{Accuracy by task-count bin (pp)}} \\
Five--six tasks & $+0.701$ [$+0.344$, $+1.058$] & $+1.057$ [$-0.385$, $+2.499$] \\
Seven--eight tasks & $+0.815$ [$+0.630$, $+0.999$] & $+0.258$ [$-0.225$, $+0.740$] \\
\bottomrule
\end{tabular}
\end{table}

\subsection{Trainable RoBERTa Classification-Head MoE}
\label{appsubsec:implb_results}

Tables~\ref{tab:appendix_implb_5_6} and~\ref{tab:appendix_implb_7_8} report the same seven methods in the trainable
RoBERTa classification-head setting (Appendix~\ref{appsubsec:implb_protocol})
for all five mixtures, split into the same five--six and
seven--eight groups as above. In this setting, every cell also reports the routed-unit structure purity, NMI, and ARI of
Appendix~\ref{app:experimental_protocol} in addition to the gradient-mass
purity. On the five
mixtures (2,000-update budget; Appendix~\ref{appsubsec:implb_protocol}), the paired GAR$-$Baseline difference is $+1.07$
$[+0.70,+1.43]$ points and GAR$-$CAGrad is $+1.84$ $[+1.29,+2.39]$
(Table~\ref{tab:appendix_implb_paired}). By bin, GAR$-$Baseline is
$+1.39$ $[+0.46,+2.32]$ and $+0.85$ $[+0.43,+1.26]$
points for the five--six and seven--eight groups, with the
largest mean gain in the five--six-task group. One LoadPen run in the five-task mixture
reached a final accuracy of $0.49$, compared with $0.76$--$0.79$ for the other
four seeds, which is why its seed standard deviation there is $0.117$ and its
paired accuracy interval in Table~\ref{tab:appendix_implb_paired} is wide.

Table~\ref{tab:appendix_implb_paired} reports the paired endpoint differences
for every routing diagnostic. GAR has lower LVar and higher utilization than
Baseline, CAGrad, and LoadPen, and higher NMI, ARI, and structure purity than
every control, while it has higher LVar and lower utilization than STGC,
SwitchAux, and STGC+Load.
The paired gradient-mass purity intervals against Baseline, CAGrad, and LoadPen
include zero; those against STGC, SwitchAux, and STGC+Load are positive. For inter-expert
similarity and normalized routing entropy, the paired intervals against
Baseline and CAGrad also include zero.

\begin{table}[H]
\centering
\scriptsize
\caption{Trainable RoBERTa classification-head LoRA MoE five--six-task results for the seven methods under E8K4 routing. Grad.\ purity is the gradient-mass expert purity of Appendix~\ref{app:experimental_protocol}; Struct.\ purity, NMI, and ARI are computed from the routed-unit selection counts recorded at the final forward evaluation (Appendix~\ref{app:experimental_protocol}). Seed std is the population standard deviation of the equal-task macro accuracy over the five seeds.}\label{tab:appendix_implb_5_6}
\setlength{\tabcolsep}{3.0pt}
\renewcommand{\arraystretch}{1.03}
\AppendixTable{%
\begin{tabular}{@{}lrrrrrrrr@{}}
\toprule
Method & Acc & Seed std & LVar & Grad.\ purity & Struct.\ purity & Util. & NMI & ARI \\
\midrule
\multicolumn{9}{@{}l}{\textbf{[QNLI, BoolQ, RTE, PAWS, WiC]} \textnormal{(5 tasks)}} \\*
Baseline & 0.7672 & 0.0205 & 0.00846 & 0.4847 & 0.4778 & 0.700 & 0.03410 & 0.01002 \\*
CAGrad & 0.7581 & 0.0095 & 0.00786 & 0.4663 & 0.4682 & 0.700 & 0.03128 & 0.00800 \\*
\textbf{GAR} & 0.7873 & 0.0100 & 0.00605 & 0.4895 & 0.4880 & 0.750 & 0.04000 & 0.01639 \\*
STGC & 0.7843 & 0.0032 & 0.00052 & 0.3796 & 0.4601 & 1.000 & 0.00652 & 0.00553 \\*
LoadPen & 0.7243 & 0.1169 & 0.00810 & 0.3634 & 0.4646 & 0.725 & 0.02247 & 0.00833 \\*
SwitchAux & 0.7809 & 0.0106 & 0.00122 & 0.4163 & 0.4648 & 0.975 & 0.02305 & 0.00903 \\*
STGC+Load & 0.7885 & 0.0043 & 0.00064 & 0.3699 & 0.4538 & 0.975 & 0.00569 & 0.00458 \\
\addlinespace[2pt]
\multicolumn{9}{@{}l}{\textbf{[QNLI, BoolQ, RTE, PAWS, ANLI, CB]} \textnormal{(6 tasks)}} \\*
Baseline & 0.7634 & 0.0111 & 0.00882 & 0.4036 & 0.4197 & 0.650 & 0.04878 & 0.01967 \\*
CAGrad & 0.7594 & 0.0051 & 0.00908 & 0.4332 & 0.4267 & 0.625 & 0.05060 & 0.01951 \\*
\textbf{GAR} & 0.7711 & 0.0090 & 0.00464 & 0.4946 & 0.4664 & 0.825 & 0.10221 & 0.05443 \\*
STGC & 0.7599 & 0.0072 & 0.00128 & 0.3666 & 0.3986 & 0.975 & 0.01418 & 0.00952 \\*
LoadPen & 0.7546 & 0.0201 & 0.00786 & 0.4807 & 0.4232 & 0.625 & 0.05521 & 0.02162 \\*
SwitchAux & 0.7652 & 0.0089 & 0.00242 & 0.4273 & 0.4166 & 0.875 & 0.05298 & 0.02422 \\*
STGC+Load & 0.7753 & 0.0093 & 0.00120 & 0.3503 & 0.3986 & 0.975 & 0.01276 & 0.00818 \\
\bottomrule
\end{tabular}
}%
\end{table}

\begin{table}[H]
\centering
\scriptsize
\caption{Trainable RoBERTa classification-head LoRA MoE seven--eight-task results for the seven methods under E8K4 routing. Grad.\ purity is the gradient-mass expert purity of Appendix~\ref{app:experimental_protocol}; Struct.\ purity, NMI, and ARI are computed from the routed-unit selection counts recorded at the final forward evaluation (Appendix~\ref{app:experimental_protocol}). Seed std is the population standard deviation of the equal-task macro accuracy over the five seeds.}\label{tab:appendix_implb_7_8}
\setlength{\tabcolsep}{3.0pt}
\renewcommand{\arraystretch}{1.03}
\AppendixTable{%
\begin{tabular}{@{}lrrrrrrrr@{}}
\toprule
Method & Acc & Seed std & LVar & Grad.\ purity & Struct.\ purity & Util. & NMI & ARI \\
\midrule
\multicolumn{9}{@{}l}{\textbf{[QNLI, BoolQ, RTE, PAWS, WiC, CoLA, SST-2]} \textnormal{(7 tasks)}} \\*
Baseline & 0.8065 & 0.0064 & 0.00635 & 0.3714 & 0.4302 & 0.800 & 0.03329 & 0.01180 \\*
CAGrad & 0.8099 & 0.0049 & 0.00728 & 0.4154 & 0.4284 & 0.725 & 0.02706 & 0.01064 \\*
\textbf{GAR} & 0.8141 & 0.0085 & 0.00439 & 0.3803 & 0.4291 & 0.800 & 0.03937 & 0.01738 \\*
STGC & 0.8065 & 0.0068 & 0.00015 & 0.3111 & 0.4089 & 1.000 & 0.00339 & 0.00213 \\*
LoadPen & 0.8094 & 0.0066 & 0.00596 & 0.4122 & 0.4379 & 0.750 & 0.03746 & 0.02076 \\*
SwitchAux & 0.7710 & 0.0081 & 0.00351 & 0.3387 & 0.4350 & 0.875 & 0.03757 & 0.01863 \\*
STGC+Load & 0.8045 & 0.0067 & 0.00025 & 0.3206 & 0.4089 & 1.000 & 0.00486 & 0.00295 \\
\addlinespace[2pt]
\multicolumn{9}{@{}l}{\textbf{[QNLI, BoolQ, RTE, PAWS, WiC, CoLA, SST-2, CB]} \textnormal{(8 tasks)}} \\*
Baseline & 0.8140 & 0.0078 & 0.01066 & 0.4746 & 0.4176 & 0.625 & 0.02816 & 0.00636 \\*
CAGrad & 0.8093 & 0.0071 & 0.00665 & 0.4426 & 0.4302 & 0.700 & 0.03401 & 0.01101 \\*
\textbf{GAR} & 0.8202 & 0.0082 & 0.00342 & 0.4235 & 0.4518 & 0.850 & 0.04862 & 0.02976 \\*
STGC & 0.8075 & 0.0061 & 0.00041 & 0.2900 & 0.4113 & 1.000 & 0.00591 & 0.00445 \\*
LoadPen & 0.8073 & 0.0075 & 0.00826 & 0.4529 & 0.4230 & 0.675 & 0.02945 & 0.01257 \\*
SwitchAux & 0.8054 & 0.0135 & 0.00378 & 0.3951 & 0.4405 & 0.850 & 0.04381 & 0.02252 \\*
STGC+Load & 0.8161 & 0.0085 & 0.00031 & 0.2891 & 0.4087 & 1.000 & 0.00459 & 0.00293 \\
\addlinespace[2pt]
\multicolumn{9}{@{}l}{\textbf{[QNLI, BoolQ, RTE, WiC, CoLA, SST-2, MRPC]} \textnormal{(7 tasks)}} \\*
Baseline & 0.7997 & 0.0072 & 0.00647 & 0.3527 & 0.4617 & 0.650 & 0.03193 & 0.01586 \\*
CAGrad & 0.7755 & 0.0168 & 0.01064 & 0.4272 & 0.4614 & 0.600 & 0.01991 & 0.00974 \\*
\textbf{GAR} & 0.8113 & 0.0081 & 0.00323 & 0.3544 & 0.4708 & 0.850 & 0.04353 & 0.02672 \\*
STGC & 0.8007 & 0.0065 & 0.00014 & 0.2825 & 0.4564 & 1.000 & 0.00398 & 0.00252 \\*
LoadPen & 0.7771 & 0.0010 & 0.00572 & 0.3749 & 0.4630 & 0.725 & 0.03267 & 0.01566 \\*
SwitchAux & 0.7981 & 0.0099 & 0.00209 & 0.3134 & 0.4619 & 0.800 & 0.03383 & 0.01677 \\*
STGC+Load & 0.7991 & 0.0046 & 0.00024 & 0.3107 & 0.4564 & 1.000 & 0.00442 & 0.00272 \\
\bottomrule
\end{tabular}
}%
\end{table}

\begin{table}[H]
\centering
\scriptsize
\caption{Paired endpoint differences (GAR minus Baseline, CAGrad, STGC, LoadPen, SwitchAux, and STGC+Load) in the trainable RoBERTa classification-head setting on the five mixtures (five to eight tasks). Metrics are first averaged over mixtures within each seed; brackets are two-sided 95\% Student-$t$ intervals over five paired differences. Accuracy is in percentage points; routing entropy is normalized by $\log 8$. Lower LVar indicates more uniform load, lower inter-expert similarity indicates greater directional separation, and lower entropy indicates more concentrated routing. Entropy, inter-expert similarity, NMI, and ARI have no universally preferred direction. No multiplicity correction is applied. The lower block compares GAR with the three controls that include an explicit load-balancing term.}
\label{tab:appendix_implb_paired}
\setlength{\tabcolsep}{5pt}
\renewcommand{\arraystretch}{1.05}
\begin{tabular}{@{}lccc@{}}
\toprule
Metric & GAR $-$ Baseline & GAR $-$ CAGrad & GAR $-$ STGC \\
\midrule
Accuracy (pp) & $+1.07$ [$+0.70$, $+1.43$] & $+1.84$ [$+1.29$, $+2.39$] & $+0.90$ [$+0.35$, $+1.46$] \\
LVar & $-0.0038$ [$-0.0050$, $-0.0026$] & $-0.0040$ [$-0.0059$, $-0.0020$] & $+0.0038$ [$+0.0030$, $+0.0047$] \\
Utilization & $+0.130$ [$+0.041$, $+0.219$] & $+0.145$ [$+0.078$, $+0.212$] & $-0.180$ [$-0.231$, $-0.129$] \\
Grad.\ purity & $+0.0111$ [$-0.0156$, $+0.0378$] & $-0.0085$ [$-0.0492$, $+0.0323$] & $+0.1025$ [$+0.0823$, $+0.1227$] \\
Struct.\ purity & $+0.0198$ [$+0.0009$, $+0.0387$] & $+0.0182$ [$+0.0016$, $+0.0349$] & $+0.0342$ [$+0.0225$, $+0.0459$] \\
NMI & $+0.0195$ [$+0.0047$, $+0.0343$] & $+0.0222$ [$+0.0035$, $+0.0409$] & $+0.0479$ [$+0.0376$, $+0.0583$] \\
ARI & $+0.0162$ [$+0.0060$, $+0.0264$] & $+0.0172$ [$+0.0056$, $+0.0287$] & $+0.0241$ [$+0.0178$, $+0.0304$] \\
Intra-expert coherence & $+0.0315$ [$+0.0019$, $+0.0611$] & $+0.0211$ [$-0.0308$, $+0.0729$] & $+0.0171$ [$-0.0518$, $+0.0859$] \\
Inter-expert similarity & $-0.0101$ [$-0.0512$, $+0.0310$] & $-0.0018$ [$-0.0162$, $+0.0126$] & $-0.0199$ [$-0.0487$, $+0.0089$] \\
Norm.\ routing entropy & $-0.0193$ [$-0.0487$, $+0.0101$] & $-0.0204$ [$-0.0444$, $+0.0037$] & $-0.0397$ [$-0.0719$, $-0.0075$] \\
\midrule
Metric & GAR $-$ LoadPen & GAR $-$ SwitchAux & GAR $-$ STGC+Load \\
\midrule
Accuracy (pp) & $+2.63$ [$-0.98$, $+6.23$] & $+1.67$ [$+1.31$, $+2.03$] & $+0.41$ [$-0.31$, $+1.13$] \\
LVar & $-0.0028$ [$-0.0047$, $-0.0010$] & $+0.0017$ [$+0.0004$, $+0.0031$] & $+0.0038$ [$+0.0029$, $+0.0048$] \\
Utilization & $+0.115$ [$+0.028$, $+0.202$] & $-0.060$ [$-0.162$, $+0.042$] & $-0.175$ [$-0.237$, $-0.113$] \\
Grad.\ purity & $+0.0117$ [$-0.0208$, $+0.0442$] & $+0.0503$ [$+0.0320$, $+0.0687$] & $+0.1004$ [$+0.0584$, $+0.1423$] \\
Struct.\ purity & $+0.0189$ [$+0.0120$, $+0.0259$] & $+0.0175$ [$+0.0104$, $+0.0246$] & $+0.0359$ [$+0.0235$, $+0.0484$] \\
NMI & $+0.0193$ [$+0.0147$, $+0.0239$] & $+0.0165$ [$+0.0110$, $+0.0220$] & $+0.0483$ [$+0.0384$, $+0.0582$] \\
ARI & $+0.0131$ [$+0.0108$, $+0.0155$] & $+0.0107$ [$+0.0081$, $+0.0133$] & $+0.0247$ [$+0.0180$, $+0.0314$] \\
Intra-expert coherence & $+0.0423$ [$+0.0050$, $+0.0796$] & $+0.0151$ [$-0.0197$, $+0.0499$] & $+0.0092$ [$-0.0395$, $+0.0580$] \\
Inter-expert similarity & $-0.0027$ [$-0.0273$, $+0.0220$] & $+0.0055$ [$-0.0369$, $+0.0480$] & $-0.0100$ [$-0.0501$, $+0.0300$] \\
Norm.\ routing entropy & $-0.0039$ [$-0.0641$, $+0.0563$] & $-0.0154$ [$-0.0540$, $+0.0233$] & $-0.0397$ [$-0.0719$, $-0.0075$] \\
\bottomrule
\end{tabular}
\end{table}

\subsubsection{Classification-Head Routing Trajectories}
\label{appsubsec:implb_mechanism_trajectories}

\begin{figure}[H]
\centering
\includegraphics[width=0.99\textwidth]{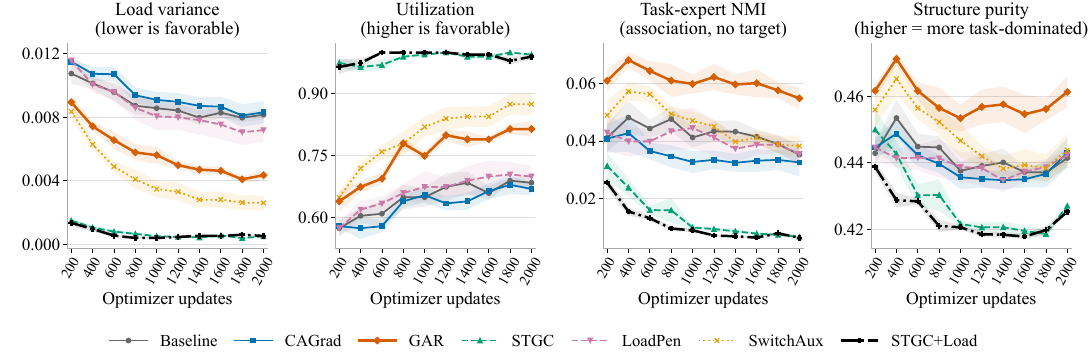}
\caption{Routing trajectories for all seven methods in the trainable RoBERTa classification-head setting (E8K4), aggregated over the five mixtures. At each checkpoint, mixtures are averaged within each run before the five run-level means are averaged. Checkpoints are 200, 400, $\ldots$, 2,000 optimizer updates; bands are $\pm1$ standard error over the five run-level mixture means. Left to right: load variance, utilization, task--expert NMI, and routed-unit structure purity.}
\label{fig:appendix_implb_mechanism_main}
\end{figure}

\begin{figure}[H]
\centering
\includegraphics[width=0.99\textwidth]{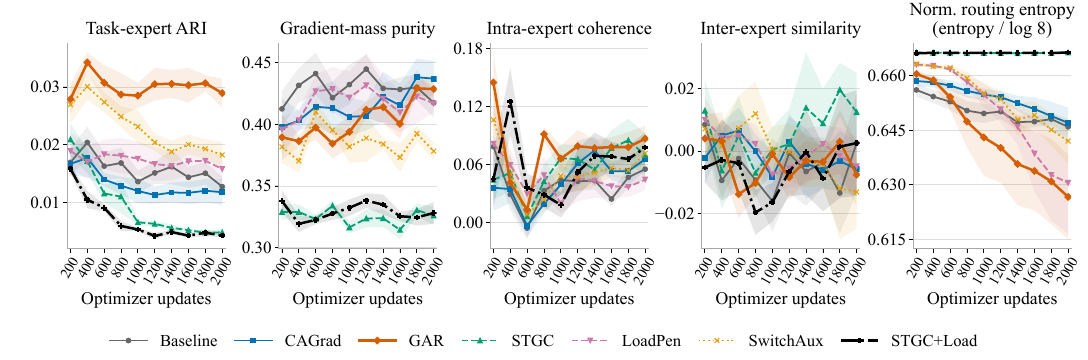}
\caption{Complementary routing diagnostics for all seven methods in the trainable RoBERTa classification-head setting, using the same five-mixture and checkpoint aggregation as Figure~\ref{fig:appendix_implb_mechanism_main}. Left to right: task--expert ARI, gradient-mass purity, intra-expert coherence, inter-expert similarity, and routing entropy normalized by $\log 8$. Bands are $\pm1$ standard error over the five run-level mixture means.}
\label{fig:appendix_implb_mechanism_extra}
\end{figure}

Figures~\ref{fig:appendix_implb_mechanism_main}
and~\ref{fig:appendix_implb_mechanism_extra} plot the routing diagnostics
at ten checkpoints, from 200 to 2,000 optimizer updates in increments of
200. The 2,000-update values are the final checkpoints summarized in the
endpoint tables. Figure~\ref{fig:appendix_implb_mechanism_main} includes all
seven methods over the five mixtures, with the same five-mixture aggregation
for every method. Figure~\ref{fig:appendix_implb_mechanism_extra}
reports the complementary diagnostics for all seven methods.
At each checkpoint, mixtures are first averaged within each run, and the
displayed mean and standard error are then computed across the five run-level
mixture means.

The first figure places the two usage axes and the two association axes side by
side. STGC and STGC+Load exhibit low load variance and high
utilization from the first checkpoint on. The STGC task--expert NMI and
structure purity decrease overall and remain below Baseline. At the endpoint,
STGC and STGC+Load lie within $0.008$ of the structure-purity lower bound of
Appendix~\ref{app:experimental_protocol} in every mixture, whereas GAR exceeds
it by $0.014$--$0.072$. GAR has lower
mean load variance and higher mean utilization, NMI, and structure purity than
Baseline and CAGrad at every recorded checkpoint. In the second figure, ARI
follows NMI. At the endpoint, GAR's gradient-mass purity exceeds that of STGC, SwitchAux,
and STGC+Load (Table~\ref{tab:appendix_implb_paired}). GAR has the highest mean
all-task-pair intra-expert coherence at the endpoint. The paired GAR$-$Baseline
difference is $+0.0315$ $[+0.0019,+0.0611]$, and the interval against LoadPen
also excludes zero, whereas the intervals against CAGrad, STGC, SwitchAux, and
STGC+Load include zero (Table~\ref{tab:appendix_implb_paired}). For
inter-expert similarity, all reported GAR--comparator paired intervals include
zero. Mean normalized routing entropy declines most for GAR. These trajectories
characterize the evolution of expert usage, gradient coherence, and task
association. For the five-task, six-task, and both seven-task mixtures, GAR and
Baseline select the same learning rate; clipping and weight decay are
inherited from Baseline in every mixture.

\subsection{Trainable RoBERTa Classification-Head Top-1}
\label{appsubsec:top1_head_results}

Table~\ref{tab:appendix_top1_head_results} reports Baseline, CAGrad, and GAR
in the trainable classification-head setting with top-1 straight-through
routing (E8K1; Appendix~\ref{appsubsec:top1_head_protocol}) on the five
mixtures and five seeds. The five-mixture mean accuracies are $80.39\%$,
$80.66\%$, and $81.45\%$; GAR has the highest mixture mean in all five
mixtures. Averaging the five mixtures within each seed, the paired
GAR$-$Baseline gain is $+1.06$ percentage points $[+0.83,+1.28]$ and the
GAR$-$CAGrad gain is $+0.79$ $[+0.30,+1.27]$; both task-count bins are
positive (Table~\ref{tab:appendix_top1_head_paired}).

Under one-hot dispatch, task-loss-only routing concentrates on a single expert:
Baseline's final utilization equals $1/8$ in 17 of its 25 runs, with LVar at
or near its maximum in the five- and seven-task mixtures containing PAWS.
GAR's utilization equals $1/8$ in 1 of its 25 runs, and its mixture means
correspond to 3.0--4.4 utilized experts; its paired LVar,
gradient-mass purity, and utilization differences against both Baseline and
CAGrad exclude zero. The accuracy ordering and the routing ordering therefore
agree in this setting.

\begin{table}[H]
\centering
\scriptsize
\caption{Trainable RoBERTa classification-head LoRA MoE with top-1 routing (E8K1), trained for 2,000 optimizer updates on each of the five mixtures. Columns report equal-task macro validation accuracy, its population seed standard deviation, LVar, gradient-mass purity, and utilization at the final evaluation, with metrics defined as in Appendix~\ref{app:experimental_protocol}. Five seeds per cell. For the aggregate block, each metric is first averaged equally over the five mixtures within each seed; the displayed mean and accuracy standard deviation are then computed across the five seeds.}\label{tab:appendix_top1_head_results}
\setlength{\tabcolsep}{5pt}
\renewcommand{\arraystretch}{1.03}
\begin{tabular}{@{}lrrrrr@{}}
\toprule
Method & Acc & Seed std & LVar & \shortstack{Gradient-mass\\purity} & Util. \\
\midrule
\multicolumn{6}{@{}l}{\textbf{[QNLI, BoolQ, RTE, PAWS, WiC]} \textnormal{(5 tasks)}} \\*
Baseline & 0.7864 & 0.0101 & 0.10806 & 0.1976 & 0.125 \\*
CAGrad & 0.7897 & 0.0081 & 0.03731 & 0.5230 & 0.375 \\*
\textbf{GAR} & 0.8038 & 0.0074 & 0.02377 & 0.5832 & 0.525 \\
\midrule
\multicolumn{6}{@{}l}{\textbf{[QNLI, BoolQ, RTE, PAWS, ANLI, CB]} \textnormal{(6 tasks)}} \\*
Baseline & 0.7818 & 0.0064 & 0.04267 & 0.2903 & 0.350 \\*
CAGrad & 0.7839 & 0.0094 & 0.04720 & 0.3746 & 0.350 \\*
\textbf{GAR} & 0.7908 & 0.0041 & 0.03245 & 0.5475 & 0.450 \\
\midrule
\multicolumn{6}{@{}l}{\textbf{[QNLI, BoolQ, RTE, PAWS, WiC, CoLA, SST-2]} \textnormal{(7 tasks)}} \\*
Baseline & 0.8157 & 0.0033 & 0.10936 & 0.0436 & 0.125 \\*
CAGrad & 0.8207 & 0.0054 & 0.03761 & 0.3748 & 0.375 \\*
\textbf{GAR} & 0.8249 & 0.0047 & 0.02986 & 0.3560 & 0.425 \\
\midrule
\multicolumn{6}{@{}l}{\textbf{[QNLI, BoolQ, RTE, WiC, CoLA, SST-2, MRPC]} \textnormal{(7 tasks)}} \\*
Baseline & 0.8105 & 0.0034 & 0.07550 & 0.0609 & 0.275 \\*
CAGrad & 0.8133 & 0.0025 & 0.02096 & 0.4103 & 0.425 \\*
\textbf{GAR} & 0.8216 & 0.0027 & 0.01853 & 0.5435 & 0.550 \\
\midrule
\multicolumn{6}{@{}l}{\textbf{[QNLI, BoolQ, RTE, PAWS, WiC, CoLA, SST-2, CB]} \textnormal{(8 tasks)}} \\*
Baseline & 0.8251 & 0.0090 & 0.09535 & 0.2086 & 0.200 \\*
CAGrad & 0.8253 & 0.0058 & 0.07846 & 0.3067 & 0.250 \\*
\textbf{GAR} & 0.8311 & 0.0048 & 0.05912 & 0.3694 & 0.375 \\
\midrule
\multicolumn{6}{@{}l}{\emph{Five-mixture aggregate}} \\*
Baseline & 0.8039 & 0.0036 & 0.08619 & 0.1602 & 0.215 \\*
CAGrad & 0.8066 & 0.0032 & 0.04431 & 0.3979 & 0.355 \\*
\textbf{GAR} & 0.8145 & 0.0024 & 0.03275 & 0.4799 & 0.465 \\
\bottomrule
\end{tabular}
\end{table}

\begin{table}[H]
\centering
\small
\caption{Paired final-evaluation differences for the trainable RoBERTa classification-head LoRA MoE with top-1 routing (E8K1; 2,000 optimizer updates). For each seed, metrics are averaged equally over the five mixtures before subtracting the comparator for that seed. The lower block repeats the accuracy comparison within the five--six- and seven--eight-task groups. Accuracy is in percentage points; LVar, gradient-mass purity, and utilization are in their native units. Brackets are unadjusted two-sided 95\% Student-$t$ confidence intervals over the five paired seed differences ($n=5$, four degrees of freedom).}
\label{tab:appendix_top1_head_paired}
\setlength{\tabcolsep}{5.5pt}
\begin{tabular}{@{}lcc@{}}
\toprule
Quantity & GAR $-$ Baseline & GAR $-$ CAGrad \\
\midrule
\multicolumn{3}{@{}l}{\emph{Five mixtures (five to eight tasks)}} \\
Accuracy (pp) & $+1.056$ [$+0.831$, $+1.282$] & $+0.786$ [$+0.300$, $+1.272$] \\
LVar & $-0.05344$ [$-0.07021$, $-0.03667$] & $-0.01156$ [$-0.02261$, $-0.00051$] \\
Gradient-mass purity & $+0.3197$ [$+0.2526$, $+0.3868$] & $+0.0821$ [$+0.0109$, $+0.1532$] \\
Utilization & $+0.2500$ [$+0.1543$, $+0.3457$] & $+0.1100$ [$+0.0629$, $+0.1571$] \\
\midrule
\multicolumn{3}{@{}l}{\emph{Accuracy by task-count bin (pp)}} \\
Five--six tasks & $+1.323$ [$+0.725$, $+1.921$] & $+1.049$ [$+0.441$, $+1.656$] \\
Seven--eight tasks & $+0.879$ [$+0.451$, $+1.307$] & $+0.611$ [$+0.094$, $+1.127$] \\
\bottomrule
\end{tabular}
\end{table}

\subsection{Trainable DeBERTa Full-Parameter FFN MoE}
\label{appsubsec:implc_results}

Table~\ref{tab:appendix_implc_results} reports Baseline, CAGrad, and GAR with a fully trainable DeBERTa backbone whose
experts are full-parameter feed-forward blocks at the final-layer insertion
sites, with token-level top-$k$ routing
(Appendix~\ref{appsubsec:implc_protocol}). Both this setting and the
classification-head setting of Appendix~\ref{appsubsec:implb_results} unfreeze
the backbone. This setting replaces LoRA with full-parameter FFN experts while
retaining the insertion sites and token-level routing; the classification-head
setting retains LoRA experts but moves them to the head and uses sequence-level
routing. This setting compares the three methods
that carry the main accuracy comparison. All five mixtures and the five seeds
used throughout are evaluated.

On the five mixtures, the equal-mixture
means are $0.8478$ for Baseline, $0.8507$ for CAGrad, and $0.8594$ for GAR.
At the 2,000-update budget of Appendix~\ref{appsubsec:implc_protocol}, the
paired GAR$-$Baseline difference is $+1.155$ $[+0.730,+1.579]$ points and
GAR$-$CAGrad is $+0.873$ $[+0.562,+1.183]$
(Table~\ref{tab:appendix_implc_paired}). Both intervals exclude zero, and the
GAR$-$Baseline gain is close to the frozen two-backbone gain of $+1.10$
$[+0.66,+1.53]$ and the classification-head gain of $+1.07$ $[+0.70,+1.43]$.
The full-parameter, trainable DeBERTa setting also shows an aggregate accuracy
gain. GAR has the highest accuracy in four of the five mixtures; the
exception is the six-task mixture, where CAGrad reaches $0.8534$ against
$0.8456$ for GAR and $0.8311$ for Baseline.

The aggregate LVar and utilization changes have the same directions as in the frozen LoRA-FFN aggregate.
GAR reduces LVar against both comparators and raises utilization by $+0.0700$
$[+0.0295,+0.1105]$ over Baseline and $+0.0900$ $[+0.0381,+0.1419]$ over
CAGrad, with all four intervals excluding zero. GAR's gradient-mass purity exceeds CAGrad's by
$+0.0165$ $[+0.0037,+0.0294]$ and Baseline's by $+0.0120$
$[-0.0135,+0.0375]$. By task-count bin, GAR$-$Baseline is $+1.048$ $[+0.292,+1.805]$
and $+1.226$ $[+0.969,+1.482]$ points for the five--six and seven--eight groups. The GAR$-$CAGrad interval in
the five--six bin, $+0.166$ $[-0.512,+0.843]$, includes zero and reflects the
six-task mixture above. These three-method comparisons use the
final-checkpoint values of each run. In the five- and six-task mixtures,
GAR and Baseline select the same learning rate (clipping and weight decay are
inherited from Baseline); the accuracy gains are $+0.65$ and $+1.45$ points, respectively.

\begin{table}[H]
\centering
\scriptsize
\caption{Trainable DeBERTa full-parameter FFN multi-task results for Baseline, CAGrad, and GAR across the five mixtures. Columns are equal-task macro validation accuracy, the population seed standard deviation, LVar, gradient-mass purity, and utilization, each defined as in Appendix~\ref{app:experimental_protocol} and read at the final evaluation. All five mixtures use E8K4, matching the grouping of Appendix~\ref{appsubsec:task_grouped_multitask_results}. Five seeds per cell.}\label{tab:appendix_implc_results}
\setlength{\tabcolsep}{5pt}
\renewcommand{\arraystretch}{1.03}
\begin{tabular}{@{}lrrrrr@{}}
\toprule
Method & Acc & Seed std & LVar & Purity & Util. \\
\midrule
\multicolumn{6}{@{}l}{\textbf{[QNLI, BoolQ, RTE, PAWS, WiC]} \textnormal{(5 tasks)}} \\*
Baseline & 0.8464 & 0.0027 & 0.00365 & 0.4015 & 0.825 \\*
CAGrad & 0.8418 & 0.0059 & 0.00393 & 0.4208 & 0.825 \\*
\textbf{GAR} & 0.8528 & 0.0043 & 0.00284 & 0.4102 & 0.900 \\
\addlinespace[2pt]
\multicolumn{6}{@{}l}{\textbf{[QNLI, BoolQ, RTE, PAWS, ANLI, CB]} \textnormal{(6 tasks)}} \\*
Baseline & 0.8311 & 0.0110 & 0.00325 & 0.3537 & 0.875 \\*
CAGrad & 0.8534 & 0.0036 & 0.00444 & 0.4117 & 0.775 \\*
\textbf{GAR} & 0.8456 & 0.0067 & 0.00263 & 0.3569 & 0.925 \\
\addlinespace[2pt]
\multicolumn{6}{@{}l}{\textbf{[QNLI, BoolQ, RTE, PAWS, WiC, CoLA, SST-2]} \textnormal{(7 tasks)}} \\*
Baseline & 0.8599 & 0.0026 & 0.00242 & 0.3706 & 0.875 \\*
CAGrad & 0.8574 & 0.0039 & 0.00239 & 0.3290 & 0.925 \\*
\textbf{GAR} & 0.8706 & 0.0035 & 0.00170 & 0.3916 & 0.925 \\
\addlinespace[2pt]
\multicolumn{6}{@{}l}{\textbf{[QNLI, BoolQ, RTE, WiC, CoLA, SST-2, MRPC]} \textnormal{(7 tasks)}} \\*
Baseline & 0.8408 & 0.0046 & 0.00310 & 0.3333 & 0.875 \\*
CAGrad & 0.8293 & 0.0012 & 0.00484 & 0.3040 & 0.775 \\*
\textbf{GAR} & 0.8543 & 0.0050 & 0.00261 & 0.3378 & 0.925 \\
\addlinespace[2pt]
\multicolumn{6}{@{}l}{\textbf{[QNLI, BoolQ, RTE, PAWS, WiC, CoLA, SST-2, CB]} \textnormal{(8 tasks)}} \\*
Baseline & 0.8610 & 0.0073 & 0.00287 & 0.3120 & 0.850 \\*
CAGrad & 0.8714 & 0.0054 & 0.00254 & 0.2832 & 0.900 \\*
\textbf{GAR} & 0.8736 & 0.0076 & 0.00185 & 0.3347 & 0.975 \\
\bottomrule
\end{tabular}
\end{table}

\begin{table}[H]
\centering
\small
\caption{Paired endpoint differences in the trainable DeBERTa full-parameter FFN setting. The upper block covers the five mixtures (five to eight tasks): within each seed we average the quantity over those five mixtures and then subtract the comparator for the same seed. The lower block repeats the accuracy difference within the five--six- and seven--eight-task groups. Accuracy is in percentage points; LVar, gradient-mass purity, and utilization are in their native units. Brackets are two-sided 95\% Student-$t$ confidence intervals over the five paired seed differences ($n=5$, four degrees of freedom). No multiplicity correction is applied.}
\label{tab:appendix_implc_paired}
\setlength{\tabcolsep}{5.5pt}
\begin{tabular}{@{}lcc@{}}
\toprule
Quantity & GAR $-$ Baseline & GAR $-$ CAGrad \\
\midrule
\multicolumn{3}{@{}l}{\emph{Five mixtures (five to eight tasks)}} \\
Accuracy (pp) & $+1.155$ [$+0.730$, $+1.579$] & $+0.873$ [$+0.562$, $+1.183$] \\
LVar & $-0.00073$ [$-0.00145$, $-0.00001$] & $-0.00130$ [$-0.00205$, $-0.00056$] \\
Gradient-mass purity & $+0.0120$ [$-0.0135$, $+0.0375$] & $+0.0165$ [$+0.0037$, $+0.0294$] \\
Utilization & $+0.0700$ [$+0.0295$, $+0.1105$] & $+0.0900$ [$+0.0381$, $+0.1419$] \\
\midrule
\multicolumn{3}{@{}l}{\emph{Accuracy by task-count bin (pp)}} \\
Five--six tasks & $+1.048$ [$+0.292$, $+1.805$] & $+0.166$ [$-0.512$, $+0.843$] \\
Seven--eight tasks & $+1.226$ [$+0.969$, $+1.482$] & $+1.344$ [$+0.781$, $+1.906$] \\
\bottomrule
\end{tabular}
\end{table}

\subsection{Controlled DeBERTa Single-Task Checks}
\label{appsubsec:controlled_deberta_checks}

Table~\ref{tab:single_task_endpoint_summary} summarizes the DeBERTa single-task
checks; Tables~\ref{tab:appendix_empirical_4} and~\ref{tab:appendix_empirical_5}
give the LoRA and FFN runs referenced from the main text.
All methods share the batch construction and routed forward
computation. CAGrad combines loss gradients from distinct micro-batches of the
sole task, serving as a within-task gradient-combination control; update rules are detailed in
Appendix~\ref{app:hyperparameters_reproducibility}. Purity is $1.0$ in the reported
single-task rows. With one task, each expert with nonzero diagnostic gradient
mass contributes $1$ and a zero-mass expert contributes $0$; the expert mean
equals $1$ when all experts have nonzero mass. These rows evaluate
single-task adaptation.

\noindent
All DeBERTa single-task results in the next two subsections use 3 epochs for
SST-2 and QQP, and 5 epochs for CoLA, MRPC, and RTE.
Within each task, CAGrad combines multiple micro-batch gradients, while
GAR uses the corresponding gradient observations to train the router.

\begin{table}[H]
\centering\small
\caption{DeBERTa single-task E4K2 endpoint summary: frozen-backbone LoRA-FFN versus full-parameter FFN experts with an unfrozen backbone. Each adaptation averages [CoLA, MRPC, RTE, SST-2, QQP]; Mean weights the two adaptations equally. Accuracy excludes MCC/F1. Seed std averages per-dataset population standard deviations; LVar and utilization average endpoint diagnostics. Purity is 1.0 in the reported single-task rows and is omitted.}
\label{tab:single_task_endpoint_summary}
\setlength{\tabcolsep}{6pt}
\begin{tabular}{@{}llrrrr@{}}
\toprule
Adapt. & Method & Acc $\uparrow$ & Seed std $\downarrow$ & LVar $\downarrow$ & Util. $\uparrow$ \\
\midrule
LoRA & Baseline & 0.8783 & 0.0081 & 0.0300 & 0.690 \\
     & CAGrad   & 0.8802 & 0.0110 & 0.0324 & \textbf{0.720} \\
     & GAR     & \textbf{0.8896} & \textbf{0.0065} & \textbf{0.0274} & 0.710 \\
\midrule
FFN  & Baseline & 0.8857 & 0.0090 & 0.0700 & 0.530 \\
     & CAGrad   & 0.8797 & 0.0103 & 0.0847 & 0.460 \\
     & GAR     & \textbf{0.8929} & \textbf{0.0075} & \textbf{0.0653} & \textbf{0.540} \\
\midrule
Mean & Baseline & 0.8820 & 0.0086 & 0.0500 & 0.610 \\
     & CAGrad   & 0.8799 & 0.0107 & 0.0586 & 0.590 \\
     & GAR     & \textbf{0.8913} & \textbf{0.0070} & \textbf{0.0463} & \textbf{0.625} \\
\bottomrule
\end{tabular}
\end{table}

\subsubsection{DeBERTa Single-Task LoRA Results}
\label{appsubsec:deberta_single_task_lora_results}
\begin{table}[H]
\centering
\small
\caption{DeBERTa single-task frozen-backbone LoRA-FFN results with E4K2 routing (SST-2 and QQP use 3 epochs; CoLA, MRPC, and RTE use 5 epochs). CoLA additionally reports MCC, while MRPC and QQP additionally report F1. LVar is the population variance of the normalized final expert-load shares, as defined in Appendix~\ref{app:experimental_protocol}. Purity is gradient-mass purity, which equals 1 with a single task.}
\label{tab:appendix_empirical_4}
\AppendixTable{%
\begin{tabular}{p{0.16\textwidth}lcccccc}
\toprule
Setting & Method & Final val acc $\uparrow$ & Task metric $\uparrow$ & Seed std $\downarrow$ & LVar $\downarrow$ & Purity $\uparrow$ & Utilization $\uparrow$ \\
\midrule
CoLA & Baseline & 0.8619 & 0.6664 (MCC) & 0.0126 & 0.0366 & 1.0000 & 0.6500 \\
 & CAGrad & 0.8656 & 0.6758 (MCC) & 0.0121 & 0.0308 & 1.0000 & 0.7000 \\
 & GAR & 0.8667 & 0.6786 (MCC) & 0.0099 & 0.0323 & 1.0000 & 0.6000 \\
\midrule
MRPC & Baseline & 0.8686 & 0.9042 (F1) & 0.0061 & 0.0412 & 1.0000 & 0.6000 \\
 & CAGrad & 0.8608 & 0.8995 (F1) & 0.0196 & 0.0520 & 1.0000 & 0.5500 \\
 & GAR & 0.8882 & 0.9193 (F1) & 0.0053 & 0.0247 & 1.0000 & 0.7500 \\
\midrule
QQP & Baseline & 0.9080 & 0.8778 (F1) & 0.0018 & 0.0105 & 1.0000 & 0.8500 \\
 & CAGrad & 0.8959 & 0.8590 (F1) & 0.0038 & 0.0133 & 1.0000 & 0.9000 \\
 & GAR & 0.9219 & 0.8964 (F1) & 0.0011 & 0.0152 & 1.0000 & 0.8500 \\
\midrule
RTE & Baseline & 0.8152 & -- & 0.0110 & 0.0377 & 1.0000 & 0.6500 \\
 & CAGrad & 0.8361 & -- & 0.0132 & 0.0295 & 1.0000 & 0.7000 \\
 & GAR & 0.8253 & -- & 0.0122 & 0.0311 & 1.0000 & 0.7000 \\
\midrule
SST-2 & Baseline & 0.9376 & -- & 0.0090 & 0.0242 & 1.0000 & 0.7000 \\
 & CAGrad & 0.9424 & -- & 0.0062 & 0.0362 & 1.0000 & 0.7500 \\
 & GAR & 0.9459 & -- & 0.0039 & 0.0336 & 1.0000 & 0.6500 \\
\bottomrule
\end{tabular}
}
\end{table}

\subsubsection{DeBERTa Single-Task FFN Results}
\label{appsubsec:deberta_single_task_ffn_results}
\begin{table}[H]
\centering
\small
\caption{DeBERTa single-task full-parameter FFN results with an unfrozen backbone and E4K2 routing (SST-2 and QQP use 3 epochs; CoLA, MRPC, and RTE use 5 epochs). Experts occupy the same final-layer insertion sites as in LoRA-FFN, without LoRA factorization. CoLA reports MCC; MRPC and QQP report F1. LVar is the population variance of normalized final expert-load shares, as defined in Appendix~\ref{app:experimental_protocol}. Purity is gradient-mass purity, which equals 1 with a single task.}
\label{tab:appendix_empirical_5}
\AppendixTable{%
\begin{tabular}{p{0.16\textwidth}lcccccc}
\toprule
Setting & Method & Final val acc $\uparrow$ & Task metric $\uparrow$ & Seed std $\downarrow$ & LVar $\downarrow$ & Purity $\uparrow$ & Utilization $\uparrow$ \\
\midrule
CoLA & Baseline & 0.8639 & 0.6711 (MCC) & 0.0114 & 0.0942 & 1.0000 & 0.5000 \\
 & CAGrad & 0.8577 & 0.6554 (MCC) & 0.0080 & 0.1181 & 1.0000 & 0.4000 \\
 & GAR & 0.8658 & 0.6764 (MCC) & 0.0105 & 0.0658 & 1.0000 & 0.5000 \\
\midrule
MRPC & Baseline & 0.8648 & 0.9021 (F1) & 0.0108 & 0.1054 & 1.0000 & 0.3000 \\
 & CAGrad & 0.8775 & 0.9103 (F1) & 0.0103 & 0.0792 & 1.0000 & 0.4000 \\
 & GAR & 0.8946 & 0.9244 (F1) & 0.0069 & 0.0878 & 1.0000 & 0.3500 \\
\midrule
QQP & Baseline & 0.9202 & 0.8950 (F1) & 0.0023 & 0.0156 & 1.0000 & 0.9000 \\
 & CAGrad & 0.8985 & 0.8656 (F1) & 0.0019 & 0.0423 & 1.0000 & 0.6500 \\
 & GAR & 0.9212 & 0.8950 (F1) & 0.0018 & 0.0220 & 1.0000 & 0.8000 \\
\midrule
RTE & Baseline & 0.8368 & -- & 0.0185 & 0.0699 & 1.0000 & 0.4500 \\
 & CAGrad & 0.8354 & -- & 0.0186 & 0.0680 & 1.0000 & 0.4500 \\
 & GAR & 0.8347 & -- & 0.0150 & 0.0828 & 1.0000 & 0.5000 \\
\midrule
SST-2 & Baseline & 0.9427 & -- & 0.0021 & 0.0649 & 1.0000 & 0.5000 \\
 & CAGrad & 0.9294 & -- & 0.0127 & 0.1160 & 1.0000 & 0.4000 \\
 & GAR & 0.9484 & -- & 0.0032 & 0.0681 & 1.0000 & 0.5500 \\
\bottomrule
\end{tabular}
}
\end{table}

\subsubsection{Single-Task Equal-Adaptation Paired Uncertainty}
\label{appsubsec:single_task_paired_uncertainty}

Table~\ref{tab:single_task_paired_uncertainty} reports paired uncertainty
after averaging the two adaptations equally within each seed.

\begin{table}[H]
\centering
\small
\caption{Equal-adaptation paired accuracy uncertainty for the controlled DeBERTa single-task checks. Within each shared seed, we first average validation accuracy over CoLA, MRPC, RTE, SST-2, and QQP separately for LoRA and FFN, then average the two adaptation results equally, and finally subtract the matched comparator from GAR. Brackets are two-sided 95\% Student-$t$ confidence intervals over the five paired seed differences ($n=5$, four degrees of freedom); $p$ values are from the corresponding two-sided paired $t$ tests. No multiplicity correction or significance symbols are used.}
\label{tab:single_task_paired_uncertainty}
\setlength{\tabcolsep}{7pt}
\begin{tabular}{@{}lccc@{}}
\toprule
Comparison & Mean difference (pp) & 95\% paired CI (pp) & Paired $t$ $p$ \\
\midrule
GAR $-$ Baseline & $+0.929$ & $[+0.207,\,+1.651]$ & $0.0234$ \\
GAR $-$ CAGrad & $+1.133$ & $[+0.420,\,+1.847]$ & $0.0116$ \\
\bottomrule
\end{tabular}
\end{table}

\subsection{RoBERTa Fixed-Configuration Coefficient Ablations}
\label{appsubsec:lambda_ablation_results}

Both RoBERTa coefficient sweeps cover the same five E8K4 dataset mixtures listed by their constituent tasks in the following tables. Each setting reports five
coefficients over five shared random seeds at 1,000 updates (125 endpoints
per setting). Each setting and mixture reuses its selected GAR configuration,
except for the shorter budget and the swept
coefficient, as specified in Appendix~\ref{appsubsec:lambda_ablation_protocol}.
Mixture macros average tasks equally; the five-mixture summaries average
mixtures within each seed before computing the mean and population standard
deviation. The two settings are reported separately.

\paragraph{Frozen-backbone LoRA-FFN.}
Table~\ref{tab:appendix_ffn_lambda_mixtures} summarizes mixture accuracy.
Table~\ref{tab:appendix_ffn_lambda_tasks} gives all 33 mixture--task results,
macro accuracies, seed variability, and routing diagnostics.
The five-mixture mean rises from $66.45\%$ at $\lambda=0$ to $67.21\%$
at $10^{-3}$, then falls to $66.74\%$ at $10^{-2}$. All five mixture means
increase at $10^{-3}$ relative to zero, but task responses differ: RTE improves
in all five mixtures, whereas WiC declines in both seven-task mixtures and the eight-task mixture.
[QNLI, BoolQ, RTE, WiC, CoLA, SST-2, MRPC] has its highest mixture mean at $10^{-4}$ rather than $10^{-3}$.

\begin{table}[H]
\centering
\small
\setlength{\tabcolsep}{4.5pt}
\caption{RoBERTa LoRA-FFN coefficient sensitivity on the five E8K4 mixtures at 1,000 updates. Entries are accuracy (\%) as mean $\pm$ population standard deviation over five seeds. Tasks are equally weighted within each mixture. The five-mixture row averages mixtures within each seed before computing the mean and standard deviation. Fixed configurations are in Table~\ref{tab:roberta_lambda_configs}.}
\label{tab:appendix_ffn_lambda_mixtures}
\begin{tabular}{@{}p{0.27\textwidth}ccccc@{}}
\toprule
Mixture & $\lambda=0$ & $10^{-5}$ & $10^{-4}$ & $10^{-3}$ & $10^{-2}$ \\
\midrule
{}[QNLI, BoolQ, RTE, PAWS, WiC] & $60.63\pm1.18$ & $61.53\pm0.74$ & $61.84\pm1.07$ & $62.18\pm0.62$ & $62.08\pm0.67$ \\
{}[QNLI, BoolQ, RTE, PAWS, ANLI, CB] & $64.54\pm0.58$ & $64.87\pm0.59$ & $65.22\pm0.73$ & $65.38\pm0.69$ & $64.89\pm0.69$ \\
{}[QNLI, BoolQ, RTE, PAWS, WiC, CoLA, \mbox{SST-2}] & $66.92\pm0.70$ & $67.17\pm0.50$ & $67.53\pm0.43$ & $67.63\pm0.71$ & $66.92\pm0.61$ \\
{}[QNLI, BoolQ, RTE, WiC, CoLA, \mbox{SST-2}, MRPC] & $68.86\pm0.58$ & $68.90\pm0.55$ & $69.34\pm0.54$ & $69.23\pm0.43$ & $68.84\pm0.78$ \\
{}[QNLI, BoolQ, RTE, PAWS, WiC, CoLA, \mbox{SST-2}, CB] & $71.29\pm0.50$ & $71.56\pm0.65$ & $71.42\pm0.45$ & $71.62\pm0.45$ & $70.97\pm0.88$ \\
\midrule
Five-mixture mean & $66.45\pm0.37$ & $66.81\pm0.37$ & $67.07\pm0.26$ & $67.21\pm0.36$ & $66.74\pm0.17$ \\
\bottomrule
\end{tabular}
\end{table}

\begingroup
\small
\setcounter{LTchunksize}{200}
\setlength{\tabcolsep}{4.5pt}
\begin{longtable}{@{}lrrrrr@{}}
\caption{Per-task results and routing diagnostics for RoBERTa LoRA-FFN coefficient sensitivity on the five E8K4 mixtures. Task and mixture macro accuracies are five-seed means in percent; seed std is the population standard deviation of mixture macro accuracy in percentage points. LVar, gradient-mass purity, and utilization are five-seed means at 1,000 updates on their original scales. All 33 mixture--task combinations and five coefficients are reported. Mixture definitions and fixed configurations are in Table~\ref{tab:roberta_lambda_configs}.}
\label{tab:appendix_ffn_lambda_tasks}\\
\toprule
Task / metric & $\lambda=0$ & $10^{-5}$ & $10^{-4}$ & $10^{-3}$ & $10^{-2}$ \\
\midrule
\endfirsthead
\caption[]{RoBERTa LoRA-FFN coefficient sensitivity: per-task results and diagnostics (continued).}\\
\toprule
Task / metric & $\lambda=0$ & $10^{-5}$ & $10^{-4}$ & $10^{-3}$ & $10^{-2}$ \\
\midrule
\endhead
\bottomrule
\endfoot
\multicolumn{6}{@{}l}{\textbf{[QNLI, BoolQ, RTE, PAWS, WiC]} \textnormal{(5 tasks)}} \\*
QNLI & 77.04 & 78.54 & 79.02 & 79.84 & 78.89 \\*
BoolQ & 61.36 & 61.85 & 62.13 & 61.90 & 61.41 \\*
RTE & 54.15 & 55.74 & 55.88 & 57.18 & 56.61 \\*
PAWS & 56.36 & 56.83 & 57.00 & 57.26 & 59.14 \\*
WiC & 54.26 & 54.67 & 55.14 & 54.73 & 54.36 \\*
\cmidrule(lr){1-6}
\emph{Macro} & 60.63 & 61.53 & 61.84 & 62.18 & 62.08 \\*
Seed std (pp) & 1.18 & 0.74 & 1.07 & 0.62 & 0.67 \\*
LVar & 0.03960 & 0.03644 & 0.03657 & 0.01654 & 0.01727 \\*
Purity & 0.5205 & 0.4770 & 0.5002 & 0.5329 & 0.6185 \\*
Util. & 0.425 & 0.500 & 0.500 & 0.600 & 0.475 \\
\midrule
\multicolumn{6}{@{}l}{\textbf{[QNLI, BoolQ, RTE, PAWS, ANLI, CB]} \textnormal{(6 tasks)}} \\*
QNLI & 80.30 & 80.07 & 80.56 & 81.30 & 80.19 \\*
BoolQ & 64.43 & 64.28 & 64.35 & 64.54 & 64.03 \\*
RTE & 62.31 & 63.10 & 65.13 & 65.05 & 63.47 \\*
PAWS & 59.55 & 59.45 & 60.32 & 60.14 & 60.79 \\*
ANLI & 35.28 & 35.56 & 35.61 & 35.53 & 35.52 \\*
CB & 85.36 & 86.79 & 85.36 & 85.71 & 85.36 \\*
\cmidrule(lr){1-6}
\emph{Macro} & 64.54 & 64.87 & 65.22 & 65.38 & 64.89 \\*
Seed std (pp) & 0.58 & 0.59 & 0.73 & 0.69 & 0.69 \\*
LVar & 0.01979 & 0.01770 & 0.01823 & 0.01594 & 0.01526 \\*
Purity & 0.4962 & 0.5048 & 0.4583 & 0.4989 & 0.5317 \\*
Util. & 0.550 & 0.525 & 0.525 & 0.600 & 0.725 \\
\midrule
\multicolumn{6}{@{}l}{\textbf{[QNLI, BoolQ, RTE, PAWS, WiC, CoLA, SST-2]} \textnormal{(7 tasks)}} \\*
QNLI & 76.15 & 76.53 & 76.53 & 77.42 & 74.06 \\*
BoolQ & 60.75 & 61.39 & 60.83 & 60.82 & 61.33 \\*
RTE & 56.03 & 57.55 & 58.70 & 59.49 & 57.83 \\*
PAWS & 57.05 & 57.20 & 57.43 & 57.13 & 55.87 \\*
WiC & 54.36 & 53.57 & 54.23 & 54.14 & 53.86 \\*
CoLA & 76.64 & 76.18 & 76.78 & 76.11 & 77.53 \\*
SST-2 & 87.43 & 87.78 & 88.19 & 88.30 & 87.98 \\*
\cmidrule(lr){1-6}
\emph{Macro} & 66.92 & 67.17 & 67.53 & 67.63 & 66.92 \\*
Seed std (pp) & 0.70 & 0.50 & 0.43 & 0.71 & 0.61 \\*
LVar & 0.06611 & 0.05918 & 0.06653 & 0.05193 & 0.05728 \\*
Purity & 0.5075 & 0.5161 & 0.5001 & 0.5311 & 0.7129 \\*
Util. & 0.250 & 0.275 & 0.275 & 0.325 & 0.225 \\
\midrule
\multicolumn{6}{@{}l}{\textbf{[QNLI, BoolQ, RTE, WiC, CoLA, SST-2, MRPC]} \textnormal{(7 tasks)}} \\*
QNLI & 75.11 & 75.48 & 75.62 & 75.79 & 74.05 \\*
BoolQ & 61.00 & 60.89 & 61.49 & 61.09 & 61.25 \\*
RTE & 56.53 & 55.96 & 56.97 & 58.34 & 54.22 \\*
WiC & 53.01 & 52.79 & 53.54 & 52.41 & 54.45 \\*
CoLA & 76.72 & 76.68 & 76.93 & 76.01 & 76.64 \\*
SST-2 & 86.58 & 86.88 & 87.27 & 87.64 & 86.93 \\*
MRPC & 73.04 & 73.63 & 73.53 & 73.33 & 74.31 \\*
\cmidrule(lr){1-6}
\emph{Macro} & 68.86 & 68.90 & 69.34 & 69.23 & 68.84 \\*
Seed std (pp) & 0.58 & 0.55 & 0.54 & 0.43 & 0.78 \\*
LVar & 0.02453 & 0.02193 & 0.02358 & 0.01018 & 0.01376 \\*
Purity & 0.4032 & 0.4478 & 0.4037 & 0.4239 & 0.6630 \\*
Util. & 0.525 & 0.475 & 0.500 & 0.725 & 0.750 \\
\midrule
\multicolumn{6}{@{}l}{\textbf{[QNLI, BoolQ, RTE, PAWS, WiC, CoLA, SST-2, CB]} \textnormal{(8 tasks)}} \\*
QNLI & 79.69 & 79.60 & 79.87 & 79.79 & 77.77 \\*
BoolQ & 63.86 & 64.21 & 64.22 & 63.71 & 62.50 \\*
RTE & 59.93 & 61.95 & 60.72 & 63.97 & 63.54 \\*
PAWS & 59.26 & 59.54 & 58.94 & 59.06 & 58.98 \\*
WiC & 56.93 & 56.58 & 56.74 & 56.24 & 56.87 \\*
CoLA & 77.56 & 77.56 & 77.43 & 77.79 & 77.14 \\*
SST-2 & 89.17 & 89.43 & 89.52 & 89.20 & 88.44 \\*
CB & 83.93 & 83.57 & 83.93 & 83.21 & 82.50 \\*
\cmidrule(lr){1-6}
\emph{Macro} & 71.29 & 71.56 & 71.42 & 71.62 & 70.97 \\*
Seed std (pp) & 0.50 & 0.65 & 0.45 & 0.45 & 0.88 \\*
LVar & 0.01455 & 0.01361 & 0.01218 & 0.00917 & 0.01732 \\*
Purity & 0.4211 & 0.4448 & 0.4391 & 0.4562 & 0.5694 \\*
Util. & 0.525 & 0.575 & 0.600 & 0.750 & 0.525 \\
\end{longtable}
\endgroup

\paragraph{Trainable classification-head LoRA.}
Tables~\ref{tab:appendix_classifier_lambda_mixtures}
and~\ref{tab:appendix_classifier_lambda_tasks} give the corresponding five
mixture summaries and all 33 mixture--task combinations. The five-mixture
mean is highest at $10^{-3}$ among the tested coefficients ($79.87\%$,
versus $78.97\%$ at zero). [QNLI, BoolQ, RTE, PAWS, WiC] at $\lambda=0$ has one low-accuracy seed
(about $73.02\%$), producing a population standard deviation of $2.16$
percentage points.

\begin{table}[H]
\centering
\small
\setlength{\tabcolsep}{4.5pt}
\caption{RoBERTa classification-head LoRA coefficient sensitivity on the five E8K4 mixtures at 1,000 updates. Entries are accuracy (\%) as mean $\pm$ population standard deviation over five seeds. Tasks are equally weighted within each mixture. The five-mixture row averages mixtures within each seed before computing the mean and standard deviation. Fixed configurations are in Table~\ref{tab:roberta_classifier_lambda_configs}.}
\label{tab:appendix_classifier_lambda_mixtures}
\begin{tabular}{@{}p{0.27\textwidth}ccccc@{}}
\toprule
Mixture & $\lambda=0$ & $10^{-5}$ & $10^{-4}$ & $10^{-3}$ & $10^{-2}$ \\
\midrule
{}[QNLI, BoolQ, RTE, PAWS, WiC] & $76.86\pm2.16$ & $78.03\pm1.35$ & $77.87\pm1.36$ & $78.91\pm0.92$ & $77.64\pm1.46$ \\
{}[QNLI, BoolQ, RTE, PAWS, ANLI, CB] & $76.39\pm1.13$ & $76.31\pm0.88$ & $76.77\pm0.63$ & $77.14\pm0.81$ & $77.05\pm1.03$ \\
{}[QNLI, BoolQ, RTE, PAWS, WiC, CoLA, \mbox{SST-2}] & $80.73\pm0.60$ & $80.82\pm0.55$ & $81.20\pm0.83$ & $81.01\pm0.90$ & $81.11\pm0.73$ \\
{}[QNLI, BoolQ, RTE, WiC, CoLA, \mbox{SST-2}, MRPC] & $80.11\pm0.14$ & $80.31\pm0.40$ & $80.50\pm0.36$ & $80.90\pm0.10$ & $80.66\pm0.36$ \\
{}[QNLI, BoolQ, RTE, PAWS, WiC, CoLA, \mbox{SST-2}, CB] & $80.76\pm0.77$ & $80.38\pm0.76$ & $80.96\pm0.61$ & $81.37\pm0.94$ & $80.84\pm0.92$ \\
\midrule
Five-mixture mean & $78.97\pm0.78$ & $79.17\pm0.57$ & $79.46\pm0.52$ & $79.87\pm0.58$ & $79.46\pm0.70$ \\
\bottomrule
\end{tabular}
\end{table}

\begingroup
\small
\setcounter{LTchunksize}{200}
\setlength{\tabcolsep}{4.5pt}
\begin{longtable}{@{}lrrrrr@{}}
\caption{Per-task results and routing diagnostics for RoBERTa classification-head LoRA coefficient sensitivity on the five E8K4 mixtures. Task and mixture macro accuracies are five-seed means in percent; seed std is the population standard deviation of mixture macro accuracy in percentage points. LVar, gradient-mass purity, and utilization are five-seed means at 1,000 updates on their original scales. All 33 mixture--task combinations and five coefficients are reported. Mixture definitions and fixed configurations are in Table~\ref{tab:roberta_classifier_lambda_configs}.}
\label{tab:appendix_classifier_lambda_tasks}\\
\toprule
Task / metric & $\lambda=0$ & $10^{-5}$ & $10^{-4}$ & $10^{-3}$ & $10^{-2}$ \\
\midrule
\endfirsthead
\caption[]{RoBERTa classification-head LoRA coefficient sensitivity: per-task results and diagnostics (continued).}\\
\toprule
Task / metric & $\lambda=0$ & $10^{-5}$ & $10^{-4}$ & $10^{-3}$ & $10^{-2}$ \\
\midrule
\endhead
\bottomrule
\endfoot
\multicolumn{6}{@{}l}{\textbf{[QNLI, BoolQ, RTE, PAWS, WiC]} \textnormal{(5 tasks)}} \\*
QNLI & 86.12 & 87.99 & 88.10 & 88.15 & 87.93 \\*
BoolQ & 72.06 & 72.51 & 71.91 & 73.02 & 71.86 \\*
RTE & 71.05 & 72.71 & 72.85 & 73.57 & 72.64 \\*
PAWS & 91.05 & 91.34 & 91.42 & 91.75 & 90.53 \\*
WiC & 64.04 & 65.61 & 65.05 & 68.06 & 65.27 \\*
\cmidrule(lr){1-6}
\emph{Macro} & 76.86 & 78.03 & 77.87 & 78.91 & 77.64 \\*
Seed std (pp) & 2.16 & 1.35 & 1.36 & 0.92 & 1.46 \\*
LVar & 0.00959 & 0.00781 & 0.00862 & 0.00573 & 0.00118 \\*
Purity & 0.3862 & 0.4508 & 0.5118 & 0.4990 & 0.4627 \\*
Util. & 0.700 & 0.700 & 0.675 & 0.750 & 0.975 \\
\midrule
\multicolumn{6}{@{}l}{\textbf{[QNLI, BoolQ, RTE, PAWS, ANLI, CB]} \textnormal{(6 tasks)}} \\*
QNLI & 88.27 & 88.08 & 88.45 & 88.56 & 87.81 \\*
BoolQ & 73.43 & 73.76 & 73.43 & 74.57 & 75.19 \\*
RTE & 74.80 & 75.02 & 75.60 & 75.38 & 75.60 \\*
PAWS & 91.11 & 91.40 & 91.44 & 91.12 & 91.97 \\*
ANLI & 41.43 & 41.40 & 41.72 & 42.11 & 42.46 \\*
CB & 89.29 & 88.21 & 90.00 & 91.07 & 89.29 \\*
\cmidrule(lr){1-6}
\emph{Macro} & 76.39 & 76.31 & 76.77 & 77.14 & 77.05 \\*
Seed std (pp) & 1.13 & 0.88 & 0.63 & 0.81 & 1.03 \\*
LVar & 0.01109 & 0.00839 & 0.00716 & 0.00443 & 0.00340 \\*
Purity & 0.3858 & 0.4441 & 0.4630 & 0.4918 & 0.5920 \\*
Util. & 0.525 & 0.675 & 0.725 & 0.825 & 0.850 \\
\midrule
\multicolumn{6}{@{}l}{\textbf{[QNLI, BoolQ, RTE, PAWS, WiC, CoLA, SST-2]} \textnormal{(7 tasks)}} \\*
QNLI & 87.81 & 87.87 & 88.43 & 88.04 & 87.53 \\*
BoolQ & 72.90 & 73.36 & 73.50 & 73.18 & 73.46 \\*
RTE & 75.31 & 74.58 & 76.46 & 76.53 & 77.47 \\*
PAWS & 90.20 & 90.42 & 90.23 & 90.29 & 90.21 \\*
WiC & 64.76 & 65.14 & 64.80 & 64.64 & 64.17 \\*
CoLA & 81.82 & 82.09 & 82.09 & 81.69 & 82.07 \\*
SST-2 & 92.29 & 92.27 & 92.87 & 92.71 & 92.84 \\*
\cmidrule(lr){1-6}
\emph{Macro} & 80.73 & 80.82 & 81.20 & 81.01 & 81.11 \\*
Seed std (pp) & 0.60 & 0.55 & 0.83 & 0.90 & 0.73 \\*
LVar & 0.00720 & 0.00675 & 0.00687 & 0.00439 & 0.00173 \\*
Purity & 0.3989 & 0.4507 & 0.3797 & 0.3803 & 0.3999 \\*
Util. & 0.725 & 0.750 & 0.725 & 0.800 & 0.925 \\
\midrule
\multicolumn{6}{@{}l}{\textbf{[QNLI, BoolQ, RTE, WiC, CoLA, SST-2, MRPC]} \textnormal{(7 tasks)}} \\*
QNLI & 87.33 & 87.37 & 88.05 & 87.80 & 87.22 \\*
BoolQ & 72.70 & 72.71 & 72.04 & 72.75 & 72.64 \\*
RTE & 74.51 & 75.96 & 75.88 & 76.68 & 75.96 \\*
WiC & 65.74 & 65.45 & 65.08 & 66.55 & 66.18 \\*
CoLA & 81.32 & 80.96 & 81.67 & 81.76 & 81.84 \\*
SST-2 & 92.41 & 92.78 & 93.03 & 93.49 & 93.33 \\*
MRPC & 86.76 & 86.96 & 87.75 & 87.30 & 87.45 \\*
\cmidrule(lr){1-6}
\emph{Macro} & 80.11 & 80.31 & 80.50 & 80.90 & 80.66 \\*
Seed std (pp) & 0.14 & 0.40 & 0.36 & 0.10 & 0.36 \\*
LVar & 0.00674 & 0.00668 & 0.00500 & 0.00323 & 0.00285 \\*
Purity & 0.3936 & 0.3700 & 0.3329 & 0.3544 & 0.4134 \\*
Util. & 0.675 & 0.750 & 0.775 & 0.850 & 0.850 \\
\midrule
\multicolumn{6}{@{}l}{\textbf{[QNLI, BoolQ, RTE, PAWS, WiC, CoLA, SST-2, CB]} \textnormal{(8 tasks)}} \\*
QNLI & 87.28 & 87.61 & 87.72 & 87.59 & 87.41 \\*
BoolQ & 70.34 & 69.46 & 70.25 & 72.01 & 70.51 \\*
RTE & 74.01 & 74.44 & 75.67 & 75.09 & 74.73 \\*
PAWS & 89.49 & 88.50 & 89.22 & 89.33 & 89.14 \\*
WiC & 64.42 & 64.42 & 64.26 & 65.58 & 64.08 \\*
CoLA & 80.75 & 80.35 & 80.63 & 80.63 & 80.23 \\*
SST-2 & 92.27 & 92.16 & 92.41 & 92.87 & 92.73 \\*
CB & 87.50 & 86.07 & 87.50 & 87.86 & 87.86 \\*
\cmidrule(lr){1-6}
\emph{Macro} & 80.76 & 80.38 & 80.96 & 81.37 & 80.84 \\*
Seed std (pp) & 0.77 & 0.76 & 0.61 & 0.94 & 0.92 \\*
LVar & 0.00714 & 0.00763 & 0.00665 & 0.00342 & 0.00166 \\*
Purity & 0.4538 & 0.4586 & 0.4111 & 0.4235 & 0.5039 \\*
Util. & 0.725 & 0.700 & 0.700 & 0.850 & 0.925 \\
\end{longtable}
\endgroup

The classification-head denominator ablation compares $\lambda=0$, the
load-normalized objective at $\lambda=10^{-3}$, and a numerator-only objective at
$\lambda=10^{-3}$. The first two are the coefficient-sweep entries above;
Table~\ref{tab:appendix_classifier_loadnorm_ablation} reports the numerator-only objective. Averaging the five
mixtures within each seed, the numerator-only objective reaches $79.39\%$: its paired
difference from $\lambda=0$ is $+0.42$ $[-0.40,+1.25]$ points, and the
load-normalized objective exceeds it by $+0.47$ $[+0.08,+0.87]$ points
($n=5$, four degrees of freedom).

\begin{table}[H]
\centering
\small
\setlength{\tabcolsep}{4.2pt}
\caption{Numerator-only RoBERTa classification-head LoRA results on the five E8K4 mixtures at 1,000 updates. Each row inherits the corresponding fixed $\lambda=10^{-3}$ configuration in Table~\ref{tab:roberta_classifier_lambda_configs} and removes only the load denominator. Accuracy is in percent and its seed standard deviation in percentage points; Purity is gradient-mass purity; routing diagnostics are five-run means on their original scales. The $\lambda=0$ and load-normalized results appear in Table~\ref{tab:appendix_classifier_lambda_tasks}.}
\label{tab:appendix_classifier_loadnorm_ablation}
\begin{tabular}{@{}p{0.43\textwidth}rrrrr@{}}
\toprule
Mixture & Acc $\uparrow$ & Seed std $\downarrow$ & LVar $\downarrow$ & Purity $\uparrow$ & Util. $\uparrow$ \\
\midrule
{}[QNLI, BoolQ, RTE, PAWS, WiC] & 78.98 & 1.07 & 0.00561 & 0.5020 & 0.750 \\
{}[QNLI, BoolQ, RTE, PAWS, ANLI, CB] & 76.95 & 0.95 & 0.00615 & 0.5586 & 0.775 \\
{}[QNLI, BoolQ, RTE, PAWS, WiC, CoLA, \mbox{SST-2}] & 80.46 & 0.71 & 0.00605 & 0.3964 & 0.700 \\
{}[QNLI, BoolQ, RTE, WiC, CoLA, \mbox{SST-2}, MRPC] & 80.17 & 0.56 & 0.00692 & 0.3761 & 0.750 \\
{}[QNLI, BoolQ, RTE, PAWS, WiC, CoLA, \mbox{SST-2}, CB] & 80.41 & 0.42 & 0.00602 & 0.6280 & 0.725 \\
\bottomrule
\end{tabular}
\end{table}

For a paired comparison of $10^{-3}$ against zero, we first
average the five mixtures within each seed and then compute the five matched
seed differences. The LoRA-FFN gain is $+0.76$ percentage points with a
two-sided 95\% Student-$t$ interval $[-0.02,+1.54]$; the classification-head
gain is $+0.90$ with interval $[+0.43,+1.37]$ ($n=5$, four degrees of freedom,
sample standard deviations for the intervals). The LoRA-FFN interval includes zero; the classification-head interval is positive.
These intervals are unadjusted for the coefficient sweep.

The ablations characterize the coefficient response of the complete
auxiliary training rule at fixed configurations and a 1,000-update budget.

\section{Training Flow Schematic}
\label{app:training_flow}

\begin{figure}[H]
\centering
\begin{tikzpicture}[
scale=0.7, transform shape,
node distance=6mm and 15mm,
box/.style={draw, rounded corners, align=center, minimum width=3.5cm, minimum height=8mm},
smallbox/.style={draw, rounded corners, align=center, minimum width=3cm, minimum height=7mm},
arrow/.style={-Stealth, thick}
]

\node[box] (data1) {Multi-task Dataloaders};
\node[box, below=of data1] (sample1) {Draw Matched Task Batches};
\node[box, below=of sample1] (batch1) {Mixed Training Batch};
\node[box, below=of batch1] (router1) {Per-Unit Router Probabilities};
\node[box, below=of router1] (experts1) {Configured Expert Dispatch};
\node[box, below=of experts1] (task1) {Weighted Task Loss $\mathcal{L}_{\text{task}}$};

\node[smallbox, right=35mm of batch1] (grad1) {Detached Template Gradients $\tilde g_m$\\and Routing Inputs $\sg(h)$};
\node[smallbox, below=of grad1] (summary1) {Recompute Auxiliary $p_m$\\by Group Averaging};
\node[smallbox, below=of summary1] (agg1) {$G_k = \sum_m p_{mk}\tilde g_m$};
\node[smallbox, below=of agg1] (align1) {$\mathcal{L}_{\text{norm}}$ \\ (Direct gradient: router)};

\node[box, below=15mm of task1] (update1) {Shared Global Clip + Optimizer Step \\
Direct Non-router Gradient: $\nabla_{\theta_{\text{task}}}\mathcal{L}_{\text{task}}$ \\
Router Gradient: $\nabla_{\theta_r}(\mathcal{L}_{\text{task}} + \lambda\mathcal{L}_{\text{norm}})$};

\draw[arrow] (data1) -- (sample1);
\draw[arrow] (sample1) -- (batch1);
\draw[arrow] (batch1) -- (router1);
\draw[arrow] (router1) -- (experts1);
\draw[arrow] (experts1) -- (task1);
\draw[arrow] (task1) -- (update1);

\draw[arrow] (batch1.east) -- (grad1.west);
\draw[arrow] (grad1) -- (summary1);
\draw[arrow] (summary1) -- (agg1);
\draw[arrow] (agg1) -- (align1);
\draw[arrow] (align1.west) |- (update1.east);

\node[above=3mm of data1] {\textbf{GAR: Gradient-Aligned Routing}};

\node[box, right=85mm of data1] (data2) {Multi-task Dataloaders};
\node[box, below=of data2] (sample2) {Draw Matched Task Batches};
\node[box, below=of sample2] (batch2) {Mixed Training Batch};
\node[box, below=of batch2] (router2) {Per-Unit Router Probabilities};
\node[box, below=of router2] (experts2) {Configured Expert Dispatch};
\node[box, below=of experts2] (loss2) {Weighted Task Loss $\mathcal{L}_{\text{task}}$};
\node[box, below=of loss2] (update2) {Shared Global Clip + Optimizer Step};

\draw[arrow] (data2) -- (sample2);
\draw[arrow] (sample2) -- (batch2);
\draw[arrow] (batch2) -- (router2);
\draw[arrow] (router2) -- (experts2);
\draw[arrow] (experts2) -- (loss2);
\draw[arrow] (loss2) -- (update2);

\node[above=3mm of data2] {\textbf{Baseline: Standard MoE Routing}};

\end{tikzpicture}

\caption{
Full training-flow comparison between standard MoE routing and
gradient-aligned routing (GAR). The auxiliary branch detaches gradient observations
and routing inputs, then recomputes group probabilities using the group-mean
construction in Algorithm~\ref{alg:training}. Gradients shown at the optimizer
node specify the direct pre-clipping gradient inputs; task parameters include
all trainable non-router parameters. The main settings dispatch with top-4 routing over eight experts;
the top-1 extensions use the straight-through rule of
Appendix~\ref{appsubsec:top1_roberta_protocol}.
}
\label{fig:training_flow}
\end{figure}

\clearpage
\subsection{Training Step}

\begin{algorithm}[H]
\caption{Gradient-space partitioning training with group observations}
\label{alg:training}
\footnotesize
\begin{algorithmic}[1]
\STATE Split each task batch into same-task example groups using the configured group size shared by all methods; the final group may be shorter. Index these routed units by $m$.
\STATE Run the ordinary forward pass with the configured per-unit gating and evaluate $\mathcal{L}_{\text{task}}$ with equal task weights and within-task sample weights (Appendix~\ref{appsubsec:training_semantics}). Retain routing inputs $h$ and padding masks for the auxiliary branch.
\STATE From the same training batch and task loss as Baseline, sum corresponding expert-gradient entries in a common parameter template to form $g_m=\sum_e\nabla_{\theta_e}\ell_m$, and detach: $\tilde g_m=\sg(g_m)$.
\STATE Recompute differentiable router probabilities from $\sg(h)$ using the configured routing rule. For token-routed FFN experts, average the configured token probabilities over each example's non-padding tokens to obtain $q_n$; for sequence-routed classification-head experts, $q_n$ is the configured per-example probability vector.
\STATE Form $p_m=|m|^{-1}\sum_{n\in m}q_n$, with no additional top-$k$ truncation after group averaging. Thus $\sum_k p_{mk}=1$.
\STATE Form surrogate aggregates $G_k=\sum_m p_{mk}\tilde g_m$.
\STATE Compute $\mathcal{L}_{\text{norm}}=-\sum_k \|G_k\|^2/(d_k(P)+\epsilon)$, where $d_k(P)=\sum_m p_{mk}$.
\STATE Form the pre-clipping gradients: all trainable non-router parameters receive gradients only from $\mathcal{L}_{\text{task}}$; router parameters receive gradients from $\mathcal{L}_{\text{task}}+\lambda\mathcal{L}_{\text{norm}}$.
\STATE Apply one global-norm clip jointly to all trainable gradients, then take the optimizer step.
\end{algorithmic}
\end{algorithm}

\subsection{Route-Aligned Observation Construction}
\label{app:route_aligned_observations}

The partition criterion accepts paired gradient and gate observations at any
granularity. Besides the same-task groups used in all reported experiments, it
applies to individual examples or tokens as described below; no reported
experiment uses these finer units.

\paragraph{Classification-head observations.}
For example $i$, let $\ell_i$ be its classification loss and let
$\tilde g_i=\sg(\sum_e\nabla_{\theta_e}\ell_i)$ sum corresponding
expert-gradient entries in the common template. The matching row $p_i$ is the example's
configured routing distribution, recomputed from its detached pooled routing
input. All examples in the optimizer update contribute one observation each.
Thus $M$ is the number of examples and group averaging is replaced by the
individual gate rows.

\paragraph{FFN observations.}
For valid token $t$ in example $i$, we use its local contribution to the
common-template gradient of $\ell_i$, summing corresponding parameter
contributions across experts. Each token row is paired with its
configured token gate $p_{it}$. For the position-wise expert layers, summing
these contributions over tokens recovers the example's common-template
gradient. Padding positions are excluded. Token observations can be subsampled
uniformly without replacement across the optimizer update, with the same
indices selecting the gradient factors and gate rows.

\paragraph{Exact factorized Gram computation.}
For expert $e$ and local linear-block slot $b$, let $x_{meb}$ be the actual
input and $\delta_{meb}$ the output derivative with respect to the example
loss. The template weight-gradient slot is
$v_{mb}=\sum_e\delta_{meb}x_{meb}^{\top}$. Its Gram expansion gives
\[
W_{ij}=\sum_b\sum_{e,f}
\langle\delta_{ieb},\delta_{jfb}\rangle\langle x_{ieb},x_{jfb}\rangle
+\sum_{b\,\text{with trainable bias}}\sum_{e,f}
\langle\delta_{ieb},\delta_{jfb}\rangle.
\]
The sums include all expert pairs within each corresponding template slot.
Repeated applications of a physical block contribute their cross-use terms.
Inputs retain the actual dropout realization; backward factors include
expert scales and gates. Factors are collected per example rather than for the
micro-batch mean loss. This evaluates the common-template Gram without
materializing an observation-by-parameter gradient matrix.

For LoRA expert $e$, write $y_{me}=s_eB_eA_ex_{me}$,
$z_{me}=A_ex_{me}$, and $\delta_{me}=\partial\ell_m/\partial y_{me}$.
The separate $A$ and $B$ template slots give
\begin{align*}
W^{\rm LoRA}_{ij}=\sum_{e,f}s_es_f\big[&
\langle B_e^{\top}\delta_{ie},B_f^{\top}\delta_{jf}\rangle
\langle x_{ie},x_{jf}\rangle\\
&+\langle\delta_{ie},\delta_{jf}\rangle
\langle z_{ie},z_{jf}\rangle\big].
\end{align*}
Each $x_{me}$ includes that expert's input dropout. This is the Gram of the
summed template gradients defined in Section~\ref{Sec:Euc}.

The collected Gram matrix is detached. Differentiable per-unit gate rows
then define the same objective,
\[
\mathcal L_{\rm norm}(P;W)
=-\sum_k\frac{[P^\top W P]_{kk}}{\sum_m p_{mk}+\epsilon}.
\]
Recomputing gates from detached pre-router inputs, after any input
normalization, directs its auxiliary
gradient to the router. Task-gradient accumulation, equal-task weighting,
global clipping, and the optimizer step retain the training semantics of
Appendix~\ref{appsubsec:training_semantics}.

\end{document}